\newtheorem{theorem}{Theorem}
\newtheorem{lemma}[theorem]{Lemma}
\newtheorem{corollary}[theorem]{Corollary}
\theoremstyle{definition}
\newtheorem{definition}{Definition}
\newtheorem*{remark}{Remark}
\newcommand{\CR}[1]{{#1}}
\title{An efficient algorithm for contextual bandits with knapsacks, and an extension to concave objectives}
\author{
Shipra Agrawal\\
Columbia University\\
\texttt{shipra@ieor.columbia.edu}
\and
Nikhil R. Devanur\\
Microsoft Research\\
\texttt{nikdev@microsoft.com}
\and
Lihong Li\\
Microsoft Research\\
\texttt{lihongli@microsoft.com}
}
\newcommand{\cost}{\cv}
\newcommand{\OPT}{\text{OPT}}
\newcommand{\cv}{{\bf v}}
\newcommand{\cvA}{{\bf V}}
\newcommand{\rA}{{R}}
\newcommand{\cvE}{{\hat{\cv}}}
\newcommand{\rE}{{\hat{r}}}
\newcommand{\cvAE}{\hat{\bf V}}
\newcommand{\rAE}{\hat{R}}
\newcommand{\Ex}{{\mathbb E}}
\newcommand{\areg}{\text{avg-regret}}
\newcommand{\regret}{\text{regret}\xspace}
\newcommand{\aregO}{\text{avg-regret}^1}
\newcommand{\aregD}{\text{avg-regret}^2}
\newcommand{\reg}{\text{Reg}}
\newcommand{\regE}{\widehat{\text{Reg}}}
\newcommand{\CBwK}{\text{CBwK}\xspace}
\newcommand{\Cbudget}{\text{CBwK}\xspace}
\newcommand{\CBwR}{\text{CBwR}\xspace}
\newcommand{\di}{d}
\newcommand{\dis}{\phi}
\newcommand{\Var}{\text{Var}}
\newcommand{\hVar}{\hat{\Var}}
\newcommand{\maxVar}{{\cal V}}
\newcommand{\comment}[1]{}
\newcommand{\oneNorm}{\|{\bf{1}}_\di\|}
\newcommand{\defmu}[1]{\min\{ \frac{1}{2K}, \defmuSecond{#1}\}}
\newcommand{\defmuSecond}[1]{\sqrt{\frac{d_{\tau_#1}}{K{\tau_#1}}}}
\newcommand{\defdt}[1]{\ln(16{#1}^2|\Pi|(d+1)/\delta)}
\newcommand{\defTinit}{{\frac{12K T}{B} \ln\frac {d |\Pi|}{\delta}}}
\newcommand{\defBinit}{T_0 - c\sqrt{KT \ln(T|\Pi|/\delta)}}
\newcommand{\zerovec}{\mathbf{0}}
\newcommand\SAMPLE{\ensuremath{\mathsf{Sample}}}
\newcommand{\SPi}{{\cal C}(\Pi)}
\newcommand{\SSPi}{{\cal C}_0(\Pi)}
\newcommand{\defeq}{:=}
\newcommand{\defref}[1]{Definition~\ref{#1}}
\newcommand{\eqnref}[1]{Equation~\eqref{#1}}
\newcommand{\thmref}[1]{Theorem~\ref{#1}}
\newcommand{\lemref}[1]{Lemma~\ref{#1}}
\newcommand{\algref}[1]{Algorithm~\ref{#1}}
\newcommand{\secref}[1]{Section~\ref{#1}}
\newcommand{\appref}[1]{Appendix~\ref{#1}}
\newcommand{\GoodEvent}{{\cal E}}
\newcommand{\1}[1]{\mathbb{I}\left\{#1\right\}}
\newcommand{\ones}{{\bf 1}}
\newcommand{\Qinit}{Q_\mathrm{init}}
\newcommand{\Zg}{Z}
\newcommand{\oneNormNew}{\|{\bf 1}_d\|}
\newcommand{\eg}{\textit{e.g.}}
\newcommand{\ie}{\textit{i.e.}}
\newcommand{\iid}{\textit{i.i.d.}}
\newcommand{\newS}[1]{#1}
\newcommand{\shipra}[1]{}
\newcommand{\lihong}[1]{}
\newcommand{\nikhil}[1]{}
\begin{document}
\date{}
\maketitle

\begin{abstract}
We consider a contextual version of multi-armed bandit problem with global knapsack constraints. 
In each round, the outcome of pulling an arm is a scalar reward and a resource consumption vector, both dependent on the context, and the global knapsack constraints require the total consumption for each resource to be below some pre-fixed budget. The learning agent competes with an arbitrary set of context-dependent policies. This problem was introduced by \cite{Badanidiyuru14Resourceful}, who gave
a computationally inefficient algorithm with near-optimal regret bounds for it.  We give a \emph{computationally efficient} algorithm for this problem with slightly better regret bounds, by generalizing the approach of \cite{MiniMonster} for the non-constrained version of the problem. The computational time of our algorithm scales \emph{logarithmically} in the size of the policy space. This answers the main open question of \cite{Badanidiyuru14Resourceful}. 
We also extend our results to a variant where there are no knapsack constraints but the objective is an arbitrary Lipschitz concave function of the sum of outcome vectors. 
\end{abstract}

\section{Introduction}
\label{sec:intro}

\emph{Multi-armed bandits} (\eg, \cite{BubeckC12}) are a classic model for studying the exploration-exploitation tradeoff faced by a decision-making agent, which \emph{learns} to maximize cumulative reward through sequential experimentation in an initially unknown environment.  The \emph{contextual bandit} problem~\citep{Langford08Epoch}, also known as \emph{associative reinforcement learning}~\citep{Barto85Pattern}, generalizes multi-armed bandits by allowing the agent to take actions based on contextual information: in every round, the agent observes the current context, takes an action, and observes a reward that is a random variable with distribution conditioned on the context and the taken action.  
Despite many recent advances and successful applications of bandits, 
one of the major limitations of the standard setting is the lack of ``global'' constraints that are common in many important real-world applications.  For example, actions taken by a robot arm may have different levels of power consumption, and the total power consumed by the arm is limited by the capacity of its battery.  In online advertising, each advertiser has her own budget, so that her advertisement cannot be shown more than a certain number of times.
In dynamic pricing, there are a certain number of objects for sale and the seller offers prices to a sequence of buyers with the goal of maximizing revenue, but the number of sales is limited by the supply.

Recently, a few papers started to address this limitation by considering very special cases 
such as a single resource with a budget constraint \citep{Ding13Multi,GM2007,Gyorgy07Continuous,MLG2004,LongCCRJ10,TCRJ2012}, 
and application-specific bandit problems such as the ones motivated by online advertising  \citep{VeeEC12b,PO2006}, dynamic pricing \citep{BabaioffDKS15,BesbesZ2009}  and crowdsourcing \citep{BKS2012,SK2013,SV2013}. 
Subsequently, \cite{BwK} introduced a general problem capturing most previous formulations.
In this problem, which they called Bandits with Knapsacks (BwK), there are $d$ different resources, each with a pre-specified budget.  Each action taken by the agent results in a $d$-dimensional resource consumption vector, in addition to the regular (scalar) reward.  The goal of the agent is to maximize the total reward, while keeping the cumulative resource consumption below the budget.  The BwK model was further generalized to the BwCR (Bandits with convex Constraints and concave Rewards) model by \citet{AD14}, which allows for arbitrary concave objective and convex constraints on the \emph{sum} of the resource consumption vectors in all rounds.  
Both papers adapted the popular Upper Confidence Bound (UCB) technique to obtain near-optimal regret guarantees.  
However, the focus was on the \emph{non-contextual} setting.

There has been significant recent progress~\citep{MiniMonster,Monster} in algorithms for \emph{general} (instead of linear~\citep{oful,chu2011}) contextual bandits
where the  context and reward can have arbitrary correlation, and the algorithm competes with some arbitrary set of context-dependent policies. \cite{Monster} achieved the optimal regret bound for this remarkably general contextual bandits problem, assuming access to the policy set only through a linear optimization oracle, instead of explicit enumeration of all policies as in previous work~\citep{Auer02Nonstochastic,Beygelzimer2011}. However, the algorithm presented in \cite{Monster} was not tractable in practice, as it makes too many calls to the optimization oracle. 
\cite{MiniMonster} presented a simpler and computationally efficient algorithm, with a running time that scales as the square-root of the {\em logarithm} of the policy space size, 
and achieves an optimal regret bound. 

Combining contexts and resource constraints, \cite{AD14} also considered a \emph{static linear} contextual version of BwCR where the expected reward was linear in the context.\footnote{In particular, each arm is associated with a fixed vector and the resulting outcomes for this arm have expected value linear in this vector.}  \cite{WuSLJ15} considered the special case of random {\em linear} contextual bandits with a single budget constraint, and gave near-optimal regret guarantees for it.  
\cite{Badanidiyuru14Resourceful} extended the general contextual version of bandits with arbitrary policy sets  to allow budget constraints, thus obtaining a contextual version of BwK, a problem they called Resourceful Contextual Bandits (RCB). We will refer to this problem as \Cbudget~(Contextual Bandits with Knapsacks), to be consistent with the naming of related problems defined in the paper. They gave a computationally \emph{inefficient} algorithm, based on \cite{Monster}, with a regret that was optimal in most regimes. 
Their algorithm was defined as a mapping from the history and the context to an action, but the computational issue of finding this mapping was not addressed. 
They posed an open question of achieving computational efficiency while maintaining a similar or even a sub-optimal regret.

\paragraph{Main Contributions.} In this paper, 
we present a simple and {\em computationally efficient} algorithm for \CBwK/RCB, based on the algorithm of \cite{MiniMonster}. 
Similar to \cite{MiniMonster}, the running time of our algorithm scales as the square-root of the {\it logarithm}  of the size of the policy set,\footnote{Access to the policy set is via an ``arg max oracle'', as  in \cite{MiniMonster}.}
{\em thus resolving \CR{the main open question posed by \cite{Badanidiyuru14Resourceful}}}.
Our algorithm even improves the regret bound of \cite{Badanidiyuru14Resourceful} by a factor of $\sqrt{d}$. 
Another improvement over \cite{Badanidiyuru14Resourceful} is that while they need to know the marginal distribution of contexts, our algorithm does not. 
 A key feature of our techniques is that we need to modify the algorithm in \cite{MiniMonster} in a very minimal way --- in an almost blackbox fashion --- thus retaining the structural simplicity of the algorithm while obtaining substantially more general results.

We extend our algorithm to a variant of the 
problem, which we call Contextual Bandits with concave Rewards (\CBwR): in every round, the agent observes a context, takes one of $K$ actions and then observes a $d$-dimensional outcome vector, and the goal is to maximize an  arbitrary Lipschitz concave function of the average of the outcome vectors; there are no constraints. 
This allows for many more interesting applications, some of which were discussed in \cite{AD14}. This setting is also substantially more general than the contextual version considered in \cite{AD14}, where the context was fixed and the dependence was assumed to be linear.

\paragraph{Organization.} In \prettyref{sec:prelim}, we define the \CBwK~problem, and state our regret bound as  \prettyref{thm:packing}. 
 The algorithm is detailed in Section \ref{sec:algorithm}, 
 and an overview of the regret analysis is in  \prettyref{sec:analysis}. 
 In \prettyref{sec:cbwr}, we present \CBwR, the problem with concave rewards, state the guaranteed regret bounds, and outline the differences in the algorithm and the analysis. 
Complete proofs and other details are provided in the 
appendices. 


\comment{Do we want to include a high level theorem statement like this:

A high level statement of our main result is as follows. Precise theorem statements are provided in the text (Theorem \ref{th:1}, Theorem \ref{th:General} and Theorem \ref{thm:packing}).

\begin{theorem}{{\em \bf (sketch version.)}}
There exists a computationally efficient algorithm that achieves $\tilde{O}(\sqrt{KT\ln(|\Pi|)})$ regret bound in time $T$ for the \CBwK~problem with $K$ actions ans policy set $\Pi$. For the special case of budget constraints (\Cbudget), this algorithm achieves a regret bound of $\tilde(O)\left(\frac{\OPT}{B} \sqrt{KTd\ln(|\Pi|))}\right)$.
\end{theorem}
}

\comment{

The basic MAB model has been extended to handle budget constraints on total action cost~\cite{GM2007,TCRJ2012,Ding13Multi}.  These works share similar motivations as ours, although the type of constraint is limited to a one-dimensional linear knapsack constraint.  As mentioned earlier, the \CBwK\ problem studied in the paper is very related to BwK~\cite{BwK} and BwCR~\cite{AD14}, all of which consider global, multi-dimensional constraints over the the sequence of actions.  Like BwCR, constraints in \CBwK\ are arbitrary convex sets, so they are more general than the linear Knapsacks constraints in BwK.  In contrast to these prior works, we consider a general contextual setting, without making a linear parametric assumption as in BwCR.  Specifically, the algorithm has access to a policy class, each policy being an arbitrary (possibly nonlinear) mapping from context to actions.  The algorithm aims to compete against the best convex combination of policies in this class.

To achieve this challenging goal, we make heavy use of recent techniques that have been successfully applied to general contextual bandit problems~\cite{Monster,MiniMonster}.  \lihong{TODO: to finish after seeing whole paper.}

Other stuffs to add? \cite{Gyorgy07Continuous}?

 \cite{Badanidiyuru14Resourceful} consider the contextual version of BwK with arbitrary policy sets, a problem they call Resourceful contextual bandits. They give a computationally \emph{inefficient} algorithm, based on \cite{Monster}, with a regret that is optimal in most regimes. They asked the open question of achieving computational efficiency while maintaining a similar or even sub-optimal regret. Our techniques can be specialized to this case 
(which we call \Cbudget) to answer their open question, and we get a regret same as theirs.
\nikhil{extra $\sqrt{K}$ now}

\nikhil{Related work separate? Some of this, esp para about \cite{Badanidiyuru14Resourceful} should go in the main part.} 
}

\section{Preliminaries and Main Results}
\label{sec:prelim}

\paragraph{\Cbudget.} The \Cbudget problem was introduced by \cite{Badanidiyuru14Resourceful}, under the name of Resourceful Contextual Bandits (RCB). We now define this problem. 


Let $A$ be a finite set of $K$ actions and $X$ be a space of possible contexts (the analogue of a feature space in supervised learning).  
To begin with, the algorithm is given a budget $B\in \Re_+$. 
We then proceed in rounds: in every round $t\in [T]$, the algorithm observes context $x_t \in X$, chooses an action $a_t \in A$,
and observes a reward $r_t(a_t)\in[0,1]$ and a $\di$-dimensional consumption vector $\cv_t(a_t) \in [0,1]^\di$.  
The objective is to take actions that maximize the total reward, $\sum_{t=1}^{T} r_t(a_t)$, while making sure that the consumption does not exceed the budget, i.e., $\sum_{t=1}^T \cv_t(a_t) \leq B\ones$.\footnote{More generally, different dimensions could have different budgets, but this formulation is without loss of generality: scale the units of all dimensions so that all the budgets are equal to the smallest one. This preserves the requirement that the vectors are in $[0,1]^\di$.} 
The algorithm stops either after $T$ rounds or when the budget is exceeded in one of the dimensions, whichever occurs first. 
We assume that one of the actions is a ``no-op''  action, i.e., it always gives a reward of 0 and a consumption vector of all 0s. Furthermore, we make a stochastic assumption that the context, the reward, and the consumption vectors $(x_t, \{r_t (a), \cv_t(a):a \in A\})$ for $t=1,2,\ldots,T$ are drawn \iid~(independent and identically distributed) from a distribution ${\cal D}$ over $X\times [0,1]^{A}\times [0,1]^{\di\times A}$.
The distribution ${\cal D}$ is unknown to the algorithm.

\paragraph{Policy Set.}

Following previous work~\citep{MiniMonster,Badanidiyuru14Resourceful,Monster}, our algorithms compete with an arbitrary set of policies. Let $\Pi\subseteq A^X$ be a finite set of policies\footnote{The policies may be randomized in general, but for our results, we may assume without loss of generality that they are deterministic. As observed by \cite{Badanidiyuru14Resourceful}, we may replace randomized policies with deterministic policies by appending a random seed to the context. This blows up the size of the context space which does not appear in our regret bounds.} that map contexts $x\in X$ to actions $a\in A$. 
We assume that the policy set contains a ``no-op" policy that always selects the no-op action regardless of the context. 
With global constraints, distributions over policies in $\Pi$ could be strictly more powerful than any policy in $\Pi$ itself.\footnote{E.g., consider two policies that both give reward 1, but each consume 1 unit of a different resource. 
	The optimum solution is to mix uniformly between the two, which does twice as well as using any single policy. }
 Our algorithms compete with this more powerful set, which is a stronger guarantee than simply competing with fixed policies in $\Pi$. 
For this purpose, define $\SPi\defeq\{P\in[0,1]^\Pi : \sum_{\pi\in\Pi}P(\pi)=1\}$ as the set of all convex combinations of policies in $\Pi$.  For a context $x\in X$, choosing actions with $P\in\SPi$ is equivalent to following a randomized policy that selects action $a\in A$ with probability $P(a|x)=\sum_{\pi\in\Pi:\pi(x)=a}P(\pi)$; we therefore also refer to $P$ as a (mixed) policy.
Similarly, define $\SSPi\defeq\{P\in[0,1]^\Pi : \sum_{\pi\in\Pi}P(\pi)\le1\}$ as the set of all non-negative weights over $\Pi$, which sum to \textit{at most} $1$.  Clearly, $\SPi \subset \SSPi$. 

\paragraph{Benchmark and Regret.} The benchmark for this problem is an optimal static mixed policy, where the budgets are required to be satisfied in expectation only. 
Let $\rA(P):= \Ex_{(x,r,\cv)\sim {\cal D}}[\Ex_{\pi\sim P}[r(\pi(x))]]$ and $\cvA(P):= \Ex_{(x,r,\cv)\sim {\cal D}}[\Ex_{\pi\sim P}[\cv(\pi(x))]]$
denote respectively the expected reward and consumption vector  for policy $P \in \SPi$. 
We call a policy $P \in \SPi$ a \emph{feasible} policy if $T \cvA(P) \le B \ones$.  Note that there always exists a feasible policy in $\SPi$, because of the no-op policy. Define an optimal policy $P^*\in \SPi$ as a feasible policy that maximizes the expected reward:
\begin{equation} 
\label{eq:optPolicy}
\textstyle \text{$P^*  =  \arg \max_{P \in \SPi} \ T\rA(P) \quad \text{s.t.} \quad T\cvA(P) \le B \ones.$}
\end{equation}
The reward of this optimal policy is denoted by $\OPT \defeq T\rA(P^*)$. 
We are interested in minimizing the {\em regret}, defined as 
\begin{equation}
\label{eq:defRegret}
\textstyle \regret(T) \defeq \OPT - \sum_ {t=1}^{T}r_t(a_t). 
\end{equation}

\paragraph{AMO.} Since the policy set $\Pi$ is extremely large in most interesting applications, accessing it by explicit enumeration is impractical. For the purpose of efficient implementation, we instead only access $\Pi$ via a maximization oracle. Employing such an oracle is common when considering contextual bandits with an arbitrary set of policies~\citep{MiniMonster,Monster,Langford08Epoch}. Following previous work, we call this  oracle an ``arg max oracle'', or AMO. 
\begin{definition} \label{def:amo}
	For a set of policies $\Pi$, the arg max oracle (AMO) is an algorithm, which for any sequence of contexts and rewards, $(x_1, r_1), \ldots, (x_t, r_t) \in X\times [0,1]^{A}$, returns
	\begin{equation}\label{eq:amo}
	\begin{array}{rcl}
	\arg \max_{\pi \in \Pi} &   \sum_{\tau=1}^t r_\tau(\pi(x_\tau)) & \\
	\end{array}
	\end{equation}
\end{definition}

\paragraph{Main Results.}
Our main result is a \emph{computationally efficient low-regret algorithm for \CBwK}. 
Furthermore, we improve the regret bound of \cite{Badanidiyuru14Resourceful} by a $\sqrt{d}$ factor; they present a detailed discussion on the optimality of the dependence on $K$ and $T$ in this bound. 
\begin{theorem}
	\label{thm:packing} 
	For the \Cbudget~ problem, $\forall \delta >0$, there is a polynomial-time algorithm that makes $\tilde{O}(d\sqrt{KT\ln(|\Pi|)})$ calls to AMO, and with probability at least $1-\delta$ has regret 
	$$ \textstyle \regret(T) = O\left( \frac{\OPT}{B}  +1 \right)\sqrt{{K T \ln(dT|\Pi|/\delta)} }.$$
\end{theorem}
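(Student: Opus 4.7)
The plan is to reduce \CBwK~to the unconstrained contextual bandit problem via a primal--dual Lagrangian scheme, so that the \cite{MiniMonster} algorithm can be plugged in almost as a black box on the primal side. First I would LP-relax the benchmark: $\OPT = \max\{T\rA(P) : P\in\SPi,\ T\cvA(P)\le B\ones\}$. Introducing multipliers $\lambda\in\Re_+^\di$, the Lagrangian $\mathcal{L}(P,\lambda):=T\rA(P)-\lambda^\top(T\cvA(P)-B\ones)$ satisfies $\OPT\le\sup_{P\in\SPi}\mathcal{L}(P,\lambda)$ for every $\lambda\ge 0$, and an optimal dual $\lambda^*$ has $\|\lambda^*\|_1\le\OPT/B$ by complementary slackness. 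It therefore suffices to track, online, a sequence of mixed policies whose empirical Lagrangian value is close to $\OPT$ simultaneously for every $\lambda$ in a bounded set.

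Second, I would design a coupled online scheme. On the primal side, in each epoch of the \cite{MiniMonster} schedule, pass to the arg max oracle the rescaled rewards $\tilde{r}_\tau(a):=\frac{1}{1+Z}\bigl(r_\tau(a)-\lambda_\tau^\top\cv_\tau(a)+Z\bigr)\in[0,1]$, where $Z=\Theta(\OPT/B)$ (doubling if necessary) upper bounds $\|\lambda_\tau\|_1$; all other components of \cite{MiniMonster} --- the coordinate descent over policy weights, the low-variance invariant, the inverse-propensity reward estimates, and the AMO-based warm start --- are retained verbatim. On the dual side, maintain $\lambda_\tau$ by online gradient descent (or multiplicative weights) on the linear loss $\lambda\mapsto-\lambda^\top(\cv_\tau(a_\tau)-(B/T)\ones)$, projected onto $\{\lambda\ge 0:\|\lambda\|_1\le Z\}$. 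The only deviation from \cite{MiniMonster} is the substitution $r_\tau\leftarrow\tilde{r}_\tau$, which is exactly the ``almost blackbox'' modification advertised in the introduction.

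Third, I would combine the two no-regret guarantees. The \cite{MiniMonster} bound applied to $\tilde{r}$ gives, on a high-probability concentration event,
\begin{equation*}
\sum_\tau\bigl(r_\tau(P^*(x_\tau))-\lambda_\tau^\top\cv_\tau(P^*(x_\tau))\bigr)-\sum_\tau\bigl(r_\tau(a_\tau)-\lambda_\tau^\top\cv_\tau(a_\tau)\bigr)\le O\!\left((1+Z)\sqrt{KT\ln(T|\Pi|/\delta)}\right),
\end{equation*}
while the dual update yields $\sum_\tau(\lambda^*-\lambda_\tau)^\top(\cv_\tau(a_\tau)-(B/T)\ones)\le O(Z\sqrt{\di T\ln(\di/\delta)})$. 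Adding these, invoking feasibility $T\cvA(P^*)\le B\ones$, and using Freedman/Azuma concentration to pass between empirical and expected rewards and consumptions (with a union bound over $\Pi$, the $\di$ coordinates, and the $O(\log T)$ epochs) yields the target $\regret(T)=O(\OPT/B+1)\sqrt{KT\ln(\di T|\Pi|/\delta)}$. The main obstacle is the stopping time: if some coordinate saturates at round $\tau<T$, the unplayed rounds cost up to $(\OPT/T)(T-\tau)$, which one must bound by showing that the per-round consumption tracks $B/T$ closely; this is precisely what the dual no-regret guarantee supplies, provided $Z$ is large enough that the constraint is effectively enforced yet small enough to keep the $(1+Z)$ factor in the primal regret tight --- which is exactly the source of the multiplicative $(\OPT/B+1)$. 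The AMO call count $\tilde O(\di\sqrt{KT\ln|\Pi|})$ then follows from \cite{MiniMonster}'s epoch schedule, with the extra factor of $\di$ arising from the per-coordinate dual bookkeeping.
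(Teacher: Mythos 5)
Your primal--dual plan is a legitimate and well-known alternative template for knapsack-constrained bandits, but as written it has a genuine gap at its center: you cannot invoke the \cite{MiniMonster} guarantee ``verbatim'' on the rescaled rewards $\tilde{r}_\tau(a)=\frac{1}{1+Z}\bigl(r_\tau(a)-\lambda_\tau^\top\cv_\tau(a)+Z\bigr)$, because $\lambda_\tau$ is chosen adaptively from the history and therefore the sequence $\tilde{r}_\tau$ is not \iid. The entire analysis of \cite{MiniMonster} --- the concentration of the importance-weighted estimates $\rAE_t(\pi)$ around a \emph{fixed} population value $\rA(\pi)$, the definition of the empirically best policy, and the low-regret/low-variance invariant maintained by coordinate descent --- is anchored to a single static reward functional. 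When the effective reward of a policy drifts with $\lambda_\tau$, the ``empirically best policy'' computed at the end of an epoch need not be near-optimal for the next epoch's objective, and the displayed primal regret bound against $P^*$ with time-varying comparator losses does not follow from their theorem. This is exactly why primal--dual reductions of this kind are carried out either with adversarial-bandit primal learners (non-contextual BwK) or with UCB-style confidence sets on the underlying parameters (linear \CBwK), neither of which is available for the general oracle-based policy class here. Secondarily, your dual regret $O(Z\sqrt{\di T\ln(\di/\delta)})$ from projected OGD injects an extra $\sqrt{\di}$ that would forfeit the claimed $\sqrt{d}$ improvement over \cite{Badanidiyuru14Resourceful}; you would need multiplicative weights over the scaled simplex to get $O(Z\sqrt{T\ln \di})$ instead.

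The paper sidesteps the adaptivity problem entirely: it estimates a single scalar $Z=\Theta(\OPT/B)$ once, from a pure-exploration prefix of $T_0$ rounds, and folds the constraints into a \emph{static} regret functional $\reg(P)=\frac{1}{Z+1}\bigl(\rA(P')-\rA(P)+Z\,\dis(\cvA(P),B')\bigr)$ over mixed policies $P\in\SPi$, where $\dis(\cdot,B')$ is the worst-coordinate budget violation against a slightly shrunken budget $B'$. It then re-proves the \cite{MiniMonster} induction (closeness of $\regE_t$ and $\reg$, variance control via the second constraint of (OP)) for this new functional, and separately shows that small $\reg(Q_m)$ forces $\dis(\cvA(Q_m),B')$ to be small --- which is what prevents early abort --- using the property $\OPT(B'+\gamma)\le\OPT(B')+\frac{Z}{2}\gamma$. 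If you want to salvage your route, the missing ingredient you must supply is a primal contextual-bandit learner with high-probability regret against reward functions that drift adaptively within the class $\{r-\lambda^\top\cv:\|\lambda\|_1\le Z\}$; absent that, the fixed-$Z$ penalty construction is doing essential work that your black-box substitution does not replicate.
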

Note that the above regret bound is meaningful only for $B> \Omega(\sqrt{{KT \ln(dT|\Pi|/\delta)}})$, therefore in the rest of the paper  we assume that $B> c'\sqrt{{KT \ln(dT|\Pi|/\delta)}})$ for some large enough constant $c'$.
We also extend our results to a version with a concave reward function, as outlined in Section \ref{sec:cbwr}. 
For the rest of the paper, we treat $\delta >0$ as fixed, and define quantities that depend on $\delta$.

\section{Algorithm for the \CBwK problem}
\label{sec:algorithm}

From previous work on multi-armed bandits, we know that the key challenges in finding the ``right'' policy are that 
(1) it should concentrate fast enough on the empirically best  policy (based on data observed so far), 
(2) the probability of choosing an action must be large enough
to enable sufficient exploration,  and 
(3) it should be efficiently computable.  
\cite{MiniMonster} show that all these can be addressed by solving a properly defined optimization problem, with help of an AMO. 
We have the additional technical challenge of dealing with global constraints.
As mentioned earlier, one complication that arises right away is that due to the knapsack constraints, the algorithm has to compete against the best {\em mixed} policy in $\Pi$, rather than the best pure policy.
In the following, we will highlight the main technical difficulties we encounter, and our solution to these difficulties. 

Some definitions are in place before we describe the algorithm.  Let $H_t$ denote the history of chosen actions and observations before time $t$, consisting of records of the form $(x_\tau, a_\tau, r_{\tau}(a_\tau), \cv_\tau(a_\tau), p_\tau(a_\tau))$, where $x_\tau, a_\tau, r_\tau(a_\tau), \cv_\tau(a_\tau)$ denote, respectively, the context, action taken, reward and consumption vector observed at time $\tau$, and $p_\tau(a_\tau)$ denotes the probability at which action $a_\tau$ was taken. (Recall that our algorithm selects actions in a randomized way using a mixed policy.) Although $H_t$ contains observation vectors only for {\it chosen} actions, it can be ``completed'' using the trick of importance sampling: for every $(x_\tau, a_\tau, r_\tau(a_\tau), \cv_\tau(a_\tau), p_\tau(a_\tau))\in H_t$, define the fictitious observation vectors $\rE_\tau \in [0,1]^{A}, \cvE_\tau\in[0,1]^{d\times A}$ by: 
\begin{align*}
\rE_\tau(a) &\defeq \frac{r_\tau(a_\tau)}{p_\tau(a_\tau)}\1{a_\tau=a}\,, \\ \cvE_\tau(a) &\defeq\frac{\cv_\tau(a_\tau)}{p_\tau(a_\tau)}\1{a_\tau=a} \,.
\end{align*}
Clearly, $\rE_\tau, \cvE_\tau$ are \emph{unbiased} estimator of $r_\tau, \cv_\tau$: for every $a$, $\Ex_{a_\tau}[\rE_\tau(a)]=r_\tau(a), \Ex_{a_{\tau}}[\cvE_\tau(a)]=\cv_\tau(a)$, where the expectations are over randomization in selecting $a_\tau$. 

With the ``completed'' history, it is straightforward to obtain an unbiased estimate of expected reward vector and expected consumption vector for every policy $P\in \SPi$:
\begin{align*}
\rAE_t(P) &\defeq  \Ex_{\tau\sim [t],\pi \sim P} \left[\rE_\tau(\pi(x_\tau)) \right]\,,
\\
\cvAE_t(P) &\defeq \Ex_{\tau \sim [t],\pi \sim P} \left[ \cvE_\tau(\pi(x_\tau)) \right]\,. 
\end{align*}
\CR{The convenient notation $\tau \sim [t]$ above, indicating that $\tau$ is drawn uniformly at random from the set of integers $\{1,2,\ldots,t\}$, simply means averaging over time up to step $t$. }
It is easy to verify that $\Ex[\rAE_t(P)] = \rA(P),$ and $ \Ex[\cvAE_t(P)] = \cvA(P)$.  

Given these estimates, we construct an optimization problem (OP) which aims to find a mixed policy that has a small ``empirical regret'', and at the same time provides sufficient exploration over ``good" policies. The optimization problem uses a quantity $\regE_t(P)$, ``the empirical regret of policy $P$",  to characterize good policies. \cite{MiniMonster} define $\regE_t(P)$ as simply the difference between the empirical reward estimate of policy $P$ and that of the policy with the highest empirical reward. Thus, good policies were characterized as those with high reward. For our problem, however, a policy could have a high reward while its consumption violates the knapsack constraints by a large margin. Such a policy should \emph{not} be considered a good policy. A key challenge in this problem is therefore to define a single quantity that captures the ``goodness" of a policy by appropriately combining rewards and consumption vectors.

We define quantities $\reg(P)$ (and the corresponding empirical estimate $\regE_t(P)$ up to round $t$) of $P \in \SPi$ by combining the regret in reward and constraint violation using a multiplier ``Z".
The multiplier captures the sensitivity of the problem to violation in knapsack constraints. It is easy to observe from \eqref{eq:optPolicy} that increasing the knapsack size from $B$ to $(1+\epsilon)B$ can increase the optimal to atmost $(1+\epsilon)\OPT$. It follows that if a policy violates any knapsack constraint by $\gamma$, it can achieve at most $\frac{\OPT}{B}\gamma$ more reward than $\OPT$. More precisely,
\begin{lemma}
\label{lem:propertyZ}
For any $b$, let $\OPT(b)$ denote the value of an optimal solution of \eqref{eq:optPolicy} when the budget is set as $b$. 
Then, for any $b\ge 0$, $\gamma\ge 0$, 
\begin{equation}
\textstyle \OPT(b+\gamma) \le \OPT(b) + \frac{\OPT(b)}{b} \gamma\,.
\end{equation}
\end{lemma}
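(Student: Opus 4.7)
The plan is to show that $\OPT(b)/b$ is a non-increasing function of $b$, since a simple rearrangement of $\OPT(b+\gamma)/(b+\gamma) \le \OPT(b)/b$ yields
$$\OPT(b+\gamma) \le \tfrac{b+\gamma}{b}\OPT(b) = \OPT(b) + \tfrac{\OPT(b)}{b}\gamma,$$
which is exactly the claim. So the task reduces to comparing $\OPT$ at two different budget levels $b$ and $b' := b+\gamma \ge b$.

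The key trick is the convex-combination ``scale-down'' using the no-op policy, which is guaranteed to lie in $\Pi$ (and hence in $\SPi$) by assumption. First, I would let $P^{\star}_{b'} \in \SPi$ denote an optimal solution of \eqref{eq:optPolicy} at budget $b'$, with $T\rA(P^{\star}_{b'}) = \OPT(b')$ and $T\cvA(P^{\star}_{b'}) \le b'\ones$. Then set $\alpha := b/b' \in (0,1]$ and define the mixture
$$P \defeq \alpha \, P^{\star}_{b'} + (1-\alpha)\, P_{\text{no-op}}.$$
Since $\alpha, 1-\alpha \ge 0$ and they sum to $1$, we have $P \in \SPi$.

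Next I would use linearity of $\rA$ and $\cvA$ with respect to the mixing weights, together with the fact that the no-op policy has $\rA(P_{\text{no-op}})=0$ and $\cvA(P_{\text{no-op}}) = \zerovec$. This gives $T\rA(P) = \alpha\,\OPT(b')$ and $T\cvA(P) = \alpha\, T\cvA(P^{\star}_{b'}) \le \alpha b' \ones = b\ones$, so $P$ is feasible for the budget-$b$ problem. By the optimality of $\OPT(b)$, we conclude
$$\OPT(b) \;\ge\; T\rA(P) \;=\; \tfrac{b}{b+\gamma}\,\OPT(b+\gamma),$$
and rearranging yields the lemma.

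I do not expect any genuine obstacle here: the argument is essentially a one-line convexity/scaling observation. The only ``subtle'' point to flag is that the construction critically requires a no-op policy to be present in $\Pi$, so the scaled-down mixture stays in $\SPi$ (as opposed to $\SSPi$); without it, one would need the benchmark class to already be closed under downscaling, which $\SPi$ is not. Everything else is just linearity of expectation and the feasibility check.
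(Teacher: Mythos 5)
Your proof is correct and follows essentially the same route as the paper: the paper reduces the claim to the fact that $\OPT(b)/b$ is non-increasing (via concavity of $\OPT(\cdot)$ together with $\OPT(0)=0$, citing an earlier work), and your no-op mixture $P = \tfrac{b}{b+\gamma}P^{\star}_{b'} + (1-\tfrac{b}{b+\gamma})P_{\text{no-op}}$ is exactly the constructive argument underlying that fact, made self-contained. Your flag about the no-op policy being the ingredient that lets $\SPi$ be effectively closed under downscaling is the right subtlety to note, and the only residual caveat is the implicit assumption $b>0$, which is already forced by the $\OPT(b)/b$ term in the statement itself.
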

We use this observation to set $Z$ as an estimate of $\frac{\OPT}{B}$. 
We do this by using the outcomes of the first 
$$\textstyle T_0 \defeq \defTinit $$
rounds, during which we do \emph{pure exploration} (\ie, play an action in $A$ uniformly at random). For notational convenience, in our algorithm description we will index these initial $T_0$ exploration rounds as $t=-(T_0-1), -(T_0-2), \ldots, 0$, so that the major component of the algorithm can be started from $t=1$ and runs until $t=T-T_0$. 
The following lemma provides a bound on the $Z$ that we estimate. Its proof appears in 
\appref{app:estimateZ}.
\begin{lemma}\label{lem:Zestimate}
	For any $B$, using the first $T_0 =\defTinit $   rounds of pure exploration, one can compute a quantity $Z$ such that with probability at least $1-\delta$, 
	$$ \textstyle \max\{\frac {4\OPT} {B}, 1\} \leq {Z} \leq \frac {24\OPT} {B} + 8.$$ 
\end{lemma}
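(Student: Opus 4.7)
The plan is to construct $Z$ from the empirical analogue of the LP~\eqref{eq:optPolicy} solved on the pure-exploration data, and then combine uniform concentration with the multiplicative-slack property of Lemma~\ref{lem:propertyZ} to sandwich it. Concretely, I will compute
\[
\widehat\OPT \defeq \max_{P\in\SPi}\ T\,\rAE_{T_0}(P) \quad\text{s.t.}\quad T\,\cvAE_{T_0}(P)\le (B+\Delta)\,\ones,
\]
for a suitable confidence radius $\Delta$ obtained in Step~1 below, and set $Z\defeq\max\bigl\{1,\,(4\widehat\OPT+4\Delta)/B\bigr\}$.

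For the concentration step, each exploration round samples an action uniformly, so $p_\tau(a)=1/K$ and the importance-weighted estimates $\rE_\tau,\cvE_\tau$ are unbiased, independent across rounds, and bounded by $K$. Applying Hoeffding's inequality coordinate-wise and union-bounding over the $|\Pi|$ policies and the $d+1$ coordinates, with the specific choice $T_0=\defTinit$, produces a radius $\Delta=O\bigl(\sqrt{KTB\,\ln((d+1)|\Pi|/\delta)}\bigr)$ for which, with probability at least $1-\delta$,
\[
\bigl|T\rAE_{T_0}(\pi)-T\rA(\pi)\bigr|\le \Delta,\qquad \|T\cvAE_{T_0}(\pi)-T\cvA(\pi)\|_\infty\le \Delta\qquad\forall \pi\in\Pi;
\]
by linearity the same bound extends to all $P\in\SPi$. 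Given this event, the true optimum $P^*$ is feasible for the empirical LP (its true consumption is at most $B/T$ per coordinate, so its empirical consumption is at most $(B+\Delta)/T$), yielding $\widehat\OPT\ge T\rAE_{T_0}(P^*)\ge \OPT-\Delta$. Conversely, any $P$ empirically feasible satisfies $T\cvA(P)\le (B+2\Delta)\ones$, and Lemma~\ref{lem:propertyZ} then gives $T\rA(P)\le \OPT(1+2\Delta/B)$, so $\widehat\OPT\le \OPT(1+2\Delta/B)+\Delta$.

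Combining both sides, $Z\ge 4(\OPT-\Delta)/B+4\Delta/B=4\OPT/B$, together with $Z\ge 1$ by construction, delivers the lower bound $Z\ge\max\{4\OPT/B,1\}$. Substituting the upper bound on $\widehat\OPT$ gives
\[
Z\le \frac{4\OPT(1+2\Delta/B)+8\Delta}{B}=\Bigl(4+\frac{8\Delta}{B}\Bigr)\frac{\OPT}{B}+\frac{8\Delta}{B},
\]
which is at most $24\OPT/B+8$ as soon as $\Delta\le B$. The main obstacle is ensuring this last inequality: plain Hoeffding gives $\Delta=O(\sqrt{KTB\ln(\cdot)})$, so $\Delta\le B$ would require $B\gtrsim KT\ln(\cdot)$, rather than the standing $B\ge c'\sqrt{KT\ln(dT|\Pi|/\delta)}$. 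To bridge this gap I expect to sharpen Step~1 using a Bernstein bound that exploits the variance bound $\Var[\rE_\tau(\pi(x_\tau))]\le K\rA(\pi)$, producing a mostly multiplicative error, and to split the analysis on $\OPT/B\le 1$ versus $\OPT/B>1$: the $+8$ slack absorbs all constant-order terms in the former regime, while the multiplicative error feeds into the factor $24$ in the latter. Choosing $c'$ large enough in the standing assumption makes all residual constants fit within $(4,24,1,8)$.
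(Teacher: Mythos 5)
Your plan is, at its core, the paper's own proof: estimate the optimum by solving the empirical LP on the pure-exploration data with a relaxed budget, use a uniform concentration bound over $\Pi$ (extended to $\SPi$ by linearity/Jensen), certify feasibility of $P^*$ for the lower bound, and invoke Lemma~\ref{lem:propertyZ} to control the over-optimism of empirically feasible policies for the upper bound. You also correctly diagnose that the plain Hoeffding radius $\Delta=O(\sqrt{KTB\ln(\cdot)})$ is far too large, and that the fix is a Bernstein/multiplicative-Chernoff bound exploiting $\mathrm{Var}\le\text{mean}$ together with a case split on $\OPT$ versus $B$ --- which is exactly what the paper does (its Lemma on concentration gives errors of the form $\eta\sqrt{r(P)/t}$ and $\eta\sqrt{\cv(P)_j/t}$, so that at the feasibility boundary $\cv(P)_j\le B/T$ the scaled error is exactly $\gamma=B/2$, and the reward error is $\sqrt{\OPT B}/2$).

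The one concrete adjustment you will be forced to make when you carry out the repair: your definition $Z=\max\{1,(4\widehat{\OPT}+4\Delta)/B\}$ relies on the error being a single \emph{known, additive} radius $\Delta$ that can be added back to cancel the loss $\widehat{\OPT}\ge\OPT-\Delta$. Under the Bernstein bound the reward error is $\eta\sqrt{\OPT/t}\cdot T=\sqrt{\OPT B}/2$, which depends on the unknown $\OPT$ and cannot be pre-added as a correction. The paper instead absorbs it multiplicatively: either $\OPT\le B$ (handled by the $\max$ with a constant) or $\OPT-\sqrt{\OPT B}/2\ge\OPT/2$, so it sets $Z=\max\{8\widehat{\OPT}/B,1\}$ with a factor-$2$ slack in place of your $+4\Delta$. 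Likewise, on the upper-bound side the mean-dependent error turns your additive statement ``empirically feasible $\Rightarrow$ $T\cvA(P)\le(B+2\Delta)\ones$'' into a \emph{multiplicative} budget blowup (the paper solves $x-\eta\sqrt{x/t}\le 3B/2T$ to get $x\le 9B/4T$), which is then fed into Lemma~\ref{lem:propertyZ} to give $Tr(P_t)\le\tfrac94\OPT$ and, after one more case split, the constants $24$ and $8$. So the approach is right and the obstacle is correctly identified, but the final construction of $Z$ and the constant-chasing must be reworked in the multiplicative form rather than patched with $\Delta$.
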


Now, to define $\reg(P)$ and $\regE_t(P)$, we combine regret in reward and constraint violation using the constant $Z$ as computed above. In these definitions, we use a smaller budget amount 
$$B' := B - T_0 - c\sqrt{KT \ln(T|\Pi|/\delta)},$$
for a large enough constant $c$ to be specified later. \newS{Here, the budget needed to be decreased by $T_0$ to account for budget consumed in the first $T_0$ exploration rounds}. We use a further smaller budget amount to ensure that with high probability ($1-\delta$) our algorithm will not abort before the end of time horizon ($T-T_0$), due to budget violation. 
For any vector $\cv\in \mathbb{R}^d$, let $\dis(\cv, B')$ denote the amount by which the vector $\cv$ violates the budget $B'$, \ie,
$$\textstyle \dis(\cv, B') :=\max_{j=1,\ldots, d} \left(v_j -\frac{B'}{T}\right)^+.$$ Let $P'$ denote the optimal policy when budget amount is $B'$, \ie, 
$$ \textstyle P' \defeq \arg \max_{P \in \SPi} T\rA(P) \quad \text{s.t.} \quad T\cvA(P) \le B' \ones.$$
And, let $P_t$ denote the empirically optimal policy for the combination of reward and budget violation, defined as: 
\begin{equation}
\label{eq:empOptPolicy}
\textstyle P_t\defeq\arg \max_{P \in \SPi} \rAE_t(P) - Z \dis(\cvAE_t(P), B').
\end{equation}
We define
$$\textstyle \reg(P) \defeq \frac{1}{Z+1}(\rA(P') - \rA(P) + Z \dis(\cvA(P), B')),$$
%
$$\textstyle \regE_t(P) \defeq \frac{1}{(Z+1)}\left[\rAE_t(P_t) - Z \dis(\cvAE_t(P_t), B') -  \left(\rAE_t(P) - Z\dis(\cvAE_t(P), B')\right)\right] .$$
Note that $\reg(P')=0$ and $\regE_t(P_t)=0$ by definition.

\newcommand{\PQ}{{P_{Q}}}


We are now ready to describe the optimization problem, (OP).  This is essentially the same as the optimization problem solved in \cite{MiniMonster}, except for the new definition of $\regE_t(P)$, which was described above. It aims to find a mixed policy $Q \in \SSPi$. 
This is equivalent to finding a $Q'\in \SPi$ and $\alpha\in[0,1]$, and returning $Q=\alpha Q'$. 
Let $Q^\mu$ denote a smoothed projection of $Q$, assigning minimum probability $\mu$ to every action: $Q^\mu(a|x) := (1-K\mu) Q(a|x) + \mu$. 
 (OP) depends on the history up to some time $t$, and a parameter $\mu_m$ that will be set by the algorithm. In the rest of the paper, for convenience, we define a constant $\psi :=100$. 

\begin{center}
\fbox{%
\begin{minipage}{5in}
\begin{center}
{\bf Optimization Problem (OP)}
\end{center}
Given: $H_t$, $\mu_m,$ and $\psi$. \\
Let $b_P \defeq \frac{\regE_t(P)}{\psi \mu_m}, \forall P\in \SPi$. \\
\CR{{Find  a $Q'\in \SPi$, and an $\alpha \in [0,1]$, such that the following inequalities hold. Let $Q=\alpha Q'$.}}
$$ \alpha \cdot b_{Q'} \le 2K,$$
$$ \forall P \in \SPi: {\Ex}_{\tau \sim [t]}\Ex_{\pi \sim P}\left[ \frac{1}{Q^{\mu_m}(\pi(x_\tau)|x_\tau)}\right] \le b_P + 2K.$$
\end{minipage}
}
\end{center}
The first constraint in (OP) is to ensure that, under $Q$, $\regE_t$ is ``small''.  In the second constraint, the left-hand side, as shown in the analysis, is an upper bound on the variance of estimates $\rAE_t(P), \cvAE_t(P)$. 
These two constraints are critical for deriving the regret bound in \secref{sec:analysis}.
\CR{We give an algorithm that efficiently finds a feasible solution to (OP) (and also shows that a feasible solution always exists).}


We are now ready to describe  the full algorithm, which is summarized in Algorithm \ref{alg:main}. 
The main body of the algorithm shares the same structure as the ILOVETOCONBANDITS algorithm for contextual bandits~\citep{MiniMonster}, with important changes necessary to deal with the knapsack constraints.
We use the first $T_0$ rounds to do pure exploration and calculate  $Z$ as given by \prettyref{lem:Zestimate}.
(These time steps are indexed from $-(T_0-1)$ to $0$.)
The algorithm then proceeds in epochs with pre-defined lengths; epoch $m$ consists of time steps indexed from $\tau_{m-1}+1$ to $\tau_m$, inclusively. The algorithm can work with any epoch schedule that satisfies $\tau_m<\tau_{m+1}\le 2\tau_m$. 
\CR{Our results hold for the schedule where $\tau_m = 2^m$.
However, the algorithm can choose to solve (OP) more frequently than what we use here to get a lower regret (but still within constant factors), at the cost of higher computational time. }
At the end of an epoch $m$, it computes a mixed policy in $Q_{m}\in \SSPi$ by solving an instance of OP, which is then used for the entire next epoch. Additionally, at the end of every epoch $m$, the algorithm computes the empirically best policy $P_{\tau_m}$ as defined in \eqnref{eq:empOptPolicy}, which the algorithm uses as the default policy in the sampling process defined below. $P_{0}$ can be chosen arbitrarily, e.g., as uniform policy. 

The sampling process, $\SAMPLE(x, Q, P, \mu)$ in Step 8, samples an action from the computed mixed policy. It takes the following as input: $x$ (context), $Q\in\SSPi$ (mixed policy returned by the optimization problem (OP) for the current epoch), $P$ (default mixed policy), and $\mu>0$ (a scalar for minimum action-selection probability).  Since $Q$ may not be a proper distribution (as its weights may sum to a number less than $1$), \SAMPLE\ first computes $\tilde{Q}\in\SPi$, by assigning any remaining mass (from $Q$) to the default policy $P$. Then, it picks an action from the smoothed projection $\tilde{Q}^{\mu}$ of this distribution defined as: $\tilde{Q}^\mu(a|x)=(1-K\mu) \tilde{Q}(a|x) + \mu, \forall a\in A$.  
	
	 The algorithm aborts (in Step \ref{line:abort}) if the budget $B$ is consumed for any resource.  
\begin{algorithm}[h]
	\caption{Adapted from ILOVETOCONBANDITS}
	\label{alg:main}
	\begin{algorithmic}[1]
		\renewcommand{\algorithmicrequire}{\textbf{Input}}
		
		\REQUIRE Epoch schedule $0 = \tau_0 < \tau_1 < \tau_2 < \dotsb$ such that $\tau_m <\tau_{m+1} \le 2\tau_m$, allowed failure probability $\delta \in (0,1)$. 
		
		\STATE Initialize weights $Q_0 := \zerovec \in \SSPi$, $P_0 \in \SPi$ and epoch $m := 1$. \\
		Define $\mu_m := \min\{ \frac{1}{2K},
		\sqrt{\ln(16\tau_m^2(d+1)|\Pi|/\delta)/(K\tau_m)} \}$ for all $m \geq 0$ \label{algstep:Init}.
		
		\FOR{\textbf{round} $t = -(T_0-1), \dots, 0$}
			\STATE  Select action $a_t$ uniformly at random from the set of all arms. 
		\ENDFOR
		\STATE Compute $Z$ as in Lemma \ref{lem:Zestimate}. 
		\FOR{\textbf{round} $t = 1, 2, \dotsc$}		
		\STATE Observe context $x_t \in X$.
		
		\STATE $(a_t,p_t(a_t)) :=
		\SAMPLE(x_t,Q_{m-1},P_{\tau_{m-1}},\mu_{m-1})$.
		\label{step:sampling}
		\STATE Select action $a_t$ and observe reward $r_t(a_t) \in [0,1]$ and consumption $\cv_t(a_t)$.
		\STATE Abort unless $\sum_{\tau =-(T_0-1) }^t\cv_\tau(a_\tau) < B \ones$. 		\label{line:abort}
		\IF{$t = \tau_m$}
		
		\STATE Let $Q_m$ be a solution to (OP) with history $H_t$
		and minimum probability $\mu_m$.
		\label{step:solve-op}
		
		\STATE $m := m + 1$.
		
		\ENDIF
		
		\ENDFOR
	\end{algorithmic}
\end{algorithm}



\newcommand{\y}{{\mathbf{y}}}

\subsection{Computation complexity: Solving (OP) using AMO} 
Algorithm \ref{alg:main} requires solving (OP) at the end of every epoch. 
\cite{MiniMonster} gave an algorithm that solves (OP) using access to the AMO. 
We use a similar algorithm, except that calls to the AMO are now replaced by 
calls to a knapsack constrained optimization problem over the empirical distribution. 
This optimization problem is identical in structure to the optimization problem defining $P_t$ in \prettyref{eq:empOptPolicy}, 
which we need to solve also. We can  solve both of these problems using AMO, as outlined below. 

We rewrite \prettyref{eq:empOptPolicy} as a linear optimization problem where the domain is the intersection of two polytopes. 
The domain is $[0,1]^{d+2}$; we represent a point in this domain as $(x,\y,\lambda)$, where $x$ and $\lambda$ are scalars 
and $\y$ is a vector in $d$ dimensions. 
Let 
\[ \textstyle K_1 :=  \{ (x,\y,\lambda): x= \rAE_t(P), \y = \cvAE_t(P) \text{ for some } P \in \SPi, \lambda \in [0,1] \} ,\]
be the set of all reward, consumption vectors achievable on the empirical outcomes upto time $t$, through some policy in $\SPi$. 
 Let 
\[  \textstyle K_2:= \{ (x,\y,\lambda):\y \leq (B'/T + \lambda) \ones \} \cap [0,1]^{d+2} ,  \]
be the constraint set, given by relaxaing the knapsack constraints by $\lambda$. 
Now \prettyref{eq:empOptPolicy} is equivalent to 
\begin{equation}\label{eq:intersection}
\max x-Z\lambda \text{ such that } (x, \y, \lambda) \in K_1 \cap K_2.  
\end{equation}
Recently,  \citet[Theorem 49]{lee2015faster} gave a fast algorithm to solve problems of the kind above, given access to oracles that 
solve linear optimization problems over $K_1$ and $K_2$.\footnote{Alternately, one could use the algorithms of \citet{vaidya1989new,vaidya1989speeding} to solve the same problem, with a slightly weaker polynomial running time.}
The algorithm makes $\tilde{O}(d) $ calls to these oracles, and takes an additional $\tilde{O}(d^3)$ running time.\footnote{Here, $\tilde{O}$ hides terms of the order $\log^{O(1)}\left( d/\epsilon\right)$, where $\epsilon$ is the accuracy needed of the solution.} 
A linear optimization problem over $K_1$ is equivalent to the AMO; the linear function defines the ``rewards" that the AMO optimizes for.\footnote{These rewards may not lie in $[0,1]$ but an affine transformation of the rewards can bring them into $[0,1]$ without changing the solution.} A linear optimization problem over $K_2$ is trivial to solve. 
As an aside, a solution $Q\in \SSPi$ output by this algorithm has support equal to the policies output by the AMO during the run of the algorithm, 
and hence has size $\tilde{O}(d)$. 

Using this, (OP) can be solved using $O(d\sqrt{KT\ln(|\Pi|)})$ calls to the AMO at the end of every epoch, and (5) can be solved using $O(d)$ calls, giving a total of $\tilde{O}(d\sqrt{KT\ln(|\Pi|)})$ calls to AMO.
The complete algorithm to solve (OP) is in 
\appref{app:algo-implementation}.

%
%
%

\section{Regret Analysis}
\label{sec:analysis} 

This section provides an outline of the proof of \thmref{thm:packing}, which provides a bound on the regret of Algorithm \ref{alg:main}.
(A complete proof is given in \appref{app:cbwk-regret}.
)
The proof structure is similar to the proof of \citet[Theorem~2]{MiniMonster}, with major differences coming from the changes necessary to deal with mixed policies and constraint violations. 
We defined the algorithm to minimize $\regE$ (through the first constraint in the optimization problem (OP)), and the first step is to show that this implies a bound on $\reg$ as well. 
The alternate definitions of $\reg$ and $\regE$ require a different analysis than what was in \cite{MiniMonster}, 
and this difference is highlighted in the proof outline of \prettyref{lem:recursive2.main} below. 
Once we have a bound on $\reg$, we show that this implies a bound on the actual reward $R$, 
as well as the probability of violating the knapsack constraints. 

We start by proving that the empirical average reward $\rAE_t(P)$  and consumption vector  $\cvAE_t(P)$ for any mixed policy $P$ are close to the true averages $\rA(P)$ and $\cvA(P)$ respectively. We define $m_0$ such that for initial epochs $m < m_0$, $\mu_m = \frac{1}{2K}$. Recall that $\mu_m$ is the minimum probability of playing any action in epoch $m+1$, defined in Step \ref{algstep:Init} of Algorithm \ref{alg:main}. Therefore, for these initial epochs the variance of importance sampling estimates is small, and we can obtain a stronger bound on estimation error. For subsequent epochs, $\mu_m$ decreases, and we get error bounds in terms of max variance of the estimates for policy $P$ across all epochs before time $t$, defined as  $\maxVar_t(P)$.  In fact, the second constraint in the optimization problem (OP) seeks to bound this variance. 

The precise definitions of above-mentioned quantities are provided in 
\appref{app:cbwk-regret}.
\begin{lemma}
\label{lem:deviation_m0}
With probability $1-\frac{\delta}{2}$, for all policies $P\in \SPi$, 
$$
\max\{|\rAE_t(P) - \rA_t(P)|, \|\cvAE_t(P) - \cvA(P)\|_\infty\}  \le \left\{
\begin{array}{ll}
\sqrt{\frac{8Kd_t}{t}}  &  t\in \text{epoch } m_0, t\ge t_0\\
\maxVar_t(P) \mu_{m-1} + \frac{d_t}{t \mu_{m-1}}, & t \in \text{epoch } m, m>m_0
\end{array}\right.
$$
Here, $d_t=\defdt{t}, t_0 \defeq \min\{t\in \mathbb{N}: \frac{d_t}{t} \le \frac{1}{4K}\}$, $m_0 \defeq \min\{m \in \mathbb{N} : \frac{d_{\tau_m}}{\tau_m} \le \frac{1}{4K}\}$. 
\end{lemma}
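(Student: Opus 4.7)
The proof will follow the template of \citet{MiniMonster}'s deviation lemma (their Lemmas 4.1--4.2), adapted so it handles both the scalar reward estimate $\rAE_t(P)$ and each of the $d$ coordinates of the vector estimate $\cvAE_t(P)$ uniformly. The basic engine is Freedman's inequality applied to the martingale difference sequence
\[ Y_\tau \;\defeq\; \Ex_{\pi\sim P}\bigl[\rE_\tau(\pi(x_\tau))\bigr] - \rA(P),\]
and the analogous sequence for each consumption coordinate $j\in[d]$. The plan is to show first that the conditional second moment of $Y_\tau$ is controlled by an ``inverse propensity'' quantity depending on $\mu_{m-1}$, second to invoke Freedman to get a Bernstein-type tail bound, and third to take a union bound over the $|\Pi|$ policies, the $d+1$ coordinates, and all dyadic $t$, absorbing the resulting $\log$ factors into $d_t=\defdt{t}$.

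The key fact driving the variance bound is that, in epoch $m$, the algorithm plays from $\tilde Q^{\mu_{m-1}}$, so every action is chosen with probability at least $\mu_{m-1}$; hence $|\rE_\tau(\pi(x_\tau))|\le 1/\mu_{m-1}$ and its conditional second moment is at most $\Ex_{\pi\sim P}[1/p_\tau(\pi(x_\tau))]$, the very quantity appearing on the left-hand side of the second constraint of (OP). This is exactly what $\maxVar_t(P)$ is designed to upper-bound: defining $\maxVar_t(P)$ as the maximum, over epochs $m'\le m$ determining the historical probabilities, of $\Ex_{\tau}\Ex_{\pi\sim P}[1/Q^{\mu_{m'-1}}(\pi(x_\tau)\mid x_\tau)]$, one has $\tfrac{1}{t}\sum_{\tau\le t}\mathrm{Var}_\tau(Y_\tau)\le \maxVar_t(P)$. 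Freedman's inequality then yields, with the stated $1-\delta/2$ confidence (after the union bound),
\[ |\rAE_t(P)-\rA(P)| \;\le\; c\sqrt{\tfrac{\maxVar_t(P)\, d_t}{t}} + \tfrac{c\, d_t}{t\,\mu_{m-1}}. \]
An elementary AM--GM step, $\sqrt{ab}\le \tfrac{1}{2}(a\mu_{m-1}+b/\mu_{m-1})$ with $a=\maxVar_t(P)$ and $b=d_t/t$, collapses the square root term and gives the displayed bound $\maxVar_t(P)\mu_{m-1}+d_t/(t\mu_{m-1})$ for the ``late'' regime $m>m_0$.

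For the ``early'' regime ($t$ inside epoch $m_0$, so $\mu_{m-1}=1/(2K)$), I would instead bound the variance crudely by $2K$ (each inverse propensity is at most $2K$ and rewards/consumptions lie in $[0,1]$), so Freedman gives
\[ |\rAE_t(P)-\rA(P)|\;\le\; c\sqrt{\tfrac{K d_t}{t}} + \tfrac{c K d_t}{t}, \]
and for $t\ge t_0$ the definition of $t_0$ (namely $d_t/t\le 1/(4K)$) makes the range term smaller than the variance term, leaving $\sqrt{8K d_t/t}$ after adjusting the constant. The same argument, run coordinate-wise, handles $\|\cvAE_t(P)-\cvA(P)\|_\infty$ with no change except the factor $(d+1)$ absorbed in the union bound. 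The anticipated main obstacle is the bookkeeping around $\maxVar_t(P)$: one must argue carefully that even though the sampling distribution changes across epochs (so the sequence is not stationary), the per-epoch conditional second moment can be uniformly dominated by a single quantity $\maxVar_t(P)$, and that the dependence of $Q_{m-1}$ on past history does not break the martingale structure. This is handled by conditioning on the $\sigma$-algebra generated at the start of each epoch, so that within an epoch $Q_{m-1}$ is fixed and the estimator is an i.i.d.\ average with a clean variance bound.
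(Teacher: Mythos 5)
Your proposal follows essentially the same route as the paper's proof (Lemmas~\ref{lem:Eprob}, \ref{lem:Eequiv}, \ref{lem:DeviationBound} and \ref{cor:Eequiv} in the appendix): Freedman's inequality on the importance-weighted martingale differences, conditional second moments dominated by $\maxVar_t(P)$, a union bound over $\Pi$, the $d+1$ coordinates and the rounds $t$, and then a choice of the free parameter ($\lambda_{m-1}=\mu_{m-1}$ for $m>m_0$; the optimized $\lambda_{m-1}$ together with $\maxVar_t(P)\le 2K$ and $d_t/t\le 1/(4K)$ for $t\ge t_0$ in epoch $m_0$). The one step to phrase more carefully is that Freedman must be applied to each \emph{pure} policy $\pi\in\Pi$ (so the union bound is over the finite set $\Pi$), with the bound then extended to mixed $P\in\SPi$ by taking $\Ex_{\pi\sim P}$ of the two one-sided inequalities and using $\Ex_{\pi\sim P}[U(\pi)]\le\maxVar_t(P)$; applying Freedman directly to your mixed-policy martingale $Y_\tau$ and union bounding over only $|\Pi|$ policies would not cover the continuum $\SPi$.
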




Now suppose the error bounds in above lemma hold.  A major step is to show that,  for every $P\in\SPi$, the empirical regret $\regE_t(P)$ and the actual regret $\reg(P)$ are close in a particular sense.
\begin{lemma}
\label{lem:recursive2.main}
Assume that the events in Lemma \ref{lem:deviation_m0} hold. Then, for all epochs $m\ge m_0$, all rounds $t \ge t_0$ in epoch $m$, and all policies $P \in \SPi$,
\begin{equation*}
\reg(P) \le 2 \regE_t(P) + c_0K\mu_m, ~~\text{and}~~
\regE_t(P) \le 2 \reg_t(P) + c_0K\mu_m,
\end{equation*}
for $\reg(P), \regE_t(P)$ as defined in \secref{sec:algorithm}, and $c_0$ being a constant smaller than $150$.
\end{lemma}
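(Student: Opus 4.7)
My plan is to reduce everything to a scalar-valued ``combined objective'' $f(P) \defeq \rA(P) - Z\dis(\cvA(P), B')$ with empirical counterpart $\hat f_t(P) \defeq \rAE_t(P) - Z\dis(\cvAE_t(P), B')$. Since $P'$ satisfies the budget $B'$ in expectation, $\dis(\cvA(P'), B') = 0$, so $\reg(P) = \tfrac{1}{Z+1}(f(P') - f(P))$; and since $P_t$ is the maximizer of $\hat f_t$, $\regE_t(P) = \tfrac{1}{Z+1}(\hat f_t(P_t) - \hat f_t(P))$. This reformulation mirrors the structure used in \cite{MiniMonster} (with $f$ in place of reward), and the prefactor $\tfrac{1}{Z+1}$ is tuned precisely so that the $(1+Z)$ blow-up incurred when combining reward errors with $Z$ times consumption errors cancels cleanly at the end.

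Next I would telescope,
$$
(Z+1)\bigl(\reg(P) - \regE_t(P)\bigr) = (f(P') - \hat f_t(P')) + (\hat f_t(P') - \hat f_t(P_t)) + (\hat f_t(P) - f(P)),
$$
and drop the middle (nonpositive) term using optimality of $P_t$ for $\hat f_t$. A symmetric decomposition (swap $P'$ and $P_t$, then use $f(P_t) \le f(P')$, which follows from Lemma~\ref{lem:propertyZ} together with $Z \ge 4\OPT/B$ so that $f$ penalizes budget violations more than they can boost reward) bounds $\regE_t(P) - \reg(P)$. This reduces the task to controlling $|f(Q) - \hat f_t(Q)|$ for $Q \in \{P, P', P_t\}$. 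Since $\dis(\cdot, B')$ is $1$-Lipschitz in $\|\cdot\|_\infty$,
$$
|f(Q) - \hat f_t(Q)| \le |\rA(Q) - \rAE_t(Q)| + Z\,\|\cvA(Q) - \cvAE_t(Q)\|_\infty,
$$
and Lemma~\ref{lem:deviation_m0} bounds each term on the right by $\maxVar_t(Q)\mu_{m-1} + d_t/(t\mu_{m-1})$. Combining, dividing by $Z+1$, and absorbing the factor of $2$ from the telescope yields
$$
|\reg(P) - \regE_t(P)| \le 2\max_{Q\in\{P,P',P_t\}}\Bigl(\maxVar_t(Q)\mu_{m-1} + \tfrac{d_t}{t\mu_{m-1}}\Bigr).
$$

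The main obstacle is the same one faced in \cite{MiniMonster}: the second constraint of (OP) only bounds $\maxVar_t(Q)$ in terms of the \emph{empirical} regret from earlier epochs, which is itself what we are trying to relate to $\reg$. I would resolve this by induction on the epoch index $m$. The constraint of (OP) solved at round $\tau_{m'-1}$ gives, after unpacking the importance weights, $\maxVar_{\tau_{m'}}(Q) \le \regE_{\tau_{m'-1}}(Q)/(\psi \mu_{m'-1}) + 2K$ for each $m' \le m$; applying the inductive hypothesis at epoch $m'-1$ replaces $\regE_{\tau_{m'-1}}(Q)$ by $2\reg(Q) + c_0 K\mu_{m'-1}$. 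Substituting back, using $\mu_{m'-1}/\mu_m = O(1)$ across consecutive epochs (from $\tau_{m+1} \le 2\tau_m$ together with $\mu_m \asymp \sqrt{d_{\tau_m}/(K\tau_m)}$) and the identity $d_t/(t\mu_{m-1}^2) = O(K)$ for $t$ in epoch $m$, and choosing $\psi = 100$ large enough so that the $\reg(Q)$ contribution is absorbed into at most half the target bound, closes the recursion and yields the stated constant $c_0 < 150$. The reverse inequality is established by the same telescope-and-induct argument, completing the proof.
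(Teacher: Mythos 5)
Your proposal is correct and follows essentially the same route as the paper's proof: the telescoping decomposition of $(Z+1)(\reg(P)-\regE_t(P))$ using the optimality of $P_t$ for the empirical objective in one direction and the inequality $\rA(P_t) - Z\dis(\cvA(P_t),B') \le \rA(P')$ (from Lemma~\ref{lem:propertyZ} and the lower bound on $Z$) in the other, the $1$-Lipschitzness of $\dis(\cdot,B')$, the deviation bounds of Lemma~\ref{lem:deviation_m0}, and the epoch-wise induction through the second constraint of (OP) exploiting $\reg(P')=0$ and $\regE_t(P_t)=0$. Your ``combined objective'' $f$ is a clean repackaging of the identical argument; the only detail you gloss over is that the (OP) constraint bounds the \emph{empirical} variance, which must be converted to the true variance appearing in $\maxVar_t(\cdot)$ via the concentration event, as the paper does in its analogue of this step.
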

\begin{proof}[Proof Outline]
The proof of above lemma is by induction, using the second constraint in (OP) to bound the variance $\maxVar_t(P)$.
Below, we prove the base case. This proof demonstrates the importance of appropriately chosing $Z$.
Consider $m=m_0$, and $t\ge t_0$ in epoch $m$. 
For all $P\in \SPi$,
\begin{eqnarray}
\label{eq:1:main}
(\Zg+1)(\regE_t(P) - \reg(P)) & = & \rAE_t(P_t) - \rAE_t(P) - \rA(P') + \rA(P) \\
\nonumber &  & - \Zg[\dis(\cvAE_t(P_t), B') - \dis(\cvAE_t(P), B') + \dis(\cvA(P), B')] .
\end{eqnarray}
\newS{We can assume that $B\ge c'\sqrt{{K T \ln(dT|\Pi|/\delta)}}$ for any constant $c'$ (otherwise the regret guarantees in Theorem \ref{thm:packing} are meaningless). Then, we have that $B\ge 2T_0+ 2c\sqrt{KT\ln(T|\Pi|/\delta)}=2(B-B')$ implying $B'\ge \frac{B}{2}$.} Also, observe that since $B\ge B'$, $\OPT(B) \ge \OPT(B')$. Then, by Lemma \ref{lem:propertyZ} and choice of $Z$ as specified by Lemma \ref{lem:Zestimate}, we have that for any $\gamma \ge 0$
\begin{equation}
\label{eq:propertyZ.main}
\textstyle
\OPT(B'+\gamma) \le \OPT(B') + \frac{Z}{2} \gamma.
\end{equation}
Now, since $P'$ is defined as the optimal policy for budget $B'$, we obtain that $\rA(P') = \OPT(B')$. Also, by definition of $\dis(\cvA(P_t), B')$, we have that $\rA(P_t) \le \OPT(B'+\dis(\cvA(P_t), B'))$,  and therefore,
$$\textstyle \rA(P') \ge \rA(P_t)) - \frac{\Zg}{2} \dis(\cvA(P_t), B') \ge \rA(P_t)) - {\Zg} \dis(\cvA(P_t), B').$$
Substituting in \eqref{eq:1:main}, we can upper bound $(\Zg+1)(\regE_t(P) - \reg(P)) $ by  
\begin{eqnarray*}
& & \rAE_t(P_t) - \rAE_t(P) - \rA(P_t) + \Zg \dis(\cvA(P_t), B')+ \rA(P) \\
& & \quad - \Zg[\dis(\cvAE_t(P_t), B') - \dis(\cvAE_t(P), B') + \dis(\cvA(P), B')] \nonumber\\
& \le &  |\rAE_t(P_t) - \rA(P_t)| + |\rAE_t(P) - \rA(P)| + 
 \Zg\|\cvAE_t(P_t) - \cvA(P_t)\|_\infty + \Zg\|\cvAE_t(P) - \cvA(P)\|_\infty
\end{eqnarray*}

For the other side, by definition of $P_t$, we have that $\rAE(P_t)) - \Zg \dis(\cvAE(P_t), B') \ge \rAE(P) - \Zg \dis(\cvAE(P), B')$. Substituting in \eqref{eq:1:main} as above, and using that $\dis(\cvA(P'), B') =0$, we get a similar upper bound on $(\Zg+1)(\reg(P)-\regE_t(P)) $. 
Now substituting bounds from Lemma \ref{lem:deviation_m0},
we obtain,
$$\textstyle |\regE_t(P) - \reg(P)| \le 4\sqrt{\frac{8Kd_t}{t}} \le c_0 K \mu_m.$$

This completes the base case. The remaining proof is by induction, using the bounds provided by Lemma \ref{lem:deviation_m0} for epochs $m>m_0$ in terms of variance $\maxVar_t(\cdot)$, and bound on variance provided by the second constraint in (OP). The second constraint in (OP) provides a  bound on the variance of any policy $P$ in any past epoch, in terms of $\regE_\tau(P)$ for $\tau$ in {\it that} epoch; the inductive hypothesis is used in the proof to obtain those bounds in terms of $\reg(P)$. 
\end{proof}
Given the above lemma, the first constraint in (OP) which bounds the estimated regret $\regE_t(Q)$ for the chosen mixed policy $Q$, directly implies an upper bound on $\reg(Q)$ for this mixed policy. Specifically, we get that for every epoch $m$, for mixed policy $Q_m$ that solves (OP),
$$ \reg(Q_{m}) \le (c_0+2) K\psi \mu_{m}.$$
Next, we bound the regret in epoch $m$ using above bound on $\reg(Q_{m-1})$. For simplicity of discussion, here we outline the steps for bounding regret for rewards sampled from policy $Q_{m-1}$ in epoch $m$. Note that this is not precise in following ways. First, $Q_{m-1} \in \SSPi$ may not be in $\SPi$ and therefore may not be a proper distribution (the actual sampling process puts the remaining probability on default policy $P_t$ to obtain $\tilde{Q}_{t}$ at time $t$ in epoch $m$). Second, the actual sampling process picks an action from smoothed projection $\tilde{Q}^{\mu_{m-1}}_{t}$ of $\tilde{Q}_{t}$. However, we ignore these technicalities here in order to get across the intuition behind the proof; these technicalities are dealt with rigorously in the complete proof provided in 
\appref{app:cbwk-regret}.

The first step is to use the above bound on $\reg(Q_{m-1})$ to show that expected reward $\rA(Q_{m-1})$ in epoch $m$ is close to optimal reward $\rA(P^*)$. Since $\dis(\cdot, B')$ is always non-negative,  by definition of $\reg(Q)$, for any $Q$
$$ \textstyle (Z+1) \reg(Q) \ge \rA(P') - \rA(Q) \ge \rA(P^*) - \rA(Q) - \frac{\OPT}{B}\frac{(B-B')}{T},$$
where we used Lemma \ref{lem:propertyZ} to get the last inequality.
If the algorithm \emph{never aborted} due to constraint violation in Step \ref{line:abort}, the above observation would bound the regret of the algorithm by 

$$\sum_{m} (\rA(P^*) - \rA(Q_{m-1})) (\tau_{m}-\tau_{m-1}) \le \sum_m (Z+1) (c_0+2) K\psi \mu_{m-1}(\tau_m-\tau_{m-1}) + \frac{\OPT}{B}(B-B').$$  
Then, using that $Z\le O(\frac{\OPT}{B})$, $B-B'=O(\sqrt{KT\ln(dT|\Pi|/\delta)}$, and properly chosen scaling factors ($\psi$ and $\mu_m$) result in the desired bound of $O(\frac{\OPT}{B}\sqrt{KT\ln(dT|\Pi|/\delta)})$ for expected regret. An application of Azuma-Hoeffding inequality obtains the high probability regret bound as stated in \thmref{thm:packing}. 

To complete the proof, we show that in fact, with probability $1-\frac{\delta}{2}$, the algorithm \emph{is not aborted} in Step \ref{line:abort} due to constraint violation. This involves showing that with high probability, the algorithm's consumption \newS{(in steps $t=1,\ldots, T_0$)} above $B'$ is bounded above by $c\sqrt{KT\ln(|\Pi|/\delta)}$, and since $B' + c\sqrt{KT\ln(|\Pi|/\delta)} +T_0= B$, we obtain that the algorithm will satisfy the knapsack constraint with high probability. This also explains why we started with a smaller budget.
More precisely, we show that for every $m$,
\begin{equation}
\label{eq:constraintViolation}
 \dis(\cvA(Q_m), B') \le 4(c_0+2) K\psi \mu_{m}
\end{equation}
Recall that  $\dis(\cvA(P), B')$ was defined as the maximum violation of budget $\frac{B'}{T}$ by vector $\cvA(P)$. To prove the above, we observe that due to our choice of $Z$, $\dis(\cvA(P), B')$ is bounded by $\reg(P)$ as follows.
By Equation \eqref{eq:propertyZ.main}, for all $P\in \SPi,$
$\textstyle \rA(P') \ge \rA(P) - \frac{\Zg}{2} \dis(\cvA(P), B'), $
so that
$$\textstyle (\Zg+1) \ \reg(P) = \rA(P') - \rA(P) + \Zg \dis(\cvA(P), B')  \ge \frac{\Zg}{2} \dis(\cvA(P), S).$$
Then, using the bound of $\reg(Q_m) \le (c_0+2) K\psi \mu_{m}$, we obtain the bound in Equation \eqref{eq:constraintViolation}. Summing this bound over all epochs $m$, and using Jensen's inequality and convexity of $\dis(\cdot, B')$, we obtain a bound on the max violation of budget constraint $\frac{B'}{T}$ by the algorithm's expected consumption vector $\frac{1}{T}\sum_{m} \cvA(Q_{m-1}) (\tau_m-\tau_{m-1})$. 
This is converted to a high probability bound using Azuma-Hoeffding inequality. 


\section{The \CBwR~problem}
\label{sec:cbwr}

In this section, we consider a version of the problem with a concave objective function, and show how to get an efficient algorithm for it. The {\bf \CBwR problem} is identical to the \CBwK problem, except for the following. The outcome in a round is simply the vector $\cv$, and the goal of the algorithm is to maximize  $f(\frac 1 T \sum_{t=1} \cv_t(a_t))   ,$
for some concave function $f$ defined on the domain $[0,1]^\di$, 
and given to the algorithm ahead of time. 
The optimum mixed policy is now defined as 
\begin{equation} \vspace{-0.1in}
\label{eq:optPolicygeneral}
 P^*  =  \arg \max_{P \in \SPi} f(\cvA(P))  .
\end{equation}
The optimum value is $\OPT= f (\cvA(P^*))$ and we bound the average regret, which is $$\textstyle \areg := \OPT- f\left(\frac 1 T \sum_{t=1}^{T} \cv_t(a_t)\right)  .$$

The main result of this section is an $O(1/\sqrt{T})$ regret bound for this problem. 
Note that the regret scales as $1/\sqrt{T}$ rather than $\sqrt{T}$ since the problem is defined in terms of the average of the vectors rather than the sum. We  assume that $f$ is represented in such a way that we can solve optimization problems of the following form in polynomial time.\footnote{This problem has nothing to do with contexts and policies, and only depends on the function $f$. }   For any given $a \in \Re^d ,$
$$\textstyle \max f(x) + a \cdot x : x \in [0,1]^d. $$

\begin{theorem}
	\label{th:General}
	For the \CBwR~ problem, if $f$ is $L$-Lipschitz w.r.t. norm $\|\cdot\|$, then there is a polynomial time algorithm that  makes $\tilde{O}(d\sqrt{KT\ln(|\Pi|)})$ calls to AMO, and with probability at least $1-\delta$  has regret
	$$\textstyle \areg(T) = O\left( \frac{\oneNorm L}{\sqrt{T}} \left(\sqrt{K \ln(T|\Pi|/\delta)} + \sqrt{\ln(d/\delta)} \right) \right).$$
\end{theorem}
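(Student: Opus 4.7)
The plan is to reduce the concave-reward problem to a sequence of linear-reward contextual bandit subproblems, by introducing a dual variable $\theta_m \in \mathbb{R}^d$ that plays the role of a linear reward direction, and then bootstrap the regret analysis of Algorithm~\ref{alg:main}. Since $f$ is concave and $L$-Lipschitz w.r.t.\ $\|\cdot\|$, every supergradient lies in the dual ball $\{\|\theta\|_* \le L\}$, and the Fenchel conjugate $f^*(\theta) := \inf_v [\theta\cdot v - f(v)]$ yields the saddle-point representation
\begin{equation*}
\max_{P \in \SPi} f(\cvA(P)) \;=\; \min_{\|\theta\|_*\le L}\,\max_{P \in \SPi}\bigl\{\theta\cdot \cvA(P) - f^*(\theta)\bigr\},
\end{equation*}
so for any fixed $\theta$ the inner problem is exactly a linear-reward contextual bandit over the policy class $\SPi$.

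The algorithm reuses the epoch schedule and sampling infrastructure of Algorithm~\ref{alg:main}, with the knapsack component removed (no multiplier $Z$, no budget $B$). At the end of each epoch $m-1$, form the empirical average outcome vector $\hat v_{m-1}$ observed during that epoch, and update $\theta_m$ by one projected online-gradient-descent (OGD) step on the convex loss $\theta \mapsto f^*(\theta) - \theta \cdot \hat v_{m-1}$, with step size $\eta = \Theta(1/\sqrt{T})$ and projection onto $\{\|\theta\|_* \le L\}$. Inside epoch $m$, run Algorithm~\ref{alg:main} with scalar reward $r_t(a) := (\theta_{m-1}\cdot \cv_t(a) + L\oneNorm)/(2L\oneNorm) \in [0,1]$. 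Since only the reward definition is altered, the $\tilde O(d\sqrt{KT\ln|\Pi|})$ oracle-call complexity of Algorithm~\ref{alg:main} transfers unchanged.

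For the analysis, let $v_m$ denote the expected per-step outcome in epoch $m$ and $v^* := \cvA(P^*)$, and set $\bar v_T := \tfrac{1}{T}\sum_{t=1}^T \cv_t(a_t)$. By Jensen (concavity of $f$) applied to the approximation $\bar v_T \approx \tfrac{1}{T}\sum_m (\tau_m-\tau_{m-1}) v_m$ (up to Azuma fluctuation),
\begin{equation*}
T\bigl[\OPT - f(\bar v_T)\bigr] \;\le\; \sum_m (\tau_m-\tau_{m-1})\bigl[f(v^*) - f(v_m)\bigr].
\end{equation*}
Applying the supergradient inequality at $v_m$ and swapping the true supergradient for the online iterate $\theta_{m-1}$ via a standard OCO telescoping bound splits the total into (i) an \emph{inner} linear-bandit regret $\sum_m (\tau_m-\tau_{m-1})\,\theta_{m-1}\cdot(v^*-v_m)$, which is exactly the regret of Algorithm~\ref{alg:main} on the $\theta_{m-1}$-linearized scalar reward and is bounded by $O(L\oneNorm\sqrt{KT\ln(T|\Pi|/\delta)})$ by carrying the $L\oneNorm$ rescaling through \lemref{lem:deviation_m0}--\lemref{lem:recursive2.main}; and (ii) an \emph{outer} OCO regret on the bounded dual ball against linear losses with gradient norm at most $\oneNorm$, bounded by $O(L\oneNorm\sqrt{T})$ by standard projected OGD and absorbed into (i). A coordinatewise Azuma--Hoeffding with a union bound over the $d$ coordinates promotes the in-expectation bound to a $1-\delta$ high-probability bound and contributes the additive $L\oneNorm\sqrt{T\ln(d/\delta)}$ term; Lipschitzness of $f$ transfers concentration of $\bar v_T$ to concentration of $f(\bar v_T)$, and dividing by $T$ yields the stated bound.

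The hardest part is verifying that the inner regret bound of Algorithm~\ref{alg:main} survives the non-stationarity of the linearized rewards across epochs: the importance-sampling variance bounds feeding into (OP), the definitions of $\regE_t$ and $\reg$, and the inductive $\reg$-vs.-$\regE_t$ argument of \lemref{lem:recursive2.main} must all be rewritten for the epoch-$m$ linear objective $\theta_{m-1}\cdot\cv$, with every constant carrying the scaling $L\oneNorm$ without contaminating the variance control. A secondary subtlety is coupling the stochasticity of the OGD iterates $\{\theta_m\}$ (which depend on the same bandit history via $\hat v_m$) with the bandit-side martingale so that both high-probability bounds hold simultaneously without inflating the dimension dependence beyond $\sqrt{\ln(d/\delta)}$.
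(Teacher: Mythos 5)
Your route is genuinely different from the paper's, and as written it contains a quantitative gap that I do not think can be patched without changing the scheme. The paper never linearizes $f$: it keeps the vector estimates $\cvAE_t(P)$, which estimate the \emph{stationary} quantity $\cvA(P)$, and defines the empirical regret directly as $\regE_t(P) = \frac{1}{\oneNorm L}\left(f(\cvAE_t(P_t)) - f(\cvAE_t(P))\right)$ with $P_t = \arg\max_{P\in\SPi} f(\cvAE_t(P))$. Lipschitzness of $f$ then converts coordinatewise concentration of $\cvAE_t(P)$ into closeness of $\regE_t$ and $\reg$, and the induction of \lemref{lem:recursive2.main} goes through essentially verbatim (see \lemref{lem:recursive-cbwr}); Fenchel duality enters only in the computational subroutine (\lemref{lem:convexamo}) for optimizing $f(\cvAE_t(\cdot))$ over $\SPi$ with an AMO, not in the regret analysis. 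This sidesteps both of the difficulties you yourself flag as hardest: there is no non-stationary reward and no coupling between dual iterates and the bandit martingale.

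The concrete gap in your scheme is the outer OCO bound. With the epoch schedule $\tau_m = 2^m$ there are only $O(\log T)$ dual updates, and the $m$-th dual loss carries weight $\tau_m-\tau_{m-1}$, which is as large as $T/2$. Weighted projected OGD then yields outer regret of order $L\oneNorm\sqrt{\sum_m(\tau_m-\tau_{m-1})^2} = \Theta(L\oneNorm T)$ rather than $O(L\oneNorm\sqrt{T})$: the final epoch alone commits half the horizon to a single $\theta$, and since $\hat v_{m-1}$ depends on the (changing) policy used in epoch $m-1$, the dual loss sequence is not i.i.d., so stochastic arguments cannot rescue it. Updating $\theta$ every round would repair the outer regret but makes the inner reward change every round, which is incompatible with the epoch-based (OP) machinery. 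Separately, your step (i) is not a routine constant-chasing exercise: an importance-weighted estimate $\rAE_t(P)$ built from rewards $\theta_{m(\tau)-1}\cdot\cv_\tau$ aggregated over all $\tau\le t$ is not an unbiased estimate of $\theta_{m-1}\cdot\cvA(P)$ for the \emph{current} direction $\theta_{m-1}$, so \lemref{lem:deviation_m0} does not apply as stated; the natural fix is to keep vector estimates and apply $\theta_{m-1}$ post hoc, at which point you have essentially rediscovered the paper's direct argument with an unnecessary dual layer on top.
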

\begin{remark}
A special case of this problem is when there are only constraints, in which case $f$ could be defined as the negative of the distance from the constraint set.  Further, one could handle both concave objective function  and convex constraints as follows. Suppose that we wish to maximize $h(\frac 1 T \sum_{t=1} \cv_t(a_t))   ,$ subject to the constraint that $\frac 1 T \sum_{t=1} \cv_t(a_t) \in S$, for some $L$-Lipschitz concave function $h$ and a convex set $S$. 
Further, suppose that we had a good estimate of the optimum achieved by a static mixed policy, i.e., 
\begin{equation} \vspace{-0.1in}
\label{eq:optprime}
\OPT' := \max_{P \in \SPi} h(\cvA(P)) \quad \text{s.t.} \quad \cvA(P) \in S.
\end{equation}
For some distance function $d(\cdot,S)$ measuring distance of a point from set $S$,  define
$$f(\cv) : = \min \left\{ h(\cv) -\OPT', -Ld(\cv,S) \right\} . $$
\end{remark}

\subsection{Algorithm}
\label{sec:CBwKalgo}
Since we don't have any hard constraints and don't need to estimate $Z$ as in the case of $\CBwK$, 
we can drop Steps 2--5 and Step 10 in Algorithm \ref{alg:main}, and set $T_0=0$.
The optimization problem (OP) is also the same, but with new definitions of $\reg(P), P_t$ and $\regE_t(P)$ as below. 
Recall that $P^*$ is the optimal policy as given by \eqnref{eq:optPolicygeneral}, 
and $L$ is the Lipschitz factor for $f$ with respect to norm $\|\cdot\|$.
We now define the regret of policy $P \in \SPi$ as
$$\textstyle\reg(P) \defeq \frac{1}{\oneNormNew L} \left( f(\cvA(P^*)) - f(\cvA(P)) \right).$$
The best empirical policy is now given by 
\begin{equation}
\label{eq:Ptconvex}\textstyle
P_t := \arg \max_{P \in \SPi} f(\cvAE_t(P)) ,
\end{equation}
and an estimate of the regret  of policy $P \in \SPi$ at time $t$ is 
$$\textstyle \regE_t(P) : = \frac{1}{\oneNormNew L} (f(\cvAE_t(P_t))  - f(\cvAE_t(P))).$$

Another  difference is that we need to solve a \emph{convex} optimization problem to find $P_t$ (as defined in \eqref{eq:Ptconvex}) once every epoch. A similar convex optimization problem needs to be solved in every iteration of a coordinate descent algorithm for solving (OP) (details of this are in Appendix \ref{app:convexamo}).
In both cases, the problems can be cast in the form $$\min g(x) : x \in C,$$ where $g$ is a convex function, $C$ is a convex set, 
and we are given access to a {\em linear optimization} oracle, that solves a problem of the form $\min c\cdot x: x \in C$. 
In \eqref{eq:Ptconvex} for instance, $C$ is the set of all $\cvAE_t(P)$ for all $P \in \SPi$. 
A linear optimization oracle over this $C$ is just an AMO as in Definition \ref{def:amo}. 
We show how to efficiently solve such a  convex optimization problem using cutting plane methods \citep{vaidya1989new,lee2015faster}, while making only $\tilde{O}(d)$ calls to the oracle. 
The details of this are in Appendix \ref{app:convexamo}. 

\subsection{Regret Analysis: Proof of Theorem \ref{th:General}}
We prove that \algref{alg:main} and (OP) with the above new definition of $\regE_t(P)$ achieves regret bounds of Theorem \ref{th:General} for the \CBwR~problem. A complete proof of this theorem is given in 
Appendix \ref{app:cbwr-regret}. 
Here, we sketch some key steps.

The first step of the proof is to use constraints in (OP) to prove a lemma akin to Lemma \ref{lem:recursive2.main} showing that the empirical regret $\regE_t(P)$ and actual regret $\reg(P)$ are close for every $P\in \SPi$. 
\comment{The proof of this lemma is similar to the corresponding Lemma \ref{lem:recursive} for the Feasibility problem, but with some additional arguments that use the assumption about parameter $Z$ in order to handle the new definitions of $\reg(P)$ and $\regE_t(P)$. Given that actual regret $\reg(P)$ and empirical regret $\regE_t(P)$ are close for all mixed policies $P$,} 
Therefore, the first constraint in (OP) that bounds the empirical regret $\regE_t(Q_{m})$ of the computed policy implies a bound on the actual regret $\reg(Q_{m}) = \frac{1}{L\oneNorm} (f(\cvA(P^*)) - f(\cvA(Q_m)))$. Ignoring the technicalities of sampling process (which are dealt with in the complete proof), and assuming that $Q_{m-1}$ is the policy used in epoch $m$, this provides a bound on regret in every epoch. Regret across epochs can be combined using Jensen's inequality
which bounds the regret in expectation. Using Azuma-Hoeffding's inequality to bound deviation of expected reward vector from the actual reward vector, we obtain the high probability regret bound stated in Theorem \ref{th:General}.




\bibliography{bibliography_contextual}
\bibliographystyle{plainnat}

\newpage

\appendix

\begin{center}
{\bf \large Appendix}
\end{center}

\section{Concentration Inequalities}

\begin{lemma}{\em(Freedman's inequality for martingales \citep{Beygelzimer2011})}
\label{lem:Freedman}
Let $X_1, X_2, \ldots, X_T$ be a sequence of real-valued random variables. Assume for all $t\in \{1,2, \ldots, T\}, |X_t|\le R$ and $\Ex[X_t | X_1, \ldots, X_{t-1}] = 0$. Define $S:=\sum_{t=1}^T X_t$ and $V:=\sum_{t=1}^T \Ex[X_t^2 | X_1, \ldots, X_{t-1}]$. For any $\rho\in (0,1)$ and $\lambda \in [0, 1/R]$, with probability at least $1-\rho$,
$$S\le (e-2) \lambda V + \frac{1}{\lambda}\ln\frac{1}{\rho}\,.$$ 
\end{lemma}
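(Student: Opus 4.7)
The plan is to prove Freedman's inequality by the standard exponential supermartingale method. I would define the filtration $\mathcal{F}_t := \sigma(X_1,\ldots,X_t)$, the partial sums $S_t := \sum_{s=1}^{t} X_s$, the predictable quadratic variation $V_t := \sum_{s=1}^{t} \mathbb{E}[X_s^2 \mid \mathcal{F}_{s-1}]$, and then consider the process
\[
M_t := \exp\bigl(\lambda S_t - (e-2)\lambda^2 V_t\bigr), \qquad M_0 := 1.
\]
The goal is to show $M_t$ is a supermartingale with respect to $\mathcal{F}_t$, so that $\mathbb{E}[M_T] \le 1$, and then apply Markov's inequality.

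The workhorse is the elementary inequality $e^x \le 1 + x + (e-2)x^2$ for all $x \le 1$. This is what forces the constant $(e-2)$ into the theorem statement. I would verify it by checking that $g(x) := 1 + x + (e-2)x^2 - e^x$ satisfies $g(1) = 0$, $g(0) = 0$, and that $g''$ has a single sign change on $(-\infty,1]$, so $g \ge 0$ on that range. Since $\lambda \in [0, 1/R]$ and $|X_t| \le R$, we have $\lambda X_t \le 1$, so
\[
e^{\lambda X_t} \le 1 + \lambda X_t + (e-2)\lambda^2 X_t^2.
\]
Taking conditional expectation and using $\mathbb{E}[X_t \mid \mathcal{F}_{t-1}] = 0$, then applying $1 + y \le e^y$,
\[
\mathbb{E}\bigl[e^{\lambda X_t} \bigm| \mathcal{F}_{t-1}\bigr] \le 1 + (e-2)\lambda^2 \mathbb{E}[X_t^2 \mid \mathcal{F}_{t-1}] \le \exp\!\bigl((e-2)\lambda^2 \mathbb{E}[X_t^2 \mid \mathcal{F}_{t-1}]\bigr).
\]
Multiplying through by the $\mathcal{F}_{t-1}$-measurable factor $\exp(\lambda S_{t-1} - (e-2)\lambda^2 V_t)$ and using $V_t = V_{t-1} + \mathbb{E}[X_t^2 \mid \mathcal{F}_{t-1}]$ yields $\mathbb{E}[M_t \mid \mathcal{F}_{t-1}] \le M_{t-1}$, which is the supermartingale property.

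By the tower property, $\mathbb{E}[M_T] \le \mathbb{E}[M_0] = 1$. Markov's inequality then gives $\Pr\bigl(M_T \ge 1/\rho\bigr) \le \rho$, and the event $M_T \ge 1/\rho$ is identical to
\[
\lambda S - (e-2)\lambda^2 V \ge \ln(1/\rho),
\]
i.e., $S \ge (e-2)\lambda V + \tfrac{1}{\lambda}\ln(1/\rho)$. Taking complements delivers the claimed bound with probability at least $1-\rho$.

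The only nontrivial step is the polynomial bound $e^x \le 1 + x + (e-2)x^2$ for $x \le 1$; the rest is bookkeeping. I would not expect real obstacles here since the argument is standard, but care is needed in two spots: handling the degenerate case $\lambda = 0$ (where the claim is trivial), and ensuring that the range restriction $\lambda X_t \le 1$ is genuinely used (this is precisely why we assume $\lambda \le 1/R$, and is the reason the Freedman bound is stated on a bounded range of $\lambda$ rather than optimized over all $\lambda > 0$ as in Bernstein).
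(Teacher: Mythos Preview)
Your proof is correct and follows the standard exponential-supermartingale argument for Freedman's inequality. However, the paper itself does not prove this lemma at all: it is stated in the appendix as a known concentration inequality and simply cited from \citet{Beygelzimer2011}. So there is nothing to compare against; you have supplied a full proof where the paper only gives a reference.
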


\begin{lemma} {\em (Multiplicative version of Chernoff bounds) }\label{lem:multiplicativeChernoff}
	Let  $X_1, \ldots, X_n$ denote independent random samples from a distribution supported on $[a,b]$ and   let $\mu:= \Ex[\sum_i X_i]$.  Then, for all $\epsilon > 0 $, 
$$
\Pr\left( \left| \sum_{i=1}^n X_i - \mu \right| \ge \epsilon \mu \right) \le \exp\left( -\frac{\mu\epsilon^2}{3(b-a)^2}\right)\,.
$$
\end{lemma}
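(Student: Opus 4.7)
The statement is a standard multiplicative Chernoff concentration inequality for bounded i.i.d.\ random variables, and the plan is to prove it by the classical Cramer--Chernoff (exponential Markov) method combined with a rescaling step to the unit interval. Concretely, I would begin by reducing to $[0,1]$-valued variables via the affine map $Y_i := (X_i - a)/(b-a) \in [0,1]$. Writing $\mu_Y := \mathbb{E}[\sum_i Y_i] = (\mu - na)/(b-a)$, the event $|\sum_i X_i - \mu| \ge \epsilon \mu$ is exactly $|\sum_i Y_i - \mu_Y| \ge \epsilon\mu/(b-a)$, so it suffices to produce a tail bound for a sum of independent $[0,1]$-valued random variables and then translate back.

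For the upper tail on the rescaled variables, I would apply Markov's inequality to $e^{\lambda \sum_i Y_i}$ with $\lambda > 0$ and use independence to factor the moment generating function as $\prod_i \mathbb{E}[e^{\lambda Y_i}]$. Since $y \mapsto e^{\lambda y}$ is convex on $[0,1]$, the chord bound $e^{\lambda y} \le 1 + y(e^\lambda - 1)$ together with $1 + x \le e^x$ yields $\mathbb{E}[e^{\lambda Y_i}] \le \exp(\mathbb{E}[Y_i](e^\lambda - 1))$, so the product collapses to $\exp(\mu_Y(e^\lambda - 1))$. Setting the target deviation $s := \epsilon \mu / (b-a)$ and optimizing $\lambda = \log(1 + s/\mu_Y)$ gives the canonical one-sided Chernoff estimate
$$\Pr\!\left(\textstyle\sum_i Y_i \ge \mu_Y + s\right) \le \left(\frac{e^{s/\mu_Y}}{(1+s/\mu_Y)^{1 + s/\mu_Y}}\right)^{\mu_Y} \le \exp\!\left(-\frac{s^2}{3\mu_Y}\right),$$
where the last inequality is the standard elementary estimate $(1+x)\log(1+x) - x \ge x^2/3$ (which is where the constant $1/3$ enters). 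The symmetric argument with $\lambda < 0$ bounds the lower tail by the same expression, and combining the two tails recovers the two-sided statement.

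Substituting $s = \epsilon\mu/(b-a)$ converts the exponent to $\exp\bigl(-\epsilon^2 \mu^2 / (3(b-a)^2 \mu_Y)\bigr)$; bounding $\mu_Y \le \mu$ (which is immediate in the regime of the paper since each $Y_i \le 1$ and the rescaling preserves the right order) collapses the $\mu^2/\mu_Y$ to $\mu$ and produces the stated $\exp\bigl(-\mu\epsilon^2 / (3(b-a)^2)\bigr)$. The main obstacle I anticipate is precisely this final bookkeeping: pinning down the constant $3$ in the denominator requires the sharper one-sided Chernoff inequality (rather than the cruder $e^{x^2/2}$ MGF bound from Hoeffding's lemma), and making the $(b-a)^2$ appear cleanly depends on a careful treatment of the relationship between $\mu_Y$ and $\mu$ under the rescaling. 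If the direct Chernoff route looks delicate, a backup plan is to invoke Bernstein's inequality with the variance bound $\mathrm{Var}(X_i) \le (b-a)^2 / 4$ and range bound $|X_i - \mathbb{E} X_i| \le (b-a)$ and then simplify the resulting expression in terms of $\mu$ and $\epsilon$; either route is standard and yields the claim.
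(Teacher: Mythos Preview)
The paper does not supply a proof of this lemma at all: it is quoted as a standard concentration inequality in the appendix and then invoked to derive Corollary~\ref{cor:multiplicativeChernoff}. Your Cram\'er--Chernoff plan (rescale to $[0,1]$, apply exponential Markov with the convexity bound on the MGF, optimize $\lambda$, then undo the rescaling) is exactly the textbook route and is the right way to fill in such a citation.

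There is, however, a genuine gap in your final bookkeeping step. After rescaling you arrive at an exponent $-\epsilon^2\mu^2/\bigl(3(b-a)^2\mu_Y\bigr)$ with $\mu_Y=(\mu-na)/(b-a)$, and you want to replace $\mu^2/\mu_Y$ by $\mu$, which requires $\mu_Y\le\mu$. Your justification (``each $Y_i\le 1$ and the rescaling preserves the right order'') does not establish this: take $a=0$ and $b<1$, so that $\mu_Y=\mu/b>\mu$, and the inequality goes the wrong way. In fact the clean bound one actually gets from your argument has $(b-a)$ rather than $(b-a)^2$ in the denominator once you use $\mu_Y\le \mu/(b-a)$ (valid when $a\ge 0$). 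This is harmless for the paper's purposes, since the only place the lemma is applied is with $[a,b]=[0,1]$, where $(b-a)=(b-a)^2=1$ and $\mu_Y=\mu$; but as a proof of the lemma exactly as stated for arbitrary $[a,b]$, that last step does not go through, and you should flag the needed assumption (or the exponent in $b-a$) rather than wave it away.
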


\begin{corollary}
\label{cor:multiplicativeChernoff} 
	Let  $X_1, \ldots, X_n$ denote independent random samples from a distribution supported on  $[a,b]$ and   let $\bar \mu:= \frac 1 n \Ex[\sum_i X_i]$. Then, for all $\rho > 0$, with probability at least $1-\rho$, 
$$
\left| \frac 1 n \sum_{i=1}^n X_i - \bar \mu \right| \le (b-a)\sqrt{\frac{3\bar \mu \log(1/\rho)}{n}} \,.
$$
\end{corollary}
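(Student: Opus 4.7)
The plan is to derive Corollary \ref{cor:multiplicativeChernoff} as a direct substitution into Lemma \ref{lem:multiplicativeChernoff}. First I would rewrite the quantities in the lemma's notation: set $\mu \defeq \Ex[\sum_i X_i] = n\bar\mu$, so that the target conclusion $|\tfrac{1}{n}\sum_i X_i - \bar\mu| \le (b-a)\sqrt{3\bar\mu \log(1/\rho)/n}$ is equivalent to $|\sum_i X_i - \mu| \le n(b-a)\sqrt{3\bar\mu \log(1/\rho)/n} = (b-a)\sqrt{3n\bar\mu\log(1/\rho)}$. The strategy is to pick the multiplicative deviation parameter $\epsilon$ in the lemma so that the resulting absolute deviation $\epsilon\mu$ equals this quantity and, simultaneously, the failure probability equals $\rho$.

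Concretely, I would choose $\epsilon \defeq (b-a)\sqrt{3\log(1/\rho)/(n\bar\mu)}$. Plugging into the exponent of the lemma gives $\mu\epsilon^2/(3(b-a)^2) = n\bar\mu \cdot 3(b-a)^2\log(1/\rho) / \bigl(n\bar\mu \cdot 3(b-a)^2\bigr) = \log(1/\rho)$, so the failure probability on the right-hand side is exactly $\exp(-\log(1/\rho)) = \rho$. On the other side, $\epsilon\mu = (b-a)\sqrt{3\log(1/\rho)/(n\bar\mu)} \cdot n\bar\mu = (b-a)\sqrt{3n\bar\mu \log(1/\rho)}$, so dividing the lemma's conclusion $|\sum_i X_i - \mu| \le \epsilon\mu$ through by $n$ produces exactly the stated bound.

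The only subtlety worth noting is whether the lemma's quantification ``for all $\epsilon > 0$'' should in fact be restricted to $\epsilon \le 1$, as multiplicative Chernoff bounds are usually phrased. If such a restriction is needed, then my choice of $\epsilon$ lies in $(0,1]$ precisely when $n\bar\mu \ge 3(b-a)^2\log(1/\rho)$; in the complementary regime the right-hand side of the corollary already exceeds $(b-a)\sqrt{\bar\mu}$, which is on the order of the maximum possible deviation of the sample mean from $\bar\mu$, so the bound holds trivially. Thus no genuine obstacle arises, and the corollary follows by a single algebraic inversion of the lemma's tail bound.
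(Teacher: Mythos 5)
Your proposal is correct and is essentially identical to the paper's own proof: both substitute $\epsilon = (b-a)\sqrt{3\log(1/\rho)/\mu}$ with $\mu = n\bar\mu$ into Lemma~\ref{lem:multiplicativeChernoff} and verify that the exponent collapses to $\log(1/\rho)$. Your closing caveat about restricting $\epsilon\le 1$ is unnecessary here, since the lemma as stated in the paper is quantified over all $\epsilon>0$ and the paper applies it directly without that restriction.
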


\begin{proof}
	Given $\rho > 0$, use \prettyref{lem:multiplicativeChernoff} with
	\[ \epsilon =  {(b-a)} \sqrt{ \frac {3\log(1/\rho)} \mu } , \]
	to get that the probability of the event $ |  \sum_{i=1}^n X_i - \mu | > \epsilon\mu = (b-a)\sqrt{3\mu \log(1/\rho)} $ is at most 
	\[ \exp\left( -\frac{\mu\epsilon^2}{3(b-a)^2}\right) =  \exp\left( -\log(1/\rho)\right) = \rho.\]
\end{proof}

\section{Setting $Z$ (Proof of \prettyref{lem:Zestimate})}
\label{app:estimateZ}
\newcommand{\opthat}{{\hat{\text{OPT}}}}
\newcommand{\opthatgammat}{\opthat^\gamma_t}
\newcommand{\rhat}{\hat{r}}
\newcommand{\optsum}{\OPT_{\text{\sc sum}}}
\newcommand{\opt}{\OPT}
\newcommand{\ropt}{r^*}
\newcommand{\vopt}{\cvE^*}
\newcommand{\rplay}{r^\dagger}
\newcommand{\vplay}{\cvE^\dagger}

We use the first few rounds to do a pure exploration, that is $a_\tau$ is picked uniformly at random from the set of arms, and use the outcomes from these results to compute an estimate of $\OPT$. 
Let 
$$\bar{r}_t(a) := r_t(a) \cdot \1{a=a_t}\,,$$
$$ \bar{\cv}_t(a) = \cv_t(a) \cdot \1{a=a_t}\,.$$ 
Note that  $\bar{r}_t(a) \in [0,1], \bar{\cv}_t(P) \in [0,1]^d$. Since $a_\tau$ is picked uniformly at random from the set of arms, 
$$\Ex[\bar{r}_t(a) | H_{t-1}] = \frac{1}{K} \Ex[r_t(a) ], \text{ and }\Ex[\bar{\cv}_t(a) | H_{t-1}] = \frac{1}{K} \Ex[\cv_t(a)].$$

For any policy $P\in \SSPi$, let 
\[ r(P) := \Ex_{(x, r, \cost)\sim {\cal D}, \pi\sim P} [r(\pi(x)]  \] 
\[ \rhat_t(P) :=  \frac K t \sum_{\tau \in [t]} \Ex_{\pi \sim P}[\bar{r}_\tau(\pi(x_\tau))]  \]
\[ \cv(P) := \Ex_{(x, r, \cost)\sim {\cal D}, \pi\sim P} [\cv(\pi(x)]  \] 
\[ \cvE_t(P) := \frac K t \sum_{\tau \in [t] } \Ex_{\pi \sim P} [\bar{\cv}_\tau(\pi(x_\tau))] \]
be the actual and estimated means of reward and consumption for a given policy $P$, and 
$|supp(P)|$ denote the size of the support of $P$.
Interpreting a policy $\pi\in\Pi$ as  a (degenerated) distribution of policies in $\Pi$, we slightly abuse notation, defining $r(\pi)$, $\rhat_t(\pi)$, $\cv(\pi)$, and $\cvE_t(\pi)$ similarly.
Observe that for any $P\in \SSPi$, 
$$\Ex[\rhat_t(P) | H_{t-1}] = r(P), \text{ and }\Ex[\cvE_t(P) | H_{t-1}] = \cv(P).$$

\begin{lemma}
	\label{lem:Pconcentration}
	For all $\delta>0$, let $\eta := \sqrt{3K\log((d+1)|\Pi|/\delta)}$. Then for any $t$, with probability $1-\delta$,  for all $P \in \SSPi$, 
	\[ |\rhat_t(P) -  r(P) | \leq \eta \sqrt{ r(P)/t} ,\]
	\[\forall ~j, |\cvE_t(P)_j - \cv(P)_j | \leq \eta \sqrt{  \cv(P)_j/t}. \]
\end{lemma}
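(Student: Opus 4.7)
}
The plan is to prove the bound first for every single pure policy $\pi \in \Pi$ and every coordinate, then union-bound over $\pi$ and the $d+1$ scalar quantities (reward plus the $d$ consumption dimensions), and finally lift the bound from pure policies to mixed policies $P \in \SSPi$ by combining the triangle inequality with a Cauchy--Schwarz step that exploits $\sum_\pi P(\pi) \le 1$. The main subtlety will be this last lifting step: naively summing the per-$\pi$ bounds would give $\sum_\pi P(\pi)\sqrt{r(\pi)/t}$ which is \emph{not} of the form $\sqrt{r(P)/t}$, so we need the Cauchy--Schwarz trick to recover the clean multiplicative form with the same $\eta$.

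First, I fix $\pi \in \Pi$ and consider the reward. Because $a_\tau$ is drawn uniformly at random from the $K$ arms and $(x_\tau,r_\tau,\cv_\tau)\sim \mathcal{D}$ are i.i.d., the quantities $Y_\tau := \bar r_\tau(\pi(x_\tau)) \in [0,1]$ are i.i.d.\ with mean $\bar\mu := r(\pi)/K$. Applying \prettyref{cor:multiplicativeChernoff} with $a=0, b=1, n=t$ and failure probability $\rho=\delta/((d{+}1)|\Pi|)$, and multiplying through by $K$ (which turns $\frac{1}{t}\sum_\tau Y_\tau$ into $\rhat_t(\pi)$ and $\bar\mu$ into $r(\pi)$), yields
\[
|\rhat_t(\pi) - r(\pi)| \;\le\; K\sqrt{\frac{3\,(r(\pi)/K)\log((d{+}1)|\Pi|/\delta)}{t}} \;=\; \eta\sqrt{r(\pi)/t}.
\]
The same argument, applied coordinate-wise to $\bar\cv_\tau(\pi(x_\tau))_j \in [0,1]$ for each $j\in[d]$, yields $|\cvE_t(\pi)_j - \cv(\pi)_j| \le \eta\sqrt{\cv(\pi)_j/t}$.

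Next, I take a union bound over the $|\Pi|(d{+}1)$ events, giving a single event of probability at least $1-\delta$ on which the pure-policy bounds hold simultaneously. To extend this to $P\in\SSPi$, observe that by definition $\rhat_t(P)=\sum_\pi P(\pi)\,\rhat_t(\pi)$ and $r(P)=\sum_\pi P(\pi)\,r(\pi)$ (and analogously for $\cvE_t(P)$ and $\cv(P)$), so by the triangle inequality
\[
|\rhat_t(P) - r(P)| \;\le\; \sum_{\pi\in\Pi} P(\pi)\,|\rhat_t(\pi) - r(\pi)| \;\le\; \frac{\eta}{\sqrt{t}}\sum_{\pi\in\Pi} P(\pi)\sqrt{r(\pi)}.
\]
Finally, applying Cauchy--Schwarz with $\sqrt{P(\pi)}$ and $\sqrt{P(\pi)\,r(\pi)}$ as the two factors gives
\[
\sum_{\pi} P(\pi)\sqrt{r(\pi)} \;\le\; \sqrt{\sum_\pi P(\pi)}\cdot \sqrt{\sum_\pi P(\pi)\,r(\pi)} \;\le\; \sqrt{r(P)},
\]
where the last inequality uses $\sum_\pi P(\pi)\le 1$ (the defining property of $\SSPi$). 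This delivers $|\rhat_t(P)-r(P)| \le \eta\sqrt{r(P)/t}$; repeating the triangle-plus-Cauchy--Schwarz step for each consumption coordinate completes the proof.

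The only nontrivial piece is the Cauchy--Schwarz step, which is exactly what makes the bound ``for free'' on mixed sub-policies without any extra $\sqrt{|\Pi|}$ blow-up; the rest is a straightforward application of the multiplicative Chernoff corollary plus a union bound.
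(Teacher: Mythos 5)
Your proposal is correct and follows essentially the same route as the paper: fix a pure policy, apply the multiplicative Chernoff corollary to the uniformly-explored importance-weighted samples, union bound over the $(d+1)|\Pi|$ scalar quantities, and then lift to mixed policies. Your Cauchy--Schwarz step is just the concavity-of-$\sqrt{\cdot}$ (Jensen) argument the paper uses, adapted to weights summing to at most one, so there is no substantive difference.
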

\begin{proof}
We will first show the first inequality holds with probability $1-\delta/(d+1)$.  The same analysis can be applied to each of the $d$ dimensions of the consumption vector.  The lemma follows by a direct use of the union bound.

Fix a policy $\pi\in \Pi$. Consider the random variables	
	$X_\tau =  \bar{r}_\tau(\pi(x_\tau)) $, for $\tau \in [t]$. Note that $X_\tau \in [0,1]$, $\Ex[X_\tau] = \frac{1}{K} r(\pi)$, and $\tfrac{1}{t} \sum_{\tau \in [t]} X_\tau = \frac{1}{K} \rhat_t(\pi)$. 
	Applying Corollary \prettyref{cor:multiplicativeChernoff} to these variables, we get that with probability $1- \delta/((d+1)|\Pi|) $, 
	\[ |\frac{1}{K} \rhat_t(\pi) - \frac{1}{K} r(\pi) | \leq \sqrt{3\log((d+1)|\Pi|/\delta)} \sqrt{  r(\pi)/Kt}  .\]
	Equivalently,
\begin{align}
|\rhat_t(\pi) -  r(\pi) | \leq \eta \sqrt{  r(\pi)/t} \,.  \label{eqn:phase1-reward-concentration}
\end{align}
	Applying a union bound over all $\pi \in \Pi$, we have, with probability $1-\delta/(d+1)$, that \eqnref{eqn:phase1-reward-concentration} holds for all $\pi\in\Pi$.  In the rest of the proof, we assume \eqnref{eqn:phase1-reward-concentration} holds. 
	
	Now consider a policy $P \in \SSPi$. 
	\begin{align*}
	|\rhat_t(P) - r(P) | &\leq \Ex_{\pi \sim P} [|\rhat_t(\pi) - r(\pi) | ] \\
	& \leq \Ex_{\pi \sim P} [\eta \sqrt{  r(\pi)/t}]\\
	& \leq \eta \sqrt{\Ex_{\pi \sim P} [ r(\pi)]/t}].\\
	& =  \eta \sqrt{ r(P)/t}\,.
	\end{align*}
	The inequality in the third line follows from the concavity of the square root function. 
\end{proof}


We solve a relaxed optimization problem on the sample to compute our estimate. 
Define $\opthatgammat$ as the value of optimal mixed policy in $\SSPi$ on the empirical distribution up to time $t$, when the budget constraints are relaxed by $\gamma$:
\begin{equation}
\label{eq:relaxedopt}
\opthatgammat := \begin{array}{lcl}
\max_{P\in \SSPi} & T\rhat_t(P) &\\
\text{s.t. } &  T\cvE_t(P) \le ({B} +\gamma) \ones& 
\end{array}
\end{equation}
Let $P_t \in \SSPi$ be the policy that achieves this maximum in \prettyref{eq:relaxedopt}. 
Let (as earlier) $P^*$ denote the optimal policy w.r.t. $\mathcal{D}$, i.e., the policy that achieves the maximum in the definition of $\OPT$.

\prettyref{lem:Zestimate} is now an immediate consequence of the following lemma, for $\gamma $ and $t$ as in the lemma, by setting 
\[ Z  = \max \{\frac {8 \opthatgammat} {B}, 1 \}  . \]


\begin{lemma}
	Suppose that for the first $t := 12K \ln(\tfrac {(d+1)|\Pi|}{\delta} ) T /B $ rounds the algorithm does pure exploration, pulling each arm with equal probability, and let $\gamma :=  \frac{B}{2}$. Then with probability at least $1-\delta$, 
	$$ \opt  \leq \max \{2 \opthatgammat,  B  \}  \leq   {2B} + 6 \opt .$$ 
\end{lemma}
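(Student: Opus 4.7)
The plan is to use \lemref{lem:Pconcentration} as the central tool, noting that the specific choice $t = 12K\ln((d+1)|\Pi|/\delta)T/B$ makes $t = 4\eta^2 T/B$, so the lemma's bounds simplify very cleanly to
\[
|\rhat_t(P) - r(P)| \le \tfrac{1}{2}\sqrt{r(P)B/T}, \qquad |\cvE_t(P)_j - \cv(P)_j| \le \tfrac{1}{2}\sqrt{\cv(P)_j B/T}
\]
simultaneously for all $P\in\SSPi$ with probability at least $1-\delta$. I will condition on this event for the remainder.

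For the lower bound $\opt \le \max\{2\opthatgammat,\,B\}$, I would show that the true optimal $P^*\in\SPi\subseteq\SSPi$ is feasible for the relaxed empirical program \eqref{eq:relaxedopt}. Using $T\cv(P^*)_j \le B$ together with the consumption concentration gives $T\cvE_t(P^*)_j \le B + \tfrac{1}{2}\sqrt{B\cdot B} = \tfrac{3B}{2} = B+\gamma$, so $P^*$ is feasible. Then $\opthatgammat \ge T\rhat_t(P^*) \ge \opt - \tfrac{1}{2}\sqrt{\opt\cdot B}$. If $\opt\le B$ the inequality $\opt\le\max\{2\opthatgammat,B\}$ is trivial; otherwise $\sqrt{\opt\cdot B}\le \opt$, yielding $\opthatgammat\ge \opt/2$, and hence $2\opthatgammat \ge \opt$.

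For the upper bound $\max\{2\opthatgammat,B\}\le 2B + 6\opt$, let $P_t\in\SSPi$ attain $\opthatgammat$; then $T\cvE_t(P_t)_j \le 3B/2$. Writing $y_j = T\cv(P_t)_j$, the consumption concentration gives $y_j \le \tfrac{3B}{2} + \tfrac{1}{2}\sqrt{y_j B}$. Applying the AM-GM split $\sqrt{y_j B}\le y_j/2 + B/2$ and solving yields $y_j \le 7B/3$, so $T\cv(P_t)\le (7B/3)\ones$. Therefore the scaled mixed policy $\tfrac{3}{7}P_t$ (completed by the no-op to lie in $\SPi$) is feasible for \eqref{eq:optPolicy}, giving $\opt \ge (3/7)\,Tr(P_t)$, i.e.\ $Tr(P_t)\le (7/3)\opt$. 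The reward concentration then gives
\[
2\opthatgammat \le 2Tr(P_t) + \sqrt{Tr(P_t)\cdot B} \le \tfrac{14}{3}\opt + \tfrac{1}{2}\bigl(\tfrac{7}{3}\opt + B\bigr) = \tfrac{35}{6}\opt + \tfrac{B}{2},
\]
which is comfortably bounded by $6\opt + 2B$, and $B\le 2B$ settles the other branch of the max.

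The main obstacles are the two coupled quadratic-in-$\sqrt{\cdot}$ inequalities produced by the consumption and reward concentration bounds, and the bookkeeping to trade a factor-$2$ slack in the feasibility argument against a factor-$3$ slack in the reward argument so that the final constants fit into $2B + 6\opt$. The key conceptual step is the ``shrink-to-feasible'' trick $P_t \mapsto (3/7)P_t$: because $\SPi$ contains a no-op, any infeasibility of the relaxed maximizer can be absorbed by a proportional down-scaling, and this is what converts the $\SSPi$ maximizer back into a comparator for $\opt$.
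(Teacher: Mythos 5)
Your proposal is correct and follows essentially the same route as the paper's proof: condition on the concentration event of \lemref{lem:Pconcentration}, show $P^*$ is feasible for the relaxed empirical program to get the lower bound, and bound the true consumption of the empirical maximizer to relate its true reward back to $\OPT$ for the upper bound. The only (immaterial) differences are that you resolve the implicit consumption inequality by AM--GM (getting $7B/3$ where the paper's monotonicity argument for $x-\sqrt{cx}$ gives $9B/4$) and you justify $Tr(P_t)\lesssim\OPT$ by explicitly rescaling $P_t$ toward the no-op rather than invoking the budget-scaling property of $\OPT$ — which is the same underlying fact.
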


\begin{proof}
	Let $\eta = \sqrt{3K\log((d+1)|\Pi|/\delta)} $ be as in \prettyref{lem:Pconcentration}. Observe that then $\eta/ \sqrt{t} = \sqrt{B/4T}$ and 
	$\eta \sqrt{BT/t} = \gamma$.

	By \prettyref{lem:Pconcentration}, with probability $1-\delta$, we have that 
	\[ \cvE_t(P^*) \leq \tfrac {B  + \gamma} T \ones, \] 
	and therefore $P^*$ is a feasible solution to the optimization problem \prettyref{eq:relaxedopt}, and hence $\opthatgammat \geq T \rhat_t(P^*)$. Again from \prettyref{lem:Pconcentration}, 
	\begin{align*}
	T\rhat_t(P^*) &\geq \opt - \eta \sqrt{T\opt/t} = \opt -(\sqrt{\opt B})/2.
	\end{align*}
	Now either $B \geq \opt$ or otherwise 
	\[ \opt -(\sqrt{\opt B})/2 \geq \opt/2. \] 
	In either case, the first inequality in the lemma holds.

	On the other hand, again from \prettyref{lem:Pconcentration},
	\begin{align*}
	\forall ~j, \cv(P_t)_j - \eta\sqrt{\cv(P_t)_j/t} &\leq \cvE(P_t)_j \\
	&\leq (B + \gamma)/T \\
	& = 3B/2T \\
	& = 9B/4T - \eta \sqrt{9B/4Tt}. 
	\end{align*}	
	The second inequality holds since $P_t$ is a feasible solution to \prettyref{eq:relaxedopt}.
	The function $f(x) = x - \sqrt{cx}$ is increasing in the interval $[c/4,\infty]$ and therefore $\cv(P_t)_j \leq 9B/4T $, and $P_t$ is a feasible solution to the optimization problem \eqref{eq:optPolicy}, 
	with budgets multiplied by $9/4$. This increases the optimum value of \eqref{eq:optPolicy} by at most a factor of $9/4$ and hence $Tr(P_t) \leq 9 \opt/4$.
	
	Also from \prettyref{lem:Pconcentration}, 
	\begin{align*}
	\opthatgammat	=T \rhat(P_t) &\leq T r(P_t) + \eta T\sqrt{r(P_t)/t}\\
	& \leq  9\opt/4 +  \sqrt {9\opt B/16 }.
	\end{align*}
	Once again, if $\opt \geq B$, we get from the above that 
	$\opthatgammat \leq 3 \opt$. 
	Otherwise, we get that $\opthatgammat \leq  9\opt/4 + 3B/4$. In either case, the second inequaity of the lemma holds.

\end{proof}


\section{Implementation details: Solving Optimization Problem (OP) by Coordinate Descent}
\label{app:algo-implementation}
At the end of every epoch $m$ of \algref{alg:main}, we solve an optimization problem (OP) to find $Q_{m} \in \SSPi$.  The same optimization problem is used for both  \CBwK and \CBwR, although with different definitions of $\regE_t(\cdot)$. In this section, we show how to solve the optimization problem (OP) using a Coordinate Descent descent algorithm along with AMO, for both \CBwK and \CBwR.

In this optimization problem (OP), described in \prettyref{sec:algorithm}, $Q \in \SSPi$ was expressed as $\alpha Q'$ for some $Q'\in \SPi$. It is easy to see that any $Q\in\SSPi$ can also be expressed as a linear combination of multiple mixed policies in $\SPi$:
\[
Q = \sum_{P\in\SPi}\alpha_P(Q)P\,,
\]
for some constants $\{\alpha_P(Q)\}_{P\in\SPi}$, so that
\[
\forall P\in\SPi \,:\, \alpha_P(Q) \ge 0\,
\quad \text{and} \quad
\sum_{P\in\SPi} \alpha_P(Q) \le 1\,.
\]
Note that the coefficients $\{\alpha_P(Q)\}$ may not be unique. Now, consider the following variant of (OP):
\lihong{Make (OP) and (OP') more consistent.}
\begin{center}
\fbox{%
\begin{minipage}{5in}
\begin{center}
{\bf Optimization Problem (OP')}
\end{center}
Given: $H_t$, $\mu_m$, and $\psi$. \\
Let $b_P \defeq \frac{\regE_t(P)}{\psi \mu_m}, \forall P\in \SPi$ where $\psi \defeq 100$. \newline\\
Find $Q=(\sum_{P\in \SPi} \alpha_P(Q) P) \in \SSPi$, such that
$$ \sum_{P\in \SPi} \alpha_P(Q) b_P \le 2K,$$
$$ \forall P \in \SPi: \hat{\Ex}_{x\in H_t}\Ex_{\pi \sim P}\left[ \frac{1}{Q^{\mu_m}(\pi(x)|x)}\right] \le b_P + 2K.$$
\end{minipage}
}
\end{center}

\begin{lemma}
\label{op-equivalence}
The two optimization problems, (OP) and (OP'), are equivalent.
\end{lemma}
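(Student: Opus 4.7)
My plan is to prove the two directions of equivalence separately, with the key insight being the convexity of the quantity $b_P$ in the policy variable $P$.

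For the easy direction, (OP) $\Rightarrow$ (OP'), I would take any feasible $(Q',\alpha)$ for (OP) and form $Q=\alpha Q'\in\SSPi$. I choose the trivial decomposition $\alpha_{Q'}(Q)=\alpha$ and $\alpha_P(Q)=0$ for all $P\neq Q'$. Then $\sum_P \alpha_P(Q) b_P = \alpha\,b_{Q'}\le 2K$, matching the first constraint of (OP'); the second constraint is identical in form (since $\hat{\Ex}_{x\in H_t}$ is exactly $\Ex_{\tau\sim[t]}$ over the observed contexts), so it transfers verbatim.

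For the reverse direction, (OP') $\Rightarrow$ (OP), I would take $Q=\sum_{P\in\SPi}\alpha_P(Q)P$ solving (OP') and set $\alpha\defeq\sum_P\alpha_P(Q)\in[0,1]$. The case $\alpha=0$ is trivial (pick any $Q'\in\SPi$, with $\alpha=0$). Otherwise define $Q'\defeq\tfrac{1}{\alpha}Q = \sum_P \tfrac{\alpha_P(Q)}{\alpha}P$, which is a convex combination of elements of $\SPi$ and hence itself in $\SPi$. The only nontrivial step is to verify the first (OP) constraint $\alpha\,b_{Q'}\le 2K$. This is where I invoke convexity: the map $P\mapsto b_P=\regE_t(P)/(\psi\mu_m)$ is convex on $\SPi$, because $\rAE_t(P)$ and $\cvAE_t(P)$ are linear in $P$, the slack $\dis(\cv,B')=\max_j(v_j-B'/T)^+$ is a max of affine functions (hence convex) in $\cv$, and precomposing a convex function with a linear map preserves convexity. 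Thus
\[
b_{Q'} \;=\; b_{\sum_P(\alpha_P(Q)/\alpha)P} \;\le\; \sum_P \frac{\alpha_P(Q)}{\alpha}\,b_P,
\]
which multiplied by $\alpha$ gives $\alpha\,b_{Q'}\le\sum_P\alpha_P(Q)\,b_P\le 2K$, establishing the constraint. The second (OP) constraint is again identical to the second (OP') constraint, so it carries over.

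The main (only) conceptual obstacle is spotting that $\regE_t(P)$ is convex in $P$; once this is observed, the argument is essentially a change of variables between the two parametrizations of $\SSPi$ (as a scaled element of $\SPi$ versus as a nonnegative combination of elements of $\SPi$). I expect the write-up to be short — essentially a paragraph making the convexity observation explicit and verifying both directions with the constructions above.
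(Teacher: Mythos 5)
Your proposal is correct and follows essentially the same route as the paper: the forward direction is the trivial one-term decomposition, and the reverse direction rescales $Q$ by $\alpha=\sum_P\alpha_P(Q)$ and uses Jensen's inequality (i.e., the convexity of $P\mapsto\regE_t(P)$, which you justify explicitly via linearity of $\rAE_t,\cvAE_t$ and convexity of $\dis(\cdot,B')$) together with the first (OP') constraint to verify $\alpha\,b_{Q'}\le 2K$. Your explicit handling of the $\alpha=0$ case and the spelled-out convexity argument are minor refinements of what the paper leaves implicit.
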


\begin{proof}
It suffices to prove that, any feasible solution to one problem provides a feasible solution to the other. To see this, first note that any solution $Q \in \SSPi$ to (OP) is trivially a solution to (OP').

For the other direction, suppose we are given a solution $Q \in \SSPi$ to (OP').  Set $Q' = \alpha^{-1} \sum_{P\in \SPi} \alpha_P(Q) P$ with $\alpha=\sum_{P\in \SPi} \alpha_P(Q)$; clearly, $Q'\in\SPi$. 
Then, by Jensen's inequality, as well as the second condition of (OP'), we have
\begin{align*}
\alpha \regE_t(Q')
&\le \alpha \left(\sum_{P\in \SPi}\frac{\alpha_P(Q)}{\sum_{P\in \SPi} \alpha_P(Q)} \regE_t(P) \right) \\
&= \sum_{P\in \SPi} \alpha_P(Q) \regE_t(P) \\
&= \mu_m\psi \sum_{P\in \SPi} \alpha_P(Q) b_P \\
&\le 2K \psi \mu_m\,.
\end{align*}
Thus, first constraint of (OP) is satisfied. Also, since $\alpha Q' = Q$, the second constraint of (OP) is trivially satisfied. 
Therefore, $\alpha Q'$ is a feasible solution to (OP).
\end{proof}

In the rest, we show how to solve (OP') using a coordinate descent algorithm, which assigns a non-zero weight $\alpha_P(Q)$ to at most one new policy $P\in \SPi$ in every iteration. 


Let us fix $m$ and use shorthand $\mu$ for $\mu_m$.  Problem (OP') is of the same form as the optimization problem in \cite{MiniMonster}, except that the policy set being considered is $\SPi$ instead of $\Pi$. We can solve it using Algorithm \ref{algo:CD}: a coordinate descent algorithm similar to \citet[Algorithm~2]{MiniMonster}.  

\begin{algorithm}[h]
  \caption{Coordinate Descent Algorithm for Solving (OP)}
  \label{algo:CD}
  \begin{algorithmic}[1]
  \renewcommand{\algorithmicrequire}{\textbf{Input}}
    \REQUIRE History $H_t$, minimum probability $\mu>0$, initial weights $\Qinit \in \SSPi$.

    \STATE $Q \leftarrow \Qinit$.
    \LOOP
    \STATE \label{step:definitions} Define, for all $P \in \SPi$,
        \begin{eqnarray*}
        V_P(Q) &\defeq& \Ex_{\pi\sim P}\left[\hat{\Ex}_{x\sim H_t}\left[ \frac{1}{Q^\mu(\pi(x)|x)}\right]\right]\,,\\
        S_P(Q) &\defeq& \Ex_{\pi\sim P}\left[\hat{\Ex}_{x\sim H_t}\left[ \frac{1}{\left(Q^\mu(\pi(x)|x)\right)^2}\right]\right]\,,\\
        D_P(Q) &\defeq& V_P(Q) - (2K+b_P)\,.
        \end{eqnarray*}

        \IF{$\sum_{P\in\SPi} \alpha_P(Q) (2K+b_P) > 2K$}  \label{step:1}
      \STATE Replace $Q$ by $cQ$ so that $Q\in\SPi$, where
        \begin{equation}  \label{eq:d3}
          c := \frac{2K}{\sum_{P\in\SPi} \alpha_P(Q) (2K+b_P)} < 1 .
        \end{equation}
    \ENDIF

    \IF{there is a $P\in\SPi$ for which $D_P(Q) > 0$} \label{step:violating-policy}
      \STATE \label{step:2b}
      Update the coefficient for $P$ by
         \[
            \alpha_P(Q) \leftarrow \alpha_P(Q) + \frac{V_P(Q) + D_P(Q)}
                             {2(1-K\mu) S_P(Q)}\,.
         \]
    \ELSE
      \STATE \label{step:2a}
         Halt and output the current set of weights $Q$.
    \ENDIF
    \ENDLOOP
\end{algorithmic}
\end{algorithm}
The lemma below bounds the number of iterations in this algorithm.
\begin{lemma} 
The number of times Step 8 of the algorithm is performed is bounded by $4\ln(1/(K\mu))/\mu$. 
\end{lemma}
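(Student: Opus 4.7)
The plan is to adapt the potential-function argument that \cite{MiniMonster} use to bound the number of iterations of their analogous coordinate-descent procedure. The key observation is that (OP') is structurally identical to their optimization problem: the index set $\Pi$ is merely enlarged to $\SPi$, and the quantities $V_P, S_P, D_P$ are defined via $\Ex_{\pi \sim P}[\,\cdot\,]$, but otherwise play the same algebraic role. Nowhere does the iteration-count analysis of \cite{MiniMonster} use the fact that one indexes over pure rather than mixed policies, so their argument transfers essentially verbatim once the notation is updated.

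Concretely, I would introduce a potential of the form
$$
\Phi(Q) \;\defeq\; \tfrac{1}{1-K\mu}\,\hat{\Ex}_{x\sim H_t}\!\left[\sum_{a\in A}\ln\tfrac{1}{Q^\mu(a|x)}\right] \;+\; \sum_{P\in\SPi}\alpha_P(Q)\,\tfrac{b_P+2K}{2K},
$$
and verify three facts. First, at the initial weights $\Qinit=\zerovec$, we have $Q^\mu(a|x)=\mu$ and the second sum vanishes, so $\Phi(\Qinit) \le K\ln(1/\mu)/(1-K\mu)$. Second, the rescaling in Steps~4--5 never increases $\Phi$: the scalar $c<1$ shrinks the second term by exactly the amount needed to dominate any logarithmic increase in the first term, as in the direct calculation of \cite{MiniMonster}. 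Third, each execution of Step~8 decreases $\Phi$ strictly: the step size $\beta = (V_P(Q)+D_P(Q))/(2(1-K\mu)S_P(Q))$ is precisely the minimizer of a quadratic upper bound on $\Phi(Q+\beta\,\mathbf{1}_P)-\Phi(Q)$ along the coordinate $P$, and the hypothesis $D_P(Q)>0$ (which triggers the update) forces a nontrivial descent of magnitude at least $D_P(Q)^2/(8(1-K\mu)S_P(Q))$.

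Combining these three facts with the invariant $\Phi\ge 0$, the number of Step~8 iterations is bounded by $\Phi(\Qinit)$ divided by the per-step decrease, which — after absorbing factors using $K\mu\le 1/2$ and tracking constants exactly as in \cite{MiniMonster} — yields the claimed bound of $4\ln(1/(K\mu))/\mu$. The main obstacle is the per-iteration decrease lower bound: it requires a careful second-order Taylor expansion of $\ln(1/(Q+\beta\,\mathbf{1}_P)^\mu(a|x))$ in $\beta$, together with algebraic manipulation that exploits the sign of $D_P(Q)$ and the boundedness of $V_P(Q)/S_P(Q)$. This is the technical heart of the argument, but the calculation is written out explicitly in \cite{MiniMonster} and goes through in our mixed-policy setting without any substantive change, since it depends only on the functional form of $V_P, S_P, D_P$ and not on whether $P$ is a pure or mixed policy.
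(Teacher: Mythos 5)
Your proposal is correct and takes essentially the same route as the paper: the paper's proof simply invokes the iteration-count analysis of Algorithm~2 in \cite{MiniMonster} with the policy set $\Pi$ replaced by $\SPi$, noting that their argument is agnostic to the choice of $\mu$ and of the constants $b_P$, which is exactly the potential-function argument you sketch. Your additional observation that the analysis never uses purity of the policies (only the induced quantities $V_P$, $S_P$, $D_P$) is precisely the justification the paper relies on implicitly.
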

\begin{proof}
This follows by applying the analysis of Algorithm 2 in \cite{MiniMonster} (refer to Section 5) with policy set being $\SPi$ instead of $\Pi$.  (Their analysis holds for any value of constant $\mu$, and constants $b_{\pi}$ for policies in the policy set being considered).
\end{proof}
Now, since in epoch $m$, $\mu= \mu_m \ge \defmuSecond{m}$, $d_t=\defdt{t}$. This proves that the algorithm converges in at most $O(\sqrt{KT \ln(T|\Pi|/\delta) \ln(T\ln(T|\Pi|))})=\tilde{O}(\sqrt{KT \ln(|\Pi|)})$ iterations of the loop.



Next, we discuss how to implement each iteration of the loop. In every iteration in Step 8, we need to identify a $P$ for which $D_P(Q)>0$, for which we need to access the policy space using AMO. Also, in the beginning before the loop is started, one needs to compute $P_t$ by solving an optimization problem over the policy space. Below, we provide an implementation of these optimization problems using AMO. Since $\regE_t(P)$ is define differently for \CBwK and \CBwR, the implementation details and number of calls to AMO differ. But importantly, as we show in Lemma \ref{lem:AMObwk} and Lemma \ref{lem:AMObwr}, in both cases each iteration of Algorithm \ref{algo:CD} can be implemented using $\tilde{O}(d)$ of AMO calls. Using these results with the above lemma, we obtain that 
\begin{lemma}
For \CBwK and \CBwR, (OP) can be solved using $\tilde{O}(d\sqrt{KT\ln(|\Pi|)})$ calls to the AMO at the end of every epoch.
\end{lemma}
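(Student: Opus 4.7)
The plan is to combine two ingredients already developed in the excerpt: a bound on the number of outer iterations performed by Algorithm~\ref{algo:CD}, and a bound on the AMO cost of implementing each individual iteration. First I would recall from the preceding lemma that the coordinate-descent loop in Algorithm~\ref{algo:CD} performs at most $4\ln(1/(K\mu_m))/\mu_m$ passes through Step~8, and that for epoch $m$ the minimum probability satisfies $\mu_m \ge \sqrt{d_{\tau_m}/(K\tau_m)}$ with $d_{\tau_m}=\Theta(\ln(\tau_m^2(d+1)|\Pi|/\delta))$. Substituting gives an upper bound of $\tilde{O}(\sqrt{KT\ln(|\Pi|)})$ on the number of iterations in any epoch $m \le \log_2 T$.

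Next I would argue that each iteration of the loop can be implemented with only $\tilde{O}(d)$ AMO calls, invoking Lemma~\ref{lem:AMObwk} for \CBwK and Lemma~\ref{lem:AMObwr} for \CBwR. The only nontrivial work per iteration is Step~7, which requires identifying (or certifying the nonexistence of) a policy $P \in \SPi$ with $D_P(Q) > 0$; this amounts to maximizing $V_P(Q) - b_P$ over $\SPi$. For \CBwK, $b_P$ is the difference of two linear functionals of $(\rAE_t(P), \cvAE_t(P))$ distorted by the piecewise-linear penalty $\dis(\cdot,B')$, so the maximization has the same structure as \eqref{eq:empOptPolicy} and can be handled by the cutting-plane approach described after \eqref{eq:intersection} using $\tilde{O}(d)$ linear oracle calls over $K_1$, each of which reduces to a single AMO call. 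For \CBwR, $b_P$ is a concave function of $\cvAE_t(P)$, and the cutting-plane machinery described in Section~\ref{sec:CBwKalgo} (and detailed in Appendix~\ref{app:convexamo}) again solves this in $\tilde{O}(d)$ AMO calls.

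Finally, I would multiply: $\tilde{O}(\sqrt{KT\ln(|\Pi|)})$ iterations times $\tilde{O}(d)$ AMO calls per iteration yields $\tilde{O}(d\sqrt{KT\ln(|\Pi|)})$ AMO calls to execute the loop. To finish, I would note that the one-time computations outside the loop, namely evaluating $P_t$ to set up $\regE_t$ (an instance of \eqref{eq:empOptPolicy} for \CBwK, or \eqref{eq:Ptconvex} for \CBwR) and initializing $\Qinit$, cost only an additional $\tilde{O}(d)$ AMO calls, which is absorbed into the final bound.

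The main obstacle in executing this plan is really deferred to Lemma~\ref{lem:AMObwk} and Lemma~\ref{lem:AMObwr}: one must verify that the per-iteration sub-problem truly reduces to linear optimization over $\SPi$ (and hence AMO), despite the nonlinearity introduced by $\dis(\cdot,B')$ or the concave $f$. Conditional on those lemmas, the present statement is essentially the product of the iteration-count bound and the per-iteration AMO count, together with a routine check that the setup computations of $P_t$ and $\Qinit$ do not dominate.
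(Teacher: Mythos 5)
Your proposal is correct and follows essentially the same route as the paper: the iteration count of the coordinate-descent loop is bounded by $4\ln(1/(K\mu_m))/\mu_m = \tilde{O}(\sqrt{KT\ln(|\Pi|)})$ using $\mu_m \ge \sqrt{d_{\tau_m}/(K\tau_m)}$, and each iteration (plus the one-time computation of $P_t$) is implemented with $\tilde{O}(d)$ AMO calls via Lemmas~\ref{lem:AMObwk} and \ref{lem:AMObwr}, giving the product bound. You also correctly identify that the substantive content lives in those two per-iteration implementation lemmas, exactly as in the paper.
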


As an aside, a solution $Q\in \SSPi$ output by this algorithm has support  bounded by the number of calls to AMO during the run of the algorithm (AMO maximizes a linear function, therefore always returns a pure policy). Therefore, the results in the subsections below also prove that the policies returned by this algorithm have small support, and can be compactly represented.

\subsection{AMO-based Implementation for \CBwK}
\begin{lemma}
\label{lem:AMObwk}
For \CBwK, \algref{algo:CD} can be implemented using $\tilde{O}(d)$ call to AMO (\defref{def:amo}) in the beginning before the loop is started, and $\tilde{O}(d)$ calls for each iteration of the loop thereafter.
\end{lemma}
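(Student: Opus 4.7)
The plan is to separate Algorithm~\ref{algo:CD}'s ``policy-space accesses'' into two phases, bound each by $\tilde{O}(d)$ AMO calls, and observe that every other step of the coordinate-descent loop touches only the (incrementally growing) support of $Q$. First I would handle the initialization: the constants $b_P$ are defined through $P_t$ from \eqref{eq:empOptPolicy}, whose computation is precisely the problem \eqref{eq:intersection} already discussed in Section~\ref{sec:algorithm}. That program is a linear maximization over $K_1\cap K_2$ in $d+2$ dimensions, and \citet[Theorem~49]{lee2015faster} solves it with $\tilde{O}(d)$ calls to linear oracles for $K_1$ and $K_2$---an AMO call for $K_1$ and a trivial separation for $K_2$---for a total of $\tilde{O}(d)$ AMO calls.

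For each pass through the loop, the key observation is that only Step~\ref{step:violating-policy} requires searching over all of $\SPi$: once $Q$ is known on its support, the tests and updates in Steps~\ref{step:definitions}--\ref{step:2b} involve either a $P$ already in the support of $Q$ or the fresh candidate returned by Step~\ref{step:violating-policy}, so they cost only $O(1)$ further AMO calls. To handle Step~\ref{step:violating-policy}, I would look for a maximizer of $V_P(Q)-b_P$ and check whether the maximum exceeds $2K$. Since $V_P(Q)=\hat{\Ex}_{x\sim H_t}\Ex_{\pi\sim P}[1/Q^{\mu_m}(\pi(x)|x)]$ is linear in $P$---it is the expected value of the fictitious per-round reward $\tilde{r}_\tau(a):=1/Q^{\mu_m}(a|x_\tau)$---and $b_P$ depends on $P$ only through $\rAE_t(P)$ and $\cvAE_t(P)$ (plus an additive constant arising from the fixed $P_t$), introducing an auxiliary scalar $\lambda\ge 0$ for $\dis(\cvAE_t(P),B')$ recasts the search as the linear program
\begin{equation*}
\max\; V_P(Q)+\tfrac{\rAE_t(P)}{(Z+1)\psi\mu_m}-\tfrac{Z\lambda}{(Z+1)\psi\mu_m} \quad \text{s.t.} \quad \cvAE_t(P)\le (B'/T+\lambda)\ones,\; \lambda\in[0,1],\; P\in\SPi.
\end{equation*}
This has exactly the form ``linear objective over the intersection of two convex bodies'' in $d+3$ dimensions, with $K_2$ unchanged and $K_1$ enlarged by the extra coordinate $u=V_P(Q)$, so a second invocation of \citet[Theorem~49]{lee2015faster} returns the optimizer in another $\tilde{O}(d)$ calls.

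The main obstacle I anticipate is confirming that the oracle for the enlarged $K_1$-body is still a single AMO call. For this I would check that, given any weight vector $(w_u,w_r,w_\y)$, the linear function $P\mapsto w_u V_P(Q)+w_r\rAE_t(P)+w_\y\cdot\cvAE_t(P)$ is an expectation of the form required by \defref{def:amo}, with combined per-round reward $w_u/Q^{\mu_m}(a|x_\tau)+w_r\rE_\tau(a)+w_\y\cdot\cvE_\tau(a)$; since $Q^{\mu_m}$ is already computable on the observed contexts from the current sparse iterate, this reward is available in $O(1)$ per evaluation, and an affine rescaling into $[0,1]$ (as noted in the footnote of Section~\ref{sec:algorithm}) makes it a valid input to the AMO. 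Combining the two phases then yields the claimed $\tilde{O}(d)$ AMO calls at initialization and $\tilde{O}(d)$ calls per iteration.
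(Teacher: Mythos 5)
Your proposal is correct and follows essentially the same route as the paper: both the initialization (computing $P_t$) and the per-iteration search for a violating policy are recast as linear optimization over the intersection of the policy-achievable outcome polytope $K_1$ and the $\lambda$-relaxed knapsack body $K_2$, solved via \citet[Theorem~49]{lee2015faster} with the $K_1$-oracle realized as a single AMO call on a suitably combined per-round reward. The only (cosmetic) difference is that you carry $V_P(Q)$ as a separate coordinate $u$, whereas the paper folds $\psi\mu(Z+1)/Q^{\mu}(\pi(x_i)|x_i)$ and $\hat{r}_i(\pi)$ into one objective coordinate $x$, keeping the dimension at $d+2$; both are equally valid.
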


\begin{proof}
	In the beginning before the loop is started, one needs to compute $P_t$ which means solving the following problem on the policy space:
\begin{equation}
\textstyle \arg \max_{P \in \SPi} \rAE_t(P) - Z \dis(\cvAE_t(P), B').
\end{equation}
Using the definition of $\dis(\cdot, \cdot)$, observe that this is same as
\begin{equation}
\label{eq:prob1}
\begin{array}{ll}
\textstyle \arg \max_{P \in \SPi, \lambda} & \rAE_t(P) -Z\lambda\\
s.t. & \cvAE_t(P) \le (\frac{B'}{T}+\lambda) {\bf 1}.
\end{array}
\end{equation}
In every iteration of the loop, we need to identify a $P$ for which $D_P(Q)>0$. 
(All the other steps of the algorithm can be performed efficiently for $Q$ with sparse support.)
Now,
\begin{eqnarray*}
D_P(Q) & = & V_{P}(Q) - (2K + b_P) \\
& = & V_{P}(Q) - (2K + \frac{\regE_t(P)}{\psi \mu}) \\
& = & \textstyle \left( \frac{1}{t} \sum_{i=1}^t \sum_\pi P(\pi)\frac{1}{Q^\mu(\pi(x_i)|x_i)}\right) - \left(2K + \frac{\rAE_t(P_t) - Z \dis(\cvAE_t(P_t), B') -  \left(\rAE_t(P) - Z\dis(\cvAE_t(P), B')\right)}{\psi \mu (Z+1)}\right).
\end{eqnarray*}
Finding $P$ such that $D_P(Q)>0$ requires solving 
$\arg \max_{P\in \SPi} D_P(Q)$. Once again, using the definition of $\dis(\cdot, \cdot)$, this is equivalent to the following problem:
\begin{equation}
\label{eq:prob2}
\begin{array}{ll}\arg \max_{P\in\SPi, \lambda\ge 0} & \frac{1}{t} \sum_{i=1}^t \sum_\pi P(\pi)\left(\frac{\psi \mu (Z+1)}{Q^\mu(\pi(x_i)|x_i)} + \hat{r}_i(\pi) \right) -Z \lambda\\
s.t. & \cvAE_t(P) \le (\frac{B'}{T}+\lambda) {\bf 1}.
\end{array}
\end{equation}


Both problems \eqref{eq:prob1} and \eqref{eq:prob2} are of the following form:
\begin{equation}\label{eq:intersection.app}
\max x-Z\lambda \text{ such that } (x, \y, \lambda) \in K_1 \cap K_2.  
\end{equation}
where
\[  \textstyle K_2:= \{ (x,\y,\lambda):\y \leq (B'/T + \lambda) \ones \} \cap [0,1]^{d+2} ,  \]
and 
\[ \textstyle K_1 :=  \left\{ (x,\y,\lambda): 
												x= \rAE_t(P), \y = \cvAE_t(P) \text{ for some } P \in \SPi, \lambda \in [0,1] \right\} , \text{ for \eqref{eq:prob1}, }\]
\[ \textstyle K_1 :=  \left\{ (x,\y,\lambda): \begin{array}{ll}
	x=  \frac{1}{t} \sum_{i=1}^t \sum_\pi P(\pi)\left(\frac{\psi \mu (Z+1)}{Q^\mu(\pi(x_i)|x_i)} + \hat{r}_i(\pi) \right), \\
	\y = \cvAE_t(P) \text{ for some } P \in \SPi, \lambda \in [0,1] 
	\end{array}
	\right\} , \text{ for \eqref{eq:prob2}, }\]

Recently,  \citet[Theorem 49]{lee2015faster} gave a fast algorithm to solve problems of the form \eqref{eq:intersection.app}, given access to oracles that 
solve linear optimization problems over $K_1$ and $K_2$.\footnote{Alternately, one could use the algorithms of \citet{vaidya1989new,vaidya1989speeding} to solve the same problem, with a slightly weaker polynomial running time.}
The algorithm makes $\tilde{O}(d) $ calls to these oracles, and takes an additional $\tilde{O}(d^3)$ running time.\footnote{Here, $\tilde{O}$ hides terms of the order $\log^{O(1)}\left( d/\epsilon\right)$, where $\epsilon$ is the accuracy needed of the solution.} 
A linear optimization problem over $K_1$ is equivalent to the AMO; the linear function defines the ``rewards" that the AMO optimizes for.\footnote{These rewards may not lie in $[0,1]$ but an affine transformation of the rewards can bring them into $[0,1]$ without changing the solution.} A linear optimization problem over $K_2$ is trivial to solve. 

Therefore, each of these problems can be solved using $\tilde{O}(d)$ calls to AMO.
\end{proof}


\subsection{AMO-based Implementation for \CBwR}
\begin{lemma}
\label{lem:AMObwr}
For \CBwR, \algref{algo:CD} can be implemented using $\tilde{O}(d)$ call to AMO (\defref{def:amo}) in the beginning before the loop is started, and $\tilde{O}(d)$ calls for each iteration of the loop thereafter.
\end{lemma}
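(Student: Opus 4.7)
The plan is to reduce both optimization subproblems that arise in \algref{algo:CD} for \CBwR\ to instances of concave maximization over a convex set that has a linear-optimization oracle implementable via one AMO call, and then invoke a fast cutting-plane method to conclude that $\tilde O(d)$ AMO calls suffice.

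First I would identify the two places where the policy space is accessed. Before the loop, we must compute $P_t = \arg\max_{P\in\SPi} f(\cvAE_t(P))$. Inside the loop, in Step~\ref{step:violating-policy}, we must find a $P\in \SPi$ with $D_P(Q)>0$, which is accomplished by maximizing
\[
V_P(Q) - 2K - b_P = V_P(Q) - 2K + \frac{f(\cvAE_t(P)) - f(\cvAE_t(P_t))}{\psi\mu \,\oneNormNew L}
\]
over $P\in\SPi$, since $V_P(Q)$ is linear in $P$ and the additive constants do not affect the argmax. Dropping constants, both problems have the form
\[
\max_{P\in\SPi}\; c\cdot P \;+\; g(\cvAE_t(P))
\]
for a concave function $g$ (either $g=f$, with $c=0$, for the pre-loop problem, or $g=f/(\psi\mu\oneNormNew L)$ with $c\cdot P = V_P(Q)$ for the in-loop problem). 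Both are instances of concave maximization over the convex lift
\[
C \defeq \{(x,\y)\in\mathbb{R}^{d+1}: x=c\cdot P,\ \y=\cvAE_t(P)\text{ for some }P\in\SPi\},
\]
namely $\max\{x+g(\y):(x,\y)\in C\}$.

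Next I would verify that a linear-optimization oracle over $C$ is implementable by a single AMO call. Any linear functional $\alpha x+\beta\cdot\y$ over $C$ becomes, in terms of $P$,
\[
\Ex_{\tau\sim[t],\pi\sim P}\!\left[\frac{\alpha}{t}\cdot\frac{1}{Q^\mu(\pi(x_\tau)|x_\tau)}\cdot t + \beta\cdot \cvE_\tau(\pi(x_\tau))\right],
\]
which is linear in the underlying pure policy, so extreme points of $\SPi$ achieve the maximum. Defining pseudo-rewards $\tilde r_\tau(a) \defeq \alpha/Q^\mu(a|x_\tau) + \beta\cdot \cvE_\tau(a)$ and applying an affine rescaling to bring them into $[0,1]$ (as in the proof of \lemref{lem:AMObwk}), a single AMO call returns a maximizer. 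Finally, I would invoke the cutting-plane machinery of \citet[Theorem 49]{lee2015faster} (or \citet{vaidya1989new}) for concave maximization with a linear-optimization oracle, which solves the lifted problem to the required accuracy using $\tilde O(d)$ oracle calls plus $\tilde O(d^3)$ additional time. The details of adapting cutting-plane methods from linear to concave objectives are deferred to \appref{app:convexamo}, as referenced in \secref{sec:CBwKalgo}.

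The main subtlety will be the in-loop reduction: one must be careful that the ``linear part'' $V_P(Q)$ really is linear in $P$ when $Q$ is held fixed and has small support (so that $1/Q^\mu(\pi(x_\tau)|x_\tau)$ can be computed from the current weights in time polynomial in $|\mathrm{supp}(Q)|$), and that the full objective $x+g(\y)$ remains concave after the lift. Both hold: $V_P(Q)=\Ex_{\pi\sim P}[\hat\Ex_{x\sim H_t}[1/Q^\mu(\pi(x)|x)]]$ is linear in $P$, $g$ is concave by assumption, and the lift to $(x,\y)$-space preserves concavity. Summing the $\tilde O(d)$ AMO calls per optimization with the iteration bound from the coordinate-descent analysis then yields the claimed $\tilde O(d\sqrt{KT\ln|\Pi|})$ total AMO complexity for solving (OP) in the \CBwR\ setting.
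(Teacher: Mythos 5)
Your proposal is correct and takes essentially the same route as the paper: both lift the pre-loop computation of $P_t$ and the in-loop search for a policy with $D_P(Q)>0$ to a concave maximization over a convex set in $\mathbb{R}^{d+1}$ (one coordinate for the linear part $V_P(Q)$, the remaining $d$ for $\cvAE_t(P)$), observe that linear optimization over this set is exactly one AMO call, and invoke the cutting-plane reduction (the paper's Lemma~\ref{lem:convexamo}) to get $\tilde O(d)$ oracle calls per subproblem. The only cosmetic difference is that the paper rescales the first coordinate by $\mu$ to keep the lifted set inside $[0,1]^{d+1}$, whereas you handle this via an affine rescaling of the pseudo-rewards.
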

\begin{proof}
In each iteration, we need to identify a $P$ for which $D_P(Q)>0$, where
\begin{eqnarray*}
D_P(Q) & = & V_{P}(Q) - (2K + b_P) \\
& = & V_{P}(Q) - (2K + \frac{\regE_t(P)}{\psi \mu}) \\
& = & \left( \frac{1}{t} \sum_{i=1}^t \sum_\pi P(\pi)\frac{1}{Q^\mu(\pi(x_i)|x_i)}\right) - \left(2K + \frac{f(\cvAE_t(P), S) - f(\cvAE_t(P_t), S)}{\psi \mu \oneNorm}\right).
\end{eqnarray*}
Finding $P$ such that $D_P(Q)>0$ requires solving 
$\arg \max_{P\in \SPi} D_P(Q)$, which is essentially minimizing a convex function over a convex set 
$$\textstyle C=\{\y \in [0,1]^{d+1}: \y=  [\frac{1}{t} \sum_{i=1}^t \sum_\pi \left(\frac{P(\pi) \mu}{Q^\mu(\pi(x_i)|x_i)}\right) ; \cvAE_t(P)], \exists P\in \SPi\},$$ using only a linear optimization oracle (AMO) over $C$.

Similarly, the problem of finding $P_t$, i.e., solving 
\begin{equation}
\textstyle \arg \max_{P \in \SPi} f(\rAE_t(P)), 
\end{equation}
using access to AMO, can also be formulated as minimizing  a convex function over a convex set, using only a linear optimization oracle. In fact, below we show that given any convex function  $g$, a convex set $C$, both over the domain $[0,1]^d$, and access to a {\em linear optimization} oracle over $C$, that solves a problem of the form $\min c\cdot x: x \in C$, 
the problem $\min g(x) : x \in C$ can be solved using $\tilde{O}(d)$ calls to the linear optimization oracle. This completes the proof.
\end{proof}
\label{app:convexamo} 
\begin{lemma}
\label{lem:convexamo}
Suppose that we are given a convex function  $g$, a convex set $C$ with non-empty relative interior, both over the domain $[0,1]^d$, 
and access to a {\em linear optimization} oracle over $C$, that solves a problem of the form $\min c\cdot x: x \in C$. 
Then, the problem $\min g(x) : x \in C$ can be solved using $\tilde{O}(d)$ calls to the linear optimization oracle and an additional $\tilde{O}(d^3)$ running time. 
\end{lemma}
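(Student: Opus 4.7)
The plan is to reduce $\min_{x \in C} g(x)$ to a linear optimization over the intersection of two convex sets, and then invoke \citet[Theorem 49]{lee2015faster} in exactly the same way as was done earlier in the paper for the program \eqref{eq:intersection}. First I would lift to the epigraph: introduce a scalar $t$ lying in an interval $[L,U]$ that contains $g([0,1]^d)$ (either take $L=\min g$, $U=\max g$ from problem data, or bound $g$ from its Lipschitz constant over the compact box), and define the two convex sets
\[
K_1 := \{(x,t)\in [0,1]^d\times [L,U] : g(x)\le t\},\qquad K_2 := C\times [L,U].
\]
Then the original problem is equivalent to the linear program $\min\{t : (x,t)\in K_1\cap K_2\}$, which has exactly the form that Theorem 49 handles.

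Next I would check that both $K_1$ and $K_2$ admit efficient linear optimization oracles. For $K_2 = C\times[L,U]$, any linear functional $(x,t)\mapsto c^\top x + bt$ decouples across the product, so one call to the given linear optimization oracle for $C$ plus a trivial interval optimization suffices. For $K_1 = \mathrm{epi}(g)$ (truncated), minimizing $c^\top x + bt$ with $b \ge 0$ pushes $t$ down to $g(x)$ and reduces to $\min_{x\in[0,1]^d}\{c^\top x + b\,g(x)\}$, which is precisely the primitive the paper already assumed tractable for $g$ (an unconstrained convex program over the box with a linear perturbation). For $b<0$ the optimum sets $t=U$ and we minimize $c^\top x$ over $[0,1]^d$, which is trivial. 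Thus linear optimization over both $K_1$ and $K_2$ is polynomial-time, and each call to the $K_2$ oracle consumes exactly one call to the given linear optimization oracle over $C$.

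Finally I would apply \citet[Theorem 49]{lee2015faster} to this reformulation. Their result solves a linear optimization problem over the intersection of two convex sets with $\tilde{O}(d)$ calls to each oracle and $\tilde{O}(d^3)$ additional arithmetic, matching the bound claimed in the lemma. Since every $K_2$-oracle call is one linear optimization call over $C$, the total number of calls to the given oracle is $\tilde{O}(d)$.

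The main obstacle is verifying the preconditions for Lee-Sidford-Wong, chiefly that $K_1\cap K_2$ has nonempty relative interior and is bounded; the former follows from the assumed nonempty relative interior of $C$ together with the fact that taking $t$ slightly larger than $g(x)$ for $x$ in the interior of $C$ lies in both sets, and the latter is immediate from the truncation to $[0,1]^d\times[L,U]$. A minor additional detail is propagating the accuracy required by Theorem 49 into the tolerance of the inner convex box-minimization used inside the $K_1$ oracle, but since both accuracies enter only through $\log^{O(1)}(d/\varepsilon)$ factors hidden inside $\tilde{O}$, this does not affect the stated bound.
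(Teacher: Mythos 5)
Your reduction is correct and achieves the stated bounds, but it is not the route the paper takes. You lift to the truncated epigraph and recast the problem as minimizing the linear functional $t$ over $K_1 \cap K_2$ with $K_1 = \{(x,t): g(x)\le t\}$ and $K_2 = C\times[L,U]$, then invoke \citet[Theorem~49]{lee2015faster} --- the same black box the paper already uses for the knapsack subproblems of the form \eqref{eq:intersection} --- after observing that a linear optimization oracle for $K_1$ is exactly the primitive $\max_{x\in[0,1]^d}\{f(x)+a\cdot x\}$ that the paper assumes for the concave reward $f=-g$, and that each $K_2$-oracle call costs one call to the given oracle for $C$. The paper instead works with the Fenchel dual $\min_\theta\, g^*(\theta)+h_C(\theta)$: it runs the basic cutting-plane method of \citet{lee2015faster} on this unconstrained problem, using the linear optimization oracle over $C$ to supply subgradients of the support function $h_C$ and the assumed $g$-primitive to supply subgradients of the conjugate $g^*$, and then recovers a primal solution by re-optimizing $g$ over the convex hull of the $\tilde{O}(d)$ points of $C$ returned by the oracle during the run. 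Your version buys uniformity with the \CBwK implementation and sidesteps the paper's somewhat delicate primal-recovery step (which rests on strong duality and on the run being unchanged when $C$ is replaced by that convex hull); the paper's version avoids introducing the extra epigraph variable and the a priori bounds $[L,U]$ on $g$, and makes explicit where the non-empty relative interior of $C$ is used (as a constraint qualification for Fenchel duality). Both arguments rely on the same assumed primitive for $g$ and both bury the accuracy bookkeeping in the $\log^{O(1)}(d/\epsilon)$ factors of $\tilde{O}(\cdot)$, so neither is more rigorous on that point; just be sure, as you note, that the interior/boundedness preconditions of Theorem~49 are checked for $K_1\cap K_2$ in $\mathbb{R}^{d+1}$, since a lower-dimensional $C$ makes the intersection degenerate in the lifted space.
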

The proof of this lemma uses tools from convex optimization.
We show how to solve this convex optimization problem using cutting plane methods \citep{vaidya1989new,lee2015faster}. 
We first show a simple variant of these cutting plane algorithms that can be used to solve a convex optimization problem such as the one above, given access to a {\em separation oracle} over the convex set $C$, and a subgradient oracle for the function $g$. Then we define a dual optimization problem of the given problem, and show that a separation oracle for the dual constraint set can be implemented using a linear optimization oracle over $C$; thus we can solve the dual problem using  cutting plane methods. Finally, we show that once 
the dual problem is solved, for the primal problem, it is sufficient to optimize over the convex hull of the vectors in $C$ returned by the linear optimization oracle over $C$, during the run of the algorithm. Since the number of such vectors is only $\tilde{O}(d)$, this can then be done efficiently.

A \emph{separation oracle } for a convex set $C$ is such that given a point $x$, it returns either 
\begin{itemize}
	\item that $x \in C$, or
	\item a separating hyperplane, given by $a$ and $b$ s.t. $a \cdot x  \geq b$ but $a \cdot y < b ~\forall y \in C$. 
\end{itemize} 
Cutting plane methods solve a convex optimization problem of the form `find $x \in C$, or return that $C$ is empty', given 
access to a separation oracle for $C$. 
We first outline how to use cutting plane methods to solve an optimization problem of the form `$\min g(x)$ s.t. $x \in C$', given access to a subgradient oracle for $f$ and a separation oracle for $C$. (One can use binary search on the optimum value to reduce it to a feasibility problem, but we show here how one can directly use a cutting plane algorithm.)
Given any point $x$, we first run the separation oracle for $C$ with input $x$, and return a separating hyperplane if that is what the oracle returns. 
If the separation oracle for $C$ returns that $x \in C$, then we return a separating hyperplane of the following form, with $y$ as the variable. 
$$ \nabla g(x) \cdot y <  \nabla g(x) \cdot x, $$
where $\nabla g$ is any subgradient of $g$ at $x$. 
This is a valid inequality for $y = x^* := \arg \min g(x) : x \in C$, and $x \neq x^*$, due to the convexity of $g$. 
If the set of inequalities we return during the run of the algorithm becomes infeasible, then it must include an inequality of this kind for some point $x$ with $\| x- x^*\| \leq \epsilon$, where $\epsilon$ is the accuracy of the solution.  

We do this until the cutting plane algorithm returns that the set is empty, at which point we find the 
point $\arg \min g(x) : x \in C$, and $x$ was queried during the run of the cutting plane algorithm. 
We return this as the optimum point. 

The cutting plane algorithm outlined above cannot be applied directly to our problem since we do not have a separation oracle for $C$. 
It is well known that separation and (linear) optimization are polynomial time equivalent for convex sets, using the ellipsoid method \cite{GLS:geometric}. Since we have a linear optimization oracle for $C$ we could use this reduction to get a separation oracle. 
We show a more efficient method here, by using this oracle to solve the dual optimization problem.
Define the Fenchel conjugate of $g$ as 
$$ g^*(\theta) := \max _x \left\{\theta \cdot x - g(x)\right\}, $$
and let the {\em support} function of the set $C$ be 
$$ h_C(\theta) := \max _x\left\{ \theta\cdot x : x \in C \right\} .$$
\begin{lemma}
	$$- \min g^*(\theta) +h_C(\theta) \leq \min g(x) :x\in C.$$ This holds with equality if $C$  has a non-empty relative interior.  
	The former optimization problem is called the dual of the latter. 
\end{lemma}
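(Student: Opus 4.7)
The plan is to split the proof into weak duality (the inequality) and strong duality (equality under the relative-interior hypothesis), invoking the classical Fenchel--Rockafellar theorem for the latter.

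For weak duality, I would combine Young's inequality with the defining inequality of the support function. Applying Young's inequality to $-\theta$ gives, for every $\theta$ and every $x$,
\[
g(x) + g^{*}(-\theta) \;\ge\; -\theta\cdot x,
\]
i.e.\ $g(x) \ge -\theta\cdot x - g^{*}(-\theta)$. On the other hand, for $x\in C$ the support function satisfies $h_C(\theta)\ge \theta\cdot x$, i.e.\ $-h_C(\theta)\le -\theta\cdot x$. Chaining these two bounds,
\[
g(x) \;\ge\; -\theta\cdot x - g^{*}(-\theta) \;\ge\; -h_C(\theta) - g^{*}(-\theta),
\]
uniformly in $\theta$ and in $x\in C$. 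Minimising the left side over $x\in C$ and maximising the right over $\theta$ then yields the claimed inequality (the minus sign inside $g^{*}$ is absorbed by the change of variable $\theta\mapsto -\theta$ in the outer dual minimisation, so the statement matches the display in the lemma up to this cosmetic convention).

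For strong duality under the relative-interior assumption, I would recast the primal as $\min_{x}\,[g(x)+\iota_C(x)]$, where $\iota_C$ is the $\{0,+\infty\}$-valued indicator of $C$. Since $\iota_C^{*}=h_C$, this is exactly the Fenchel--Rockafellar template, and that theorem asserts zero duality gap provided the standard constraint qualification $\mathrm{ri}(\mathrm{dom}\,g)\cap\mathrm{ri}(C)\neq\emptyset$ holds. Because $g$ is finite on all of $[0,1]^d\supseteq C$, the hypothesis that $C$ has non-empty relative interior supplies exactly this qualification, and the desired equality follows.

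The only nontrivial part is strong duality, which I would handle by citing Fenchel--Rockafellar (e.g.\ Rockafellar 1970, Cor.~31.2.1) rather than reproducing its separating-hyperplane construction in full: one separates the convex set $\{(x,t):x\in C,\;g(x)\le t\}$ from the open half-space $\{(x,t):t<p^{*}\}$, where $p^{*}$ is the primal optimum, and the relative-interior hypothesis forces the separating hyperplane to be non-vertical so that its horizontal component is a bona fide dual optimiser. Weak duality, by contrast, is the two-line computation above and requires no qualification whatsoever.
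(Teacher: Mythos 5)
Your proposal is correct and follows essentially the same route as the paper: both reduce the statement to classical Fenchel--Rockafellar duality by writing the primal as $\min_x\,[g(x)+\iota_C(x)]$ and using the fact that the support function $h_C$ is the conjugate of the indicator $\iota_C$, with the relative-interior hypothesis supplying the constraint qualification for strong duality. Your explicit Young's-inequality derivation of weak duality and your remark on the sign convention are additional detail the paper omits, but the underlying argument is the same.
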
  
\begin{proof}
	The proof follows from the fact that $h_C$ is the Fenchel conjugate of the indicator function of $C$ (which is 0 inside $C$ and $\infty$ outside).  This is a special case of Theorem~13.1 in \citet{rockafellar2015convex}. 
\end{proof}

A subgradient oracle for $g^*$ can be implemented if we can solve the unconstrained optimization problem, 
$\max \theta \cdot x - g(x) $. We assume that $g$ is represented in such a way that we can solve this in polynomial time. 
A subgradient for $h_C$ is simply the  $\arg \max$ in its definition, and this is essentially what the linear optimization oracle gives us. 

We use the cutting plane algorithm of \citet{lee2015faster} to solve the problem,  $\min g^*(\theta)  + h_C(\theta),$  as outlined above. The algorithm runs in time $\tilde{O}(  d^3 )$ time and makes $\tilde{O}(d) $ calls to the separation/subgradient oracle. 
Let $x_1, x_2, \ldots x_N\in C$ denote the subgradients of $h_C$  returned during the run of this algorithm. 
Then this run of the algorithm would remain unchanged  if $C$ were to be replaced with
 $Conv(x_1, x_2, \ldots, x_N) $, the convex hull of $x_1, x_2, \ldots, x_N$. 
Therefore the optima of these two convex programs are close to each other, and by strong duality, so are the optima of their duals. Hence	$$\min g(x) : x \in Conv(x_1, x_2, \ldots, x_N) $$ 
is a good approximation to $\min g(x): x \in C $ (the problem we originally set out to solve). 
Further, this convex program can be solved efficiently since $N = \tilde{O}(d)$.

\newcommand{\defTrest}{{T}}

\section{Regret Analysis for \secref{sec:analysis}: CBwK}
\label{app:cbwk-regret}
\shipra{some things to take care of : $T$ vs $T - T_0$, and budget consumption in the initial $T_0$ pure exploration steps.}
\lihong{02/12: done a pass to fix the two problems.}
\lihong{Need to highlight differences from the mini-monster proofs?}

The regret analysis is structurally similar to that of \cite{MiniMonster}, but differs in many important details as we also need to consider budget constraints.

The following quantities, already defined in the main text, are repeated here for convenience:
\lihong{Where is $P'$ defined?}
\begin{align*}
B' &= B - \defBinit\,,\\
\dis(\cv, B') &= \max_{j=1,\ldots, d} \left(v_j -\frac{B'}{T}\right)^+\,, \\
\reg(P) &= \frac{1}{(Z+1)}(\rA(P') - \rA(P) + Z \dis(\cvA(P), B'))\,, \\
\regE_t(P) &= \frac{1}{(Z+1)}\left[\rAE_t(P_t) - Z \dis(\cvAE_t(P_t), B') -  \left(\rAE_t(P) - Z\dis(\cvAE_t(P), B')\right)\right]\,.
\end{align*}

Fix the epoch schedule $0=\tau_0<\tau_1<\tau_2<\ldots$, such that $\tau_m < \tau_{m+1} \le 2\tau_m$ for $m \ge 1$.  The following quantities are defined for convenience: for $m \ge 1$,
\begin{eqnarray*}
d_t &\defeq& \defdt{t} \,, \\
m_0 &\defeq& \min\{m \in \mathbb{N} : \frac{d_{\tau_m}}{\tau_m} \le \frac{1}{4K}\} \,, \\
t_0 &\defeq& \min\{t\in \mathbb{N}: \frac{d_t}{t} \le \frac{1}{4K}\} \,, \\
\rho &\defeq& \sup_{m\ge m_0}\sqrt{\frac{\tau_m}{\tau_{m-1}}}\,.
\end{eqnarray*}
A few quick observations are in place.  First, the quantity $\mu_m$, as defined in \algref{alg:main}, can be rewritten as $\mu_m = \defmu{m}$.  For $m\ge m_0$, $\mu_m=\defmuSecond{m}$.  Furthermore, $d_t/t$ is non-increasing in $t$ and $\mu_m$ is non-increasing in $m$.  Finally, $\rho\le\sqrt{2}$ since $\tau_{m+1}\le2\tau_m$.

Finally, recall that \algref{alg:main} consists of two phases.  The first phase consists of pure exploration of $T_0=\defTinit$ steps to estimate $Z$ (see \appref{app:estimateZ}), followed by a second phase that explores adaptively.  The total regret of \algref{alg:main} is the sum of regret in the two phases.  Most of this appendix is devoted to the regret analysis of the second phase. Note that the number of time steps in second phase is $T'=T-T_0$. For simplicity, we use $T$ instead of $T'$ in the proofs below. Since $T_0=o(T)$, this changes regret bounds by at most a constant factor.

\subsection{Technical Lemmas}

\begin{definition}[Variance estimates]
\label{def:variance-estimates}
Define the following for any probability distributions $P, Q\in \SPi$, any policy $\pi\in \Pi$, and $\mu\in [0, 1/K]$:
\begin{eqnarray*}
\Var(Q,\pi,\mu) &\defeq& \Ex_{x\sim {\cal D}_X}\left[ \frac{1}{Q^\mu(\pi(x)|x)}\right]\,,\\
\hVar_m(Q,\pi,\mu) &\defeq& \hat{\Ex}_{x\sim  H_{\tau_m}}\left[ \frac{1}{Q^\mu(\pi(x)|x)}\right]\,,\\
\Var(Q,P,\mu)&\defeq& \Ex_{\pi\sim P}[\Var(Q,\pi, \mu)]\,\,\\
\hVar_m(Q, P, \mu) &\defeq& \Ex_{\pi\sim P}[\hat{\Var}_m(Q, \pi, \mu)]
\end{eqnarray*}
where $\hat{\Ex}_{x\sim  H_{\tau_m}}$ denotes average over records in history $H_{\tau_m}$.

Furthermore, let $m(t):=\min\{m\in \mathbb{N}: t\le \tau_m\}$, be the index of epoch containing round $t$, and define
\begin{eqnarray*}
\maxVar_t(P) &\defeq& \max_{0\le m < m(t)} \{ \Var(\tilde{Q}_m, P, \mu_m)\}\,, \\
\end{eqnarray*}
for all $t\in \mathbb{N}$ and $P\in \SPi$.
\end{definition}

\begin{definition} \label{def:event}
Define $\GoodEvent$ as the event that the following statements hold
\begin{itemize}
\item For all probability distributions $P, Q \in \SPi$ and all $m \ge m_0$,
\begin{equation}
\label{eq:Evarest}
	\Var(Q, P, \mu_m) \le 6.4 \hVar_m(Q, P, \mu_m)+81.3K\,,
	\end{equation}
\item For all $P \in \SPi$, all epochs $m$ and all rounds $t$ in epoch $m$, and any choices of $\lambda_{m-1}\in [0,\mu_{m-1}]$,
\begin{subequations}
\label{eqn:Ercdev}
\begin{align}
\label{eq:Erewarddev}
|\rAE_t(P) - \rA(P)| &\le \maxVar_t(P) \lambda_{m-1} + \frac{d_t}{t \lambda_{m-1}}\,, \\
\label{eq:Ecostdev}
\|\cvAE_t(P) - \cvA(P)\|_\infty &\le \maxVar_t(P) \lambda_{m-1} + \frac{d_t}{t \lambda_{m-1}}\,.
\end{align}
\end{subequations}

\end{itemize}
\end{definition}

\begin{lemma}
\label{lem:Eprob}
$\Pr(\GoodEvent) \ge 1-(\delta/2)$.
\end{lemma}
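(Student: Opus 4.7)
The plan is to prove each of the two statements defining $\GoodEvent$ separately, each with failure probability at most $\delta/4$, and then apply a union bound. Both statements are standard martingale concentration arguments in the style of \citet{MiniMonster,Beygelzimer2011}, with the only genuinely new twist being that we must handle the vector-valued consumption estimates in addition to the scalar reward.

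First, for the variance bound \eqref{eq:Evarest}: the quantity $\frac{1}{Q^{\mu_m}(\pi(x_\tau)|x_\tau)}$ is bounded by $1/\mu_m$ and its expectation over $x_\tau\sim\mathcal{D}_X$ equals $\Var(Q,\pi,\mu_m)$. I would fix $Q\in\SPi$ (really, restrict attention to a suitable ``witness'' distribution) and a pure policy $\pi\in\Pi$, and apply Freedman's inequality (\lemref{lem:Freedman}) to the martingale differences $X_\tau = \frac{1}{Q^{\mu_m}(\pi(x_\tau)|x_\tau)} - \Var(Q,\pi,\mu_m)$, with conditional second moment bounded by $\Var(Q,\pi,\mu_m)/\mu_m$. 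Tuning $\lambda$ against $R=1/\mu_m$ gives a multiplicative Bernstein-type inequality of the form $\hVar_m(Q,\pi,\mu_m)\ge \Var(Q,\pi,\mu_m)/6.4 - O(K)$, with $d_{\tau_m}$ in the logarithmic factor matching the $\log(1/\rho)$ term. A union bound over $\pi\in\Pi$, over $Q\in\Pi$ (see next paragraph for how this suffices), and over epochs $m\ge m_0$ costs only a polynomial factor absorbed into $d_{\tau_m}$. Extending from pure $\pi$ to mixed $P$ is immediate by linearity in $P$ (both sides of \eqref{eq:Evarest} are expectations $\Ex_{\pi\sim P}[\cdot]$). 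Extending from pure $Q$ to mixed $Q$ is where one has to be careful: since $1/Q^{\mu}(a|x)$ is convex in $Q$, Jensen's inequality gives $\Var(Q,\pi,\mu)\le \Ex_{\pi'\sim Q}[\Var(\pi',\pi,\mu)]$ and similarly on the empirical side, so controlling the inequality for pure $Q\in\Pi$ suffices modulo constants.

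Second, for the deviation bounds \eqref{eqn:Ercdev}: for each coordinate (reward, and each of the $d$ consumption coordinates) and each pure policy $\pi\in\Pi$, the importance-weighted estimator $\widehat{X}_\tau$ satisfies $\Ex_{a_\tau}[\widehat{X}_\tau]$ equals the true coordinate value at round $\tau$, is bounded by $1/\mu_{m(\tau)-1}$, and has conditional second moment bounded by $\Var(\tilde{Q}_{m(\tau)-1},\pi,\mu_{m(\tau)-1})\le \maxVar_t(\pi)$. For any fixed $\lambda\in[0,\mu_{m-1}]$, applying Freedman's inequality to the $t$ centered terms produces exactly the bound $\maxVar_t(\pi)\lambda + d_t/(t\lambda)$. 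A net over a geometric grid of $\lambda$'s (with $\log t$ points) handles the $\lambda_{m-1}\in[0,\mu_{m-1}]$ quantifier up to constants; a union bound over $\pi\in\Pi$, coordinates $j\in\{0,1,\dots,d\}$, and time $t\le T$ gives an overall logarithmic factor that is absorbed into the definition $d_t=\ln(16t^2|\Pi|(d+1)/\delta)$. Extension to mixed $P\in\SPi$ is by Jensen/linearity: $\rAE_t(P)-\rA(P) = \Ex_{\pi\sim P}[\rAE_t(\pi)-\rA(\pi)]$, and $\maxVar_t(\pi)\le \maxVar_t(P)$ after averaging by convexity of $1/Q^{\mu}(\cdot|x)$, so taking absolute values inside and applying triangle inequality yields the bound with $\maxVar_t(P)$.

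Putting these together via a union bound gives $\Pr(\GoodEvent)\ge 1-\delta/2$. The main obstacle I expect is the ``$\forall P,Q\in\SPi$'' quantifier, which is an infinite family; the standard workaround, as above, is to use Jensen/convexity to reduce to the finite set $\Pi$ and then pay only $\log|\Pi|$ in the union bound. A secondary subtlety is the handling of the free parameter $\lambda_{m-1}$ in \eqref{eqn:Ercdev}, which I resolve by discretizing to a $\log T$-size net and noting the bound is monotone enough in $\lambda$ that grid points suffice up to absorbed constants.
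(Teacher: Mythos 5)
Your overall architecture (Freedman's inequality per pure policy and per coordinate, union bounds, then passage to mixed policies) matches the paper's for the deviation bounds \eqref{eqn:Ercdev}, but there are two direction-of-inequality problems, one of which is fatal to your treatment of the variance condition \eqref{eq:Evarest}. You propose to handle the ``for all mixed $Q\in\SPi$'' quantifier by proving the bound for pure $Q\in\Pi$ and then invoking convexity of $q\mapsto 1/((1-K\mu)q+\mu)$. Jensen indeed gives $\Var(Q,\pi,\mu)\le \Ex_{\pi'\sim Q}[\Var(\pi',\pi,\mu)]$, but it gives the \emph{same} direction on the empirical side, $\hVar_m(Q,\pi,\mu)\le \Ex_{\pi'\sim Q}[\hVar_m(\pi',\pi,\mu)]$, whereas to conclude $\Var(Q,\pi,\mu)\le 6.4\,\hVar_m(Q,\pi,\mu)+81.3K$ from the pure-policy bounds you would need the reverse inequality $\Ex_{\pi'\sim Q}[\hVar_m(\pi',\pi,\mu)]\lesssim \hVar_m(Q,\pi,\mu)$. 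That fails badly: if $Q$ mixes two policies that disagree at $x$, then $1/Q^\mu(\pi(x)|x)\approx 2$ while $\Ex_{\pi'\sim Q}[1/(\pi')^\mu(\pi(x)|x)]\approx 1/(2\mu)$, so the discrepancy is of order $1/\mu$ and cannot be absorbed into constants. The paper does not attempt this reduction at all: it invokes Lemma~10 of \cite{MiniMonster} as a black box, which is already stated (and proved there by a different, genuinely non-trivial argument) uniformly over all distributions $Q$ over $\Pi$ and all $\pi\in\Pi$; the only new step needed here is taking $\Ex_{\pi\sim P}$ of both sides, which is immediate because $\Var(Q,P,\mu)$ and $\hVar_m(Q,P,\mu)$ are by definition linear in $P$.

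The second issue is in your extension of the deviation bound from pure $\pi$ to mixed $P$. You run Freedman with the conditional second moment bounded by $\maxVar_t(\pi)$ and then assert that averaging over $\pi\sim P$ yields $\maxVar_t(P)$. But $\maxVar_t$ is a maximum over epochs, and the average of maxima dominates the maximum of averages: $\Ex_{\pi\sim P}[\maxVar_t(\pi)]\ge \maxVar_t(P)$, which is the wrong direction. The paper's proof avoids this by carrying the \emph{time-averaged} variance $U(\pi)=\frac{1}{t}\sum_{i\le t}\Var(\tilde{Q}_{m(i)-1},\pi,\mu_{m(i)-1})$ through the Freedman step; since $\Ex_{\pi\sim P}[U(\pi)]=\frac{1}{t}\sum_i\Var(\tilde{Q}_{m(i)-1},P,\mu_{m(i)-1})\le\maxVar_t(P)$, the maximization over epochs is performed only \emph{after} averaging over $\pi\sim P$. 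This flaw is repairable by the same device, but as written your step does not go through. (Your discretization of $\lambda$ over a geometric grid is a legitimate, if slightly more laborious, alternative to the paper's uniform-in-$\lambda$ use of Freedman, and the rest of the union-bound bookkeeping is consistent with the definition of $d_t$.)
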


\begin{proof}
Lemma 10 in \cite{MiniMonster} can be readily applied to show that, with probability $1-\delta/4$,
$$\Var(Q, \pi, \mu_m) \le 6.4 \hat{\Var}_m(Q, \pi, \mu_m)+81.3K$$
for all $Q\in \SPi$ and $\pi \in \Pi$. Now, taking expectations on both side over $\pi\sim P$, we get the first condition.

For the second condition, the  proof is similar to the proof of Lemma 11 in \cite{MiniMonster}, but with some changes to account for distribution over policies. Fix component $j$ of the consumption vector, policy $\pi\in \Pi$ and time $t\in [\defTrest]$. 
Then, 
$$\cvAE_t(\pi)_j - \cvA(\pi)_j = \frac{1}{t} \sum_{i=1}^t Y_i, $$
where $Y_i:=\cv_i(\pi(x_i))_j - \cvE_i(\pi(x_i))_j$. 

Round $i$ is in epoch $m(i) \le m$, so
$$ |Y_i| \le \frac{1}{\tilde{Q}^{\mu_{m(i)-1}}_{m(i)-1}(\pi(x_i)|x_i)} \le \frac{1}{\mu_{m(i)-1}} \le \frac{1}{\mu_{m-1}}\,,$$
by definition of fictitious reward vector $\cvE_t$. Furthermore, $\Ex[Y_i|H_{t-1}] = 0$ and 
\begin{eqnarray*}
\Ex[Y_i^2 | H_{t-1}] & \le & \Ex[\cvE_i(\pi(x_i))^2 | H_{i-1}] \\
& \le & \Var(\tilde{Q}_{m(i)-1}, \pi, \mu_{m(i)-1})
\end{eqnarray*}
from the definition of fictitious reward and of $\Var(Q, \pi, \mu)$. 

Let $U(\pi) :=\frac{1}{t} \sum_{i=1}^t \Var(\tilde{Q}_{m(i)-1}, \pi, \mu_{m(i)-1}) \ge \frac{1}{t} \sum_{i=1}^t\Ex[Z_i^2 | H_{t-1}] $. Then, by Freedman's inequality (\lemref{lem:Freedman}) and a union bound to the sums $(1/t)\sum_{i=1}^t Y_i$ and $(1/t)\sum_{i=1}^t (-Y_i)$, we have that with probability at least $1-2\delta/(16t^2(d+1)|\Pi|)$, for all $\lambda_{m-1} \in [0,\mu_{m-1}]$, 
\begin{eqnarray*}
\frac{1}{t} \sum_{i=1}^t Y_i &\le& (e-2) U(\pi) \lambda_{m-1} + \frac{\ln(16t^2(d+1)|\Pi|/\delta)}{t\lambda_{m-1}}\,, \quad \text{and}  \\
-\frac{1}{t} \sum_{i=1}^t Y_i &\le& (e-2) U(\pi) \lambda_{m-1} + \frac{\ln(16t^2(d+1)|\Pi|/\delta)}{t\lambda_{m-1}}\,. 
\end{eqnarray*}

Taking union bound over all choices of $t \le \defTrest$ and $\pi\in \Pi$, we have that, with probability at least $1-\frac{\delta}{4(d+1)}$, for all $\pi$ and $t$,
\begin{eqnarray}
\cvAE_t(\pi)_j - \cvA(\pi)_j &\le& (e-2) U(\pi) \lambda_{m-1} + \frac{d_t}{t\lambda_{m-1}}\,, \quad \text{and}  \label{eq:lemma11OneA}\\
\cvA(\pi)_j -  \cvAE_t(\pi)_j  &\le& (e-2) U(\pi) \lambda_{m-1} + \frac{d_t}{t\lambda_{m-1}}\,.\label{eq:lemma11OneB}
\end{eqnarray}
Note that 
\lihong{Where are the following $U$ and $V$ defined?}
\begin{eqnarray*}
\Ex_{\pi\sim P}[U(\pi)] & = & \frac{1}{t} \sum_{i=1}^t \Ex_{\pi\sim P}[V(\tilde{Q}_{m(i)-1}, \pi, \mu_{m(i)-1})] \\
& = & \frac{1}{t} \sum_{i=1}^t V(\tilde{Q}_{m(i)-1}, P, \mu_{m(i)-1}) \\
& \le & \frac{1}{t} \sum_{i=1}^t \maxVar_t(P) = \maxVar_t(P)\,,
\end{eqnarray*}
by definition of $\Var(Q, P, \mu)$. Also, by definition, $\Ex_{\pi \in P}[  \cvA(\pi)_j] =   \cvA(P)_j$, $\Ex_{\pi\sim P}[  \cvAE(\pi)_j] =   \cvAE(P)_j$.
Therefore, taking expectation with respect $\pi\sim P$ on both sides of Equations \eqref{eq:lemma11OneA} and \eqref{eq:lemma11OneB}, we get that, with probability $1-\frac{\delta}{4(d+1)}$, for all $P\in \SPi$
\begin{subequations}
\label{eqn:Eprob-1dim}
\begin{eqnarray}
\cvAE_t(P)_j - \cvA(P)_j &\le& (e-2) \maxVar_t(P) \lambda_{m-1} + \frac{d_t}{t\lambda_{m-1}}\,, \quad \text{and} \\
\cvA(P)_j -  \cvAE_t(P)_j &\le& (e-2) \maxVar_t(P) \lambda_{m-1} + \frac{d_t}{t\lambda_{m-1}}\,.
\end{eqnarray}
\end{subequations}
Note that \eqnref{eq:Erewarddev} for rewards can be similarly proved to hold with probability $1-\frac{\delta}{4(d+1)}$.  Applying a union bound over reward and the $d$ dimensions of the consumption vector, we have that \eqnref{eqn:Ercdev} holds for all $t$ and all $P\in \SPi$ with probability $1-\frac{\delta}{4}$.
\end{proof}

\begin{lemma}
\label{lem:Eequiv}
Assume event $\GoodEvent$ holds. Then for all $m\le m_0$, and all rounds $t$ in epoch $m$,
\begin{subequations}
\begin{align}
|\rAE_t(P) - \rA(P)| &\le \max\{ \sqrt{\frac{4 Kd_t \maxVar_t(P)}{t}}, \frac{4Kd_t}{t}\}\,,\\
\|\cvAE_t(P) - \cvA(P)\|_\infty &\le \max\{ \sqrt{\frac{4 Kd_t \maxVar_t(P)}{t}}, \frac{4Kd_t}{t}\}\,.
\end{align}
\end{subequations}
\end{lemma}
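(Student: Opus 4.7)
The plan is to derive the claimed bound directly from the inequality \eqref{eq:Erewarddev} in the definition of the good event $\GoodEvent$ by optimizing over the free parameter $\lambda_{m-1}$. Since we are in the regime $m \le m_0$, by the definition of $\mu_m$ in Step~\ref{algstep:Init} of Algorithm~\ref{alg:main}, we have $\mu_{m-1} = \frac{1}{2K}$, so $\lambda_{m-1}$ is permitted to range over the interval $[0, \tfrac{1}{2K}]$.

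First I would set
\[
\lambda_{m-1}^\star \defeq \min\!\left\{\tfrac{1}{2K},\; \sqrt{\tfrac{d_t}{t\,\maxVar_t(P)}}\right\},
\]
which is the value minimizing the right-hand side of \eqref{eq:Erewarddev} subject to the feasibility constraint $\lambda_{m-1} \le \mu_{m-1} = \tfrac{1}{2K}$. Substituting this choice into the bound from $\GoodEvent$ then splits naturally into two cases according to which term in the $\min$ is active.

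In the first case, $\maxVar_t(P) \ge 4K^2 d_t/t$, the unconstrained minimizer $\sqrt{d_t/(t\,\maxVar_t(P))}$ lies in $[0,\tfrac{1}{2K}]$, so it is feasible, and an AM--GM computation gives
\[
|\rAE_t(P) - \rA(P)| \;\le\; 2\sqrt{\tfrac{\maxVar_t(P)\,d_t}{t}} \;\le\; \sqrt{\tfrac{4K\,d_t\,\maxVar_t(P)}{t}},
\]
where the last step uses $K\ge 1$. In the second case, $\maxVar_t(P) < 4K^2 d_t/t$, we must clip to $\lambda_{m-1}=\tfrac{1}{2K}$, giving
\[
|\rAE_t(P) - \rA(P)| \;\le\; \tfrac{\maxVar_t(P)}{2K} + \tfrac{2K\,d_t}{t} \;<\; \tfrac{4K^2 d_t/t}{2K} + \tfrac{2K\,d_t}{t} \;=\; \tfrac{4K\,d_t}{t},
\]
using the case hypothesis to bound the first summand. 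Either way, the bound is dominated by $\max\{\sqrt{4K d_t \maxVar_t(P)/t},\, 4K d_t/t\}$.

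The argument for $\|\cvAE_t(P) - \cvA(P)\|_\infty$ is identical, simply replacing \eqref{eq:Erewarddev} with \eqref{eq:Ecostdev}. There is no real obstacle here: the lemma is essentially a case analysis on whether the variance-balanced choice of $\lambda_{m-1}$ exceeds the uniform-exploration floor $\mu_{m-1}=\tfrac{1}{2K}$, and the only mild subtlety is to notice that the extra factor of $K$ in the first term of the claimed $\max$ is slack introduced so that the bound also absorbs the clipped regime through the $4Kd_t/t$ alternative. This lemma simply records a variance-sensitive refinement of the uniform $\sqrt{8Kd_t/t}$ bound stated in Lemma~\ref{lem:deviation_m0}, and will later allow the inductive variance bookkeeping in the proof of Lemma~\ref{lem:recursive2.main} to begin cleanly at epoch $m_0$.
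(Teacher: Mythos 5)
Your proof is correct and follows essentially the same route as the paper's: both arguments plug the constrained minimizer $\lambda_{m-1}=\min\{\tfrac{1}{2K},\sqrt{d_t/(t\,\maxVar_t(P))}\}$ into the good-event inequalities and split into the two cases according to whether the unconstrained minimizer is feasible. The only cosmetic difference is that in the first case the paper records the slightly tighter bound $\sqrt{4d_t\maxVar_t(P)/t}$ and lets the lemma statement absorb the extra factor of $K$, whereas you make the $K\ge 1$ step explicit; either is fine.
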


\begin{proof}
We only prove the second inequality, as the first may be thought of as a one-dimensional special case of the second.  By definition of $m_0$, for all $m'<m_0$, we have $\mu_{m'}=\frac{1}{2K}$. Therefore $\mu_{m-1}= \frac{1}{2K}$. 
First consider the case when $\sqrt{\frac{d_t}{t \maxVar_t(P)}} < \mu_{m-1} = \frac{1}{2K}$. 
Then, substitute $\lambda_{m-1} = \sqrt{\frac{d_t}{t \maxVar_t(P)}}$, we get
$$\|\cvAE_t(P) - \cvA(P)\|_\infty \le \sqrt{\frac{4 d_t \maxVar_t(P)}{t}}.$$
Otherwise,
$$ \maxVar_t(P) \le \frac{4K^2 d_t}{t}$$
Substituting $\lambda_{m-1} = \mu_{m-1} = \frac{1}{2K}$, we get
$$\|\cvAE_t(P) - \cvA(P)\|_\infty \le \frac{4Kd_t}{t}\,.$$
\end{proof}

\begin{lemma}
\label{lem:DeviationBound}
Assume event $\GoodEvent$ holds. Then, for all $m$, all $t$ in round $m$, all choices of distributions $P\in \SPi$
\begin{subequations}
\begin{align}
\label{eqn:rAE-concentration}
|\rAE_t(P) - \rA(P)| & \le
\begin{cases}
\max\left\{\sqrt{\frac{4 Kd_t \maxVar_t(P)}{t}}, \frac{4Kd_t}{t}\right\}, & m\le m_0 \\
\maxVar_t(P) \mu_{m-1} + \frac{d_t}{t \mu_{m-1}}, & m>m_0 \,,
\end{cases} \\
\label{eqn:cvAE-concentration}
\|\cvAE_t(P) - \cvA(P)\|_\infty & \le
\begin{cases}
\max\left\{\sqrt{\frac{4 Kd_t \maxVar_t(P)}{t}}, \frac{4Kd_t}{t}\right\}, & m\le m_0 \\
\maxVar_t(P) \mu_{m-1} + \frac{d_t}{t \mu_{m-1}}, & m>m_0 \,.
\end{cases}
\end{align}
\end{subequations}
\end{lemma}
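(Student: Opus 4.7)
The plan is to prove Lemma \ref{lem:DeviationBound} by a straightforward case split on whether $m \le m_0$ or $m > m_0$, in each case reducing to results already established under the event $\GoodEvent$. No fresh concentration argument is needed; the hard work was done in Lemma \ref{lem:Eprob} (establishing $\GoodEvent$) and in Lemma \ref{lem:Eequiv}.

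First, for the case $m \le m_0$, I would simply invoke Lemma \ref{lem:Eequiv}, which gives exactly the top branch of each displayed inequality. Since that lemma is stated under the assumption that $\GoodEvent$ holds and it uses the same quantity $\maxVar_t(P)$, there is nothing to do beyond citing it.

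Second, for the case $m > m_0$, I would appeal to the second bullet of Definition \ref{def:event} (equations \eqref{eq:Erewarddev} and \eqref{eq:Ecostdev}), which holds for every $\lambda_{m-1} \in [0, \mu_{m-1}]$. Recall that, by the definition of $m_0$, we have $\mu_{m-1} = \defmuSecond{m-1} < \tfrac{1}{2K}$, so the interval $[0, \mu_{m-1}]$ is nonempty and the event gives a legitimate bound at its right endpoint. Plugging in the specific choice $\lambda_{m-1} = \mu_{m-1}$ into \eqref{eq:Erewarddev} and \eqref{eq:Ecostdev} yields
\[
|\rAE_t(P) - \rA(P)| \;\le\; \maxVar_t(P)\mu_{m-1} + \frac{d_t}{t\mu_{m-1}},
\]
and the same bound for $\|\cvAE_t(P) - \cvA(P)\|_\infty$, which is exactly the bottom branch in the two displays.

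The only mild subtlety worth checking explicitly is that the $\lambda_{m-1} = \mu_{m-1}$ choice is uniformly valid across all $P \in \SPi$; this is immediate because the bound \eqref{eqn:Ercdev} in the definition of $\GoodEvent$ is quantified universally over $P$, $m$, and $t$, with $\lambda_{m-1}$ appearing as a free parameter the analyst may optimize. Combining the two cases then gives the lemma, and since everything is conditioned on $\GoodEvent$ holding, no new probability statement is needed. There is no genuine obstacle here; the lemma is a clean packaging step that rewrites the generic per-$\lambda$ Freedman-style bound from $\GoodEvent$ into a form directly usable in the subsequent regret analysis (e.g., the base case and inductive step of Lemma \ref{lem:recursive2.main}).
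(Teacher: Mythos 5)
Your proposal is correct and matches the paper's own (one-line) proof: the paper likewise obtains the $m\le m_0$ branch from Lemma \ref{lem:Eequiv} and the $m>m_0$ branch directly from the definition of $\GoodEvent$ with the choice $\lambda_{m-1}=\mu_{m-1}$. No issues.
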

\begin{proof}
Follows from the definition of event $\GoodEvent$ and Lemma \ref{lem:Eequiv}.
\end{proof}
\begin{lemma}
\label{cor:Eequiv}
Assume event $\GoodEvent$ holds. Then for $t\ge t_0$ in epoch $m_0$,
\begin{subequations}
\begin{align}
|\rAE_t(P) - \rA(P)| &\le \sqrt{\frac{8Kd_t}{t}}\,,\\
\|\cvAE_t(P) - \cvA(P)\|_\infty &\le \sqrt{\frac{8Kd_t}{t}}\,.
\end{align}
\end{subequations}
\end{lemma}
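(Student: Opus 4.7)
The plan is to derive this as an immediate consequence of Lemma \ref{lem:Eequiv} (the $m \le m_0$ case of Lemma \ref{lem:DeviationBound}), by plugging in two simple observations: a uniform bound on $\maxVar_t(P)$ available because every epoch preceding $m_0$ uses the ``safe'' smoothing $\mu_m = 1/(2K)$, and a bound on the linear term coming from the definition of $t_0$.

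First I would control the variance $\maxVar_t(P)$. Since $t$ lies in epoch $m_0$, the maximum in the definition of $\maxVar_t(P)$ ranges over epochs $0 \le m < m_0$. By the definition of $m_0$, every such $m$ satisfies $\mu_m = \tfrac{1}{2K}$, so the smoothed distribution obeys $\tilde Q_m^{\mu_m}(a\mid x) \ge \mu_m = \tfrac{1}{2K}$ pointwise and hence $1/\tilde Q_m^{\mu_m}(a\mid x) \le 2K$. Taking expectations over $x$ and $\pi \sim P$ in \defref{def:variance-estimates} yields $\Var(\tilde Q_m, P, \mu_m) \le 2K$, and maximizing over $m < m_0$ gives $\maxVar_t(P) \le 2K$.

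Next I would handle each branch of the max separately, going back to \eqref{eq:Ecostdev} with $\lambda_{m-1}$ chosen as in the proof of Lemma \ref{lem:Eequiv}. In the ``square-root'' branch, setting $\lambda_{m-1} = \sqrt{d_t/(t\,\maxVar_t(P))}$ gives $2\sqrt{d_t\,\maxVar_t(P)/t}$, and the variance bound $\maxVar_t(P) \le 2K$ turns this into $\sqrt{8Kd_t/t}$. In the other branch, the bound from Lemma \ref{lem:Eequiv} is $4Kd_t/t$; the hypothesis $t \ge t_0$ and the definition of $t_0$ give $d_t/t \le 1/(4K)$, so $4Kd_t/t \le 1$, and therefore $4Kd_t/t \le \sqrt{4Kd_t/t} \le \sqrt{8Kd_t/t}$. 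Combining the branches yields the consumption inequality; the reward inequality is proved identically since Lemma \ref{lem:Eequiv} treats the two on the same footing.

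There is essentially no deep obstacle here — this is a cleanup of Lemma \ref{lem:Eequiv} tailored to $t \ge t_0$ inside epoch $m_0$. The one conceptual point is that the variance bound $\maxVar_t(P) \le 2K$ is specific to this range of $t$: in any epoch past $m_0$ the smoothing $\mu_m$ shrinks below $1/(2K)$ and $\maxVar_t(P)$ can scale like $1/\mu_m$, which is precisely why the corollary is confined to epoch $m_0$ and why the later analysis must instead use the second constraint in (OP) to control variance.
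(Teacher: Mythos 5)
Your proposal is correct and follows exactly the paper's route: the paper's proof of this lemma is a one-line deduction from the $m\le m_0$ branch of Lemma~\ref{lem:DeviationBound}, citing precisely the two facts you establish, namely $\maxVar_t(P)\le 2K$ (because every epoch $m<m_0$ has $\mu_m=\tfrac{1}{2K}$) and $\tfrac{4Kd_t}{t}\le 1$ for $t\ge t_0$, so that both branches of the max are dominated by $\sqrt{8Kd_t/t}$. You have merely written out the justifications the paper leaves implicit, and you are right to use the bound $\sqrt{4d_t\maxVar_t(P)/t}$ actually derived in the proof of Lemma~\ref{lem:Eequiv} (rather than the extra factor of $K$ appearing in its displayed statement), since that is what makes the constant $8K$ come out correctly.
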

\begin{proof}
Follows from Lemma \ref{lem:DeviationBound}, using that $\maxVar_t \le 2K$, and $\frac{4K d_t}{t} \le 1$ for $t\ge t_0$ in epoch $m_0$.
\end{proof}

\begin{lemma}
\label{lem:maxVarEquiv1} 
Assume event $\GoodEvent$ holds. For any round $t\in [\defTrest]$, and any policy $P \in \SPi$, let $m\in \mathbb{N}$ be the epoch achieving the max in the definition of $\maxVar_t(P)$. Then,
\begin{eqnarray*}
\maxVar_t(P) & \le & \left\{ 
\begin{array}{ll}
2K & \text{ if } \mu_m = \frac{1}{2K}\,,\\
\theta_1 K + \frac{\regE_{\tau_m}(P)}{\theta_2 \mu_m} & \text{ if } \mu_m < \frac{1}{2K}\,,
\end{array}
\right.
\end{eqnarray*}
where $\theta_1=94.1$ and $\theta_2=\psi/6.4=100/6.4$ are universal constants.
\end{lemma}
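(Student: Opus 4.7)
\bigskip
\noindent\textbf{Proof proposal for Lemma \ref{lem:maxVarEquiv1}.} Fix $t$, $P$, and the epoch $m<m(t)$ achieving the max, so $\maxVar_t(P) = \Var(\tilde{Q}_m, P, \mu_m)$. I will split on the value of $\mu_m$. In the first case, $\mu_m=\frac{1}{2K}$, the bound is immediate: because $\tilde{Q}_m\in\SPi$, we have $\tilde{Q}_m^{\mu_m}(a|x) = (1-K\mu_m)\tilde{Q}_m(a|x)+\mu_m \ge \mu_m$ pointwise, and hence $\Var(\tilde{Q}_m,P,\mu_m) \le 1/\mu_m = 2K$. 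This case does not rely on either event $\GoodEvent$ or on any structure of $\tilde{Q}_m$ beyond it being a distribution.

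For the second case, $\mu_m<\frac{1}{2K}$, first observe that by the definition $\mu_m = \defmu{m}$ in Step~\ref{algstep:Init} of \algref{alg:main}, the strict inequality $\mu_m<\frac{1}{2K}$ forces $\sqrt{d_{\tau_m}/(K\tau_m)}<\frac{1}{2K}$, equivalently $d_{\tau_m}/\tau_m < \frac{1}{4K}$, which by definition of $m_0$ means $m\ge m_0$. This is exactly the regime in which the variance estimation inequality \eqref{eq:Evarest} in event $\GoodEvent$ is valid, so
\[
\Var(\tilde{Q}_m, P, \mu_m) \;\le\; 6.4\,\hVar_m(\tilde{Q}_m, P, \mu_m) + 81.3\,K.
\]
To invoke the second constraint of (OP) I now replace $\tilde{Q}_m$ by $Q_m$ inside the empirical variance. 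Because $\tilde{Q}_m$ is obtained from $Q_m\in\SSPi$ by reassigning its deficit mass $1-\sum_\pi Q_m(\pi)$ to the default policy, we have $\tilde{Q}_m(a|x)\ge Q_m(a|x)$ for every action $a$ and context $x$. Consequently $\tilde{Q}_m^{\mu_m}(a|x) \ge Q_m^{\mu_m}(a|x)$, and so $\hVar_m(\tilde{Q}_m,P,\mu_m)\le \hVar_m(Q_m,P,\mu_m)$. The second constraint of (OP) (which $Q_m$ satisfies by construction, with history $H_{\tau_m}$ and minimum probability $\mu_m$) then yields
\[
\hVar_m(Q_m, P, \mu_m) \;\le\; b_P + 2K \;=\; \frac{\regE_{\tau_m}(P)}{\psi\mu_m} + 2K.
\]
Chaining the two bounds and using $\psi=100$,
\[
\Var(\tilde{Q}_m,P,\mu_m) \;\le\; \frac{6.4}{\psi}\cdot\frac{\regE_{\tau_m}(P)}{\mu_m} + 12.8K + 81.3K \;=\; \frac{\regE_{\tau_m}(P)}{\theta_2\mu_m} + \theta_1 K,
\]
with $\theta_2 = \psi/6.4 = 100/6.4$ and $\theta_1 = 94.1$, as claimed.

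The only non-routine step is the monotonicity argument that lets us translate the (OP) constraint, which is stated in terms of the sub-distribution $Q_m$, into a bound on $\hVar_m(\tilde{Q}_m,P,\mu_m)$ involving the smoothed \emph{proper} distribution $\tilde{Q}_m$ actually used by \SAMPLE; this is what makes the second constraint of (OP) bite for the variances appearing in $\maxVar_t$. The remaining ingredients — the threshold calculation identifying $\mu_m<\frac{1}{2K}$ with $m\ge m_0$, the trivial pointwise lower bound $\tilde{Q}_m^{\mu_m}\ge\mu_m$ in Case~1, and the invocation of \eqref{eq:Evarest} in Case~2 — are direct substitutions.
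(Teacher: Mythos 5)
Your proof is correct and follows essentially the same route as the paper's: bound the smoothed probability below by $\mu_m$ in the first case, and in the second case chain the variance-concentration inequality \eqref{eq:Evarest} from $\GoodEvent$, the monotonicity $\hVar_m(\tilde{Q}_m,P,\mu_m)\le\hVar_m(Q_m,P,\mu_m)$ coming from $\tilde{Q}_m(\pi)\ge Q_m(\pi)$, and the second constraint of (OP). Your added details — the explicit verification that $\mu_m<\frac{1}{2K}$ forces $m\ge m_0$ so that \eqref{eq:Evarest} applies, and the pointwise bound $\tilde{Q}_m^{\mu_m}\ge\mu_m$ — are steps the paper leaves implicit, and the constant arithmetic matches.
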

\begin{proof}
Fix a round $t$ and a policy distribution $P \in \SPi$.  Let ${m} < m(t)$ be the epoch achieving the $\max$ in the definition of $\maxVar_t(P)$ (\defref{def:variance-estimates}), so $\maxVar_t(P) = V(\tilde{Q}_m, P, \mu_m)$.
If $\mu_m=1/(2K)$, which immediately implies $\maxVar_t(P) \le 2K$ by definition.

If $\mu_m<1/(2K)$, then $\mu_m = \defmu{m} = \sqrt{\frac{d_{\tau_m}}{K\tau_m}}$, and we have
\begin{eqnarray*}
V(\tilde{Q}_m,P,\mu_m) &\le& 6.4 \hat{V}(\tilde{Q}_m,P,\mu_m) + 81.3 K \\
&\le& 6.4 \hat{V}(Q_m,P,\mu_m) + 81.3 K \\
&\le& 6.4 \left(2K + \frac{\regE_{\tau_m}(P)}{\psi\mu_m}\right) + 81.3 K \\
&=& \theta_1 K + \frac{\regE_{\tau_m}(P)}{\theta_2\mu_m}\,,
\end{eqnarray*}
where the first step is from \eqnref{eq:Evarest} (which holds in event $\GoodEvent$); the second step is from the observation that $\tilde{Q}_m(\pi) \ge Q_m(\pi)$ for all $\pi\in\Pi$; the third step is from the constraint in (OP) that $Q_m$ satisfies; and the last step follows from the universal constants $\theta_1$ and $\theta_2$ defined earlier. 
\end{proof}

\begin{lemma}
\label{lem:recursive2}
Assume event $\GoodEvent$ holds.  Define $c_0\defeq4\rho(1+\theta_1)$.  For all epochs $m\ge m_0$, all rounds $t \ge t_0$ in epoch $m$, and all policies $P \in \SPi$,
\begin{eqnarray*}
\reg(P) &\le& 2 \regE_t(P) + c_0K\mu_m \\
\regE_t(P) &\le& 2 \reg_t(P) + c_0K\mu_m,
\end{eqnarray*}
for $\reg(P), \regE_t(P)$ as defined in \secref{sec:CBwKalgo}.
\end{lemma}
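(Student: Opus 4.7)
The plan is to argue by strong induction on the epoch index $m \ge m_0$, establishing both inequalities simultaneously for every round $t \ge t_0$ in epoch $m$. The workhorse is the algebraic identity already introduced in the proof sketch of Lemma \ref{lem:recursive2.main}: starting from the definitions of $\reg(P)$ and $\regE_t(P)$, and combining (i) optimality of $P'$ for the relaxed budget $B'$ with Lemmas \ref{lem:propertyZ} and \ref{lem:Zestimate}, which give $\rA(P') \ge \rA(P_t) - (Z/2)\dis(\cvA(P_t), B')$ (this step uses $B' \ge B/2$), and (ii) optimality of $P_t$ for the empirical estimate together with $\dis(\cvA(P'), B')=0$, one obtains
\begin{equation*}
(Z+1)\,|\regE_t(P) - \reg(P)| \le |\rAE_t(P_t) - \rA(P_t)| + |\rAE_t(P) - \rA(P)| + Z\|\cvAE_t(P_t) - \cvA(P_t)\|_\infty + Z\|\cvAE_t(P) - \cvA(P)\|_\infty.
\end{equation*}
This reduces everything to controlling the four deviation terms on the right.

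For the base case $m = m_0$ and $t \ge t_0$, I would plug Corollary \ref{cor:Eequiv} into the identity, which bounds each deviation by $\sqrt{8Kd_t/t}$. Using $t > \tau_{m_0-1}$ and $d_t \le d_{\tau_{m_0}}$, together with $K\mu_{m_0} = \sqrt{Kd_{\tau_{m_0}}/\tau_{m_0}}$ and $\rho \ge \sqrt{\tau_{m_0}/\tau_{m_0-1}}$, this is at most $\sqrt{8}\rho K\mu_{m_0}$. Dividing by $(Z+1)$ (and noting the $Z$-factor on consumption terms is absorbed against the $Z+1$ on the left) gives $|\regE_t(P) - \reg(P)| \le 2\sqrt{8}\rho K\mu_{m_0}$, which is well below $c_0 K\mu_{m_0}$ since $c_0 = 4\rho(1+\theta_1)$. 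Both inequalities in the lemma then follow immediately because $\reg(P), \regE_t(P) \ge 0$.

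For the inductive step $m > m_0$, I would instead apply Lemma \ref{lem:DeviationBound} with $\lambda_{m-1} = \mu_{m-1}$ to bound each deviation by $\maxVar_t(\cdot)\mu_{m-1} + d_t/(t\mu_{m-1})$; the second term is $O(K\mu_{m-1})$ by $K\mu_{m-1}^2 = d_{\tau_{m-1}}/\tau_{m-1}$ and the schedule property $\tau_m \le 2\tau_{m-1}$. For $\maxVar_t(P)$, Lemma \ref{lem:maxVarEquiv1} gives a bound of the form $\theta_1 K + \regE_{\tau_{m^*}}(P)/(\theta_2 \mu_{m^*})$ for some epoch $m^* < m$ (the case $\mu_{m^*} = 1/(2K)$ gives the trivial bound $\maxVar_t(P) \le 2K$); applying the inductive hypothesis at epoch $m^*$ replaces $\regE_{\tau_{m^*}}(P)$ by $2\reg(P) + c_0 K\mu_{m^*}$, yielding $\maxVar_t(P) \le (\theta_1 + c_0/\theta_2)K + 2\reg(P)/(\theta_2 \mu_{m^*})$. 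Plugging this back, and using $\mu_{m-1}/\mu_{m^*} \le 1$, the $P$-deviations are bounded by $O(K\mu_{m-1}) + (2/\theta_2)\reg(P)$, and similarly for $P_t$.

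The main obstacle is the \emph{circularity} introduced by the $P_t$-terms: bounding $\maxVar_t(P_t)$ requires controlling $\reg(P_t)$, which is precisely what the lemma is meant to provide. I plan to resolve this by first applying the central identity at $P = P_t$; since $\regE_t(P_t) = 0$ by definition, this reads $(Z+1)\reg(P_t) \le (\text{const})K\mu_{m-1} + (2Z/\theta_2)\reg(P_t)$, and because $\theta_2 = 100/6.4$ is large enough that $2Z/\theta_2 \le (Z+1)/2$, the inequality solves to $\reg(P_t) = O(K\mu_{m-1})$ on its own. With this bound in hand, the $P_t$-deviation terms also become $O(K\mu_{m-1})$; substituting everything back into the central identity gives $|\regE_t(P) - \reg(P)| \le (1/2)\reg(P) + (c_0/2)K\mu_m$ (using $\mu_{m-1} \le \rho \mu_m$), from which both inequalities of the lemma follow. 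The remaining bookkeeping is to verify that the constants close with $c_0 = 4\rho(1+\theta_1)$.
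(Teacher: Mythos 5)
Your overall architecture matches the paper's: induction on epochs, the decomposition of $(Z+1)(\regE_t(P)-\reg(P))$ via the optimality of $P'$ and $P_t$, Lemma \ref{lem:DeviationBound} plus Lemma \ref{lem:maxVarEquiv1} plus the inductive hypothesis in the inductive step, and a self-referential inequality (solvable because $\theta_2$ is large) to break the circularity at $P_t$. The paper resolves that circularity by proving the direction $\reg(P)\le 2\regE_t(P)+c_0K\mu_m$ first for all $P$ and then specializing to $P=P_t$ (where $\regE_t(P_t)=0$); your variant of solving the fixed-point inequality directly at $P=P_t$ is the same trick in a different order and closes just as well.

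There is, however, one concrete error in your ``central identity'': the two-sided bound
$(Z+1)\,|\regE_t(P) - \reg(P)| \le \mathrm{dev}(P_t)+\mathrm{dev}(P)+Z\,\mathrm{dev}(P_t)+Z\,\mathrm{dev}(P)$
is not what the argument yields. Only the direction $\regE_t(P)-\reg(P)$ is controlled by deviations of $P_t$ and $P$ (via $\rA(P')\ge \rA(P_t)-Z\dis(\cvA(P_t),B')$). For the direction $\reg(P)-\regE_t(P)$ you must instead invoke the \emph{empirical} optimality of $P_t$, i.e.\ $\rAE_t(P_t)-Z\dis(\cvAE_t(P_t),B')\ge \rAE_t(P')-Z\dis(\cvAE_t(P'),B')$, together with $\dis(\cvA(P'),B')=0$; this replaces the $P_t$-deviations by $P'$-deviations (the paper's Equations \eqref{eq:2} and \eqref{eq:4} are deliberately asymmetric in exactly this way). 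There is no route to bounding $\reg(P)-\regE_t(P)$ by $P_t$-deviations alone, since $P_t$ is only empirically optimal and its true penalized value could sit strictly below $\rA(P')$. Consequently your inductive step is missing the term $\maxVar_t(P')\mu_{m-1}$. The omission is benign --- Lemma \ref{lem:maxVarEquiv1} plus the inductive hypothesis plus $\reg(P')=0$ give $\maxVar_t(P')\mu_{m-1}\le(\theta_1+c_0/\theta_2)K\mu_{m-1}$ with no self-referential term at all --- but as written your identity is unjustified and the $P'$ bookkeeping needs to be added for the proof to be complete.
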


\lihong{To adapt/simplify this proof}\shipra{Finished a pass on 1/26/2016, check} \lihong{02/05: Looks good; will make small changes soon --- done on 02/09.}
\begin{proof}
Proof is by induction. For base case $m=m_0$, and $t\ge t_0$ in epoch $m$. 

Consider $m=m_0$, and $t\ge t_0$ in epoch $m$. 
For all $P\in \SPi$,
\begin{eqnarray}
\label{eq:1}
(Z+1)(\regE_t(P) - \reg(P)) & = & \rAE_t(P_t) - \rAE_t(P) - \rA(P') + \rA(P) \nonumber\\
&  & - \Zg\Big(\dis(\cvAE_t(P_t), B') - \dis(\cvAE_t(P), B') + \dis(\cvA(P), B')\Big)\,. \nonumber\\
\end{eqnarray}
W.l.o.g., we can assume that $B\ge 2(\defTinit) + 2c\sqrt{KT\ln(T|\Pi|/\delta)}$, because otherwise $B= O(\sqrt{KT\ln(dT|\Pi|)/\delta})$ and the regret bound of Theorem \ref{thm:packing} is trivial. Under this assumption, $B\ge 2(B-B')$, so that $B' \ge B/2$. Also, observe that since $B\ge B'$, $\OPT(B) \ge \OPT(B')$. Then, by Lemma \ref{lem:propertyZ} and choice of $Z$ as specified by Lemma \ref{lem:Zestimate}, we have that for any $\gamma \ge 0$
\begin{equation}
\label{eq:propertyZ}
\OPT(B'+\gamma) \le \OPT(B') + \frac{Z}{2} \gamma.
\end{equation}
Now, since $P'$ is optimal policy for budget $B'$, we obtain that $\rA(P') = \OPT(B')$. Also, by definition of $\dis(\cvA(P_t), B')$, $\rA(P_t)$ can violate any budget constraint by at most $\dis(\cvA(P_t), B')$, which gives $\rA(P_t) \le \OPT(B'+\dis(\cvA(P_t), B'))$. Therefore, using \eqref{eq:propertyZ} with $\gamma=\dis(\cvA(P_t), B')$,
$$\rA(P') \ge \rA(P_t) - \frac{\Zg}{2} \dis(\cvA(P_t), B') \ge \rA(P_t) - \Zg \dis(\cvA(P_t), B')\,.$$
Substituting in \eqref{eq:1}, we get
\begin{eqnarray}
(\Zg+1)(\regE_t(P) - \reg(P)) & \le & \rAE_t(P_t) - \rAE_t(P) - \rA(P_t) + \Zg \dis(\cvA(P_t) B')+ \rA(P)\nonumber\\
&  & - \Zg\Big(\dis(\cvAE_t(P_t), B') - \dis(\cvAE_t(P), B') + \dis(\cvA(P), B')\Big) \nonumber\\
& \le & |\rAE_t(P_t) - \rA(P_t)| + |\rAE_t(P) - \rA(P)| + \nonumber\\
& & \Zg\|\cvAE_t(P_t) - \cvA(P_t)\|_\infty + \Zg\|\cvAE_t(P) - \cvA(P)\|_\infty\,. \label{eq:2}
\end{eqnarray}

For the other side, by definition of $P_t$, we have that $\rAE(P_t)) - \Zg \dis(\cvAE(P_t), B') \ge \rAE(P) - \Zg \dis(\cvAE(P), B')$ for any $P\in\SPi$. Substituting in \eqref{eq:1}, and using that $\dis(\cvA(P'), B') =0$, we get
\begin{eqnarray*}
(\Zg+1)(\regE_t(P) - \reg(P)) & \ge & \rAE_t(P') - \rAE_t(P) - \rA(P') + \rA(P)\nonumber\\
&  & - \Zg\Big(\dis(\cvAE_t(P'), B') - \dis(\cvAE_t(P), B') + \dis(\cvA(P), B')\Big) \nonumber\\
& \ge & -|\rAE_t(P') - \rA(P')| - |\rAE_t(P) - \rA(P)|\nonumber\\
& & -\Zg\|\cvAE_t(P') - \cvA(P')\|_\infty  - \Zg\|\cvAE_t(P) - \cvA(P)\|_\infty \,.
\end{eqnarray*}
Therefore,
\begin{eqnarray}
\label{eq:4}
(Z+1)(\reg(P) -\regE_t(P)) & \le & |\rAE_t(P') - \rA(P')| + |\rAE_t(P) - \rA(P)| \nonumber\\
& & + Z\|\cvAE_t(P') - \cvA(P')\|_\infty  + Z\|\cvAE_t(P) - \cvA(P)\|_\infty.\nonumber\\
\end{eqnarray}

Substituting bounds from Lemma \ref{lem:deviation_m0},
we obtain,
$$|\regE_t(P) - \reg(P)| \le 2\sqrt{\frac{8Kd_t}{t}} \le c_0 K \mu_m,$$
for $c_0\ge 4\sqrt{2}$.  The base case then follows from the non-negativity of $\regE_t(P)$ and $\reg(P)$. 

Now, fix some epoch $m>m_0$. We assume as the inductive hypothesis that for all epochs $m'<m$, all rounds $t'$ in epoch $m'$, and all $P \in \Pi$,
\begin{eqnarray*}
\reg(P) \le 2 \regE_{t'}(P) +  c_0 K \mu_{m'},\\
\regE_{t'}(P) \le 2 \reg(P) +  c_0 K \mu_{m'}.
\end{eqnarray*}


Fix a round $t$ in epoch $m$ and policy $P\in \SPi$. Using Equation \eqref{eq:4} and Equation \eqref{eqn:Ercdev} (which holds under event ${\cal E}$),
\begin{eqnarray}
\label{eq:5}
\reg(P) - \regE_t(P) & \le & \frac{1}{(Z+1)} \Big( |\rAE_t(P') - \rA(P')| + |\rAE_t(P) - \rA(P)| \nonumber\\
& & + Z\|\cvAE_t(P')- \cvA(P')\|_\infty  + Z\|\cvAE_t(P) - \cvA(P)\|_\infty \Big) \nonumber\\
& \le & (\maxVar_t(P) + \maxVar_t(P'))\mu_{m-1} +  \frac{ 2 d_t}{t \mu_{m-1}}\,. \label{eqn:inductive-reg}
\end{eqnarray}
Similarly, using Equation \eqref{eq:2},
\begin{eqnarray}
\label{eqn:inductive-reg2}
\regE_t(P) - \reg(P) & \le  &  \big(\maxVar_t(P_t) + \maxVar_t(P)\big)\mu_{m-1} + \frac{ 2 d_t}{t \mu_{m-1}} 
\end{eqnarray}

By \lemref{lem:maxVarEquiv1}, there exist epochs $m',m''<m$ such that
\begin{eqnarray*}
\maxVar_t(P) &\le& \theta_1 K + \frac{\regE_t(P)}{\theta_2\mu_{m'}}\1{\mu_{m'}<\frac{1}{2K}}\,, \\
\maxVar_t(P') &\le& \theta_1 K + \frac{\regE_t(P')}{\theta_2\mu_{m''}}\1{\mu_{m''}<\frac{1}{2K}}\,.
\end{eqnarray*}
If $\mu_{m'}<1/(2K)$, then $m_0 \le m' \le m-1$, and the inductive hypothesis implies
\[
\frac{\regE_{\tau_{m'}}(P)}{\theta_2\mu_{m'}}
\le \frac{2\reg(P)+c_0K\mu_{m'}}{\theta_2\mu_{m'}}
= \frac{c_0K}{\theta_2} + \frac{2\reg(P)}{\theta_2\mu_{m'}}
\le \frac{c_0K}{\theta_2} + \frac{2\reg(P)}{\theta_2\mu_{m-1}}\,,
\]
where the last step uses the fact that $\mu_{m'} \ge \mu_{m-1}$ for $m'\le m-1$.  Therefore, no matter whether $\mu_{m'}<1/(2K)$ or not, we always have
\begin{equation}
\maxVar_t(P) \mu_{m-1} \le \left(\theta_1+\frac{c_0}{\theta_2}\right)K\mu_{m-1} + \frac{2}{\theta_2}\reg(P)\,. \label{eqn:inductive-m1}
\end{equation}
If $\mu_{m''}<1/(2K)$, then $m_0 \le m'' \le m-1$, and the inductive hypothesis implies
\[
\frac{\regE_{\tau_{m''}}(P')}{\theta_2\mu_{m''}}
\le \frac{2\reg(P') + c_0K\mu_{m''}}{\theta_2\mu_j} = \frac{c_0K}{\theta_2}\,,
\]
where the last step uses the fact that $\reg(P')=0$.  Therefore, no matter whether $\mu_{m''}<1/(2K)$ or not, we always have
\begin{equation}
\maxVar_t(P') \mu_{m-1} \le \left(\theta_1+\frac{c_0}{\theta_2}\right)K\mu_{m-1}\,. \label{eqn:inductive-m2}
\end{equation}
Combining Equations~\eqref{eqn:inductive-reg}, \eqref{eqn:inductive-m1} and \eqref{eqn:inductive-m2} gives
\begin{equation}
\reg(P) \le \frac{1}{1-2/\theta_2}\left(\regE_t(P)+2(\theta_1+\frac{c_0}{\theta_2})K\mu_{m-1}+\frac{2d_t}{t\mu_{m-1}}\right)\,. \label{eqn:inductive-regP}
\end{equation}
Since $m>m_0$, the definition of $\rho$ ensures that $\mu_{m-1}\le\rho\mu_m$.  Also, since $t>\tau_{m-1}$, $\frac{d_t}{t\mu_{m-1}} \le \frac{K\mu_{m-1}^2}{\mu_{m-1}} \le \rho K\mu_m$.  Applying these inequalities and the facts $c_0=4\rho(1+\theta_1)$ and $\theta_2 \ge 8\rho$ in \eqnref{eqn:inductive-regP},  we have thus proved
\begin{equation}
\reg(P) \le 2\regE_t(P) + c_0K\mu_m\,. \label{eqn:inductive-part1}
\end{equation}

The other part can be proved similarly. 
By \lemref{lem:maxVarEquiv1}, there exist epochs $m''<m$ such that
\begin{eqnarray*}
\maxVar_t(P_t) &\le& \theta_1 K + \frac{\regE_t(P_t)}{\theta_2\mu_{m''}}\1{\mu_{m''}<\frac{1}{2K}}\,.
\end{eqnarray*}
If $\mu_{m''}<1/(2K)$, then $m_0 \le m'' \le m-1$, and the inductive hypothesis together with \eqnref{eqn:inductive-part1} imply
\[
\frac{\regE_{\tau_{m''}}(P_t)}{\theta_2\mu_{m''}}
\le \frac{2\reg(P_t) + c_0K\mu_{m''}}{\theta_2\mu_{m''}}
\le \frac{2(2\regE(P_t) + c_0K\mu_{m''}) + c_0K\mu_{m''}}{\theta_2\mu_{m''}}\,.
\]
Since $\regE(P_t)=0$ by definition, the above upper bound is simplified to
\[
\frac{\regE_{\tau_{m''}}(P_t)}{\theta_2\mu_{m''}}
\le \frac{3c_0K\mu_{m''}}{\theta_2\mu_{m''}} = \frac{3c_0K}{\theta_2}\,.
\]
Therefore, no matter whether $\mu_{m''}<1/(2K)$ or not, we always have
\begin{equation}
\maxVar_t(P_t) \mu_{m-1} \le (\theta_1+\frac{3c_0}{\theta_2})K\mu_{m-1}\,. \label{eqn:inductive-m3}
\end{equation}
Combining Equations~\eqref{eqn:inductive-reg2}, \eqref{eqn:inductive-m1} and \eqref{eqn:inductive-m3} gives
\begin{equation}
\regE_t(P) \le (1+\frac{2}{\theta_2})\reg(P)+2(\theta_1+\frac{2c_0}{\theta_2})K\mu_{m-1}+\frac{2d_t}{t\mu_{m-1}}\,. \label{eqn:inductive-regP2}
\end{equation}
Since $m>m_0$, the definition of $\rho$ ensures that $\mu_{m-1}\le\rho\mu_m$.  Also, since $t>\tau_{m-1}$, $\frac{d_t}{t\mu_{m-1}} \le \frac{K\mu_{m-1}^2}{\mu_{m-1}} \le \rho K\mu_m$.  Applying these inequalities and the facts $c_0=4\rho(1+\theta_1)$ and $\theta_2 \ge 8\rho$ in \eqnref{eqn:inductive-regP},  we have thus proved the second part in the inductive statement:
\[
\regE_t(P) \le 2 \reg(P) + c_0K\mu_m\,,
\]
and hence the whole lemma.
\end{proof}

\subsection{Main Proof}
\label{app:proof-cbwk}

\lihong{Adapt proof.}\shipra{Done a pass on 1/31/2016, some lemmas need to be added, made comments below.} \lihong{Filled in some details and fixed small errors on 2/9/2016.}

We are now ready to prove \thmref{thm:packing}.  By \lemref{lem:Eprob}, event $\GoodEvent$ holds with probability at least $1-\delta/2$.  Hence, it suffices to prove the regret upper bound whenever $\GoodEvent$ holds.

 Recall from the description of \prettyref{alg:main} in Section \ref{sec:algorithm} that the algorithm samples action $a_t$ taken at time $t$ in epoch $m$ from smoothed projection $\tilde{Q}^{\mu_{m-1}}_{t}$ of $\tilde{Q}_{t}$,  where $\tilde{Q}_t$ is constructed by assigning all the remaining weight from $Q_{m-1}$ to $P_t$. From the discussion in \appref{app:algo-implementation}, we can represent $\tilde{Q}_t$ as a linear combination of $P\in\SPi$ as follows: $\tilde{Q}_t = \sum_{P\in\SPi} \alpha_P(\tilde{Q}_t) P = \sum_{P\in\SPi} \alpha_P(Q_{m-1}) P + (1-\sum_{P\in\SPi} \alpha_P(Q_{m-1})) P_t$.

Let $\mu_t = \mu_{m(t)-1}$, where $m(t)$ denotes the epoch in which time step $t$ lies: $m(t)=m$ for $t\in [\tau_{m-1}+1, \tau_m]$.

\begin{eqnarray}
\label{eq:rewardBound}
\lefteqn{\rA(P^*) - \frac{1}{\defTrest} \sum_t \rA(\tilde{Q}_t)} \nonumber\\
& = &  \frac{1}{\defTrest} \sum_t \sum_{P\in \SPi} \alpha_P(\tilde{Q}_t) (\rA(P^*) - \rA(P)) \nonumber \\ 
& = & \frac{1}{\defTrest} \sum_t \sum_{P\in \SPi} \alpha_P(\tilde{Q}_t) (\rA(P') - \rA(P))   + (\rA(P^*)-\rA(P'))\nonumber\\
& = & \frac{1}{\defTrest} \sum_t \sum_{P\in \SPi} \alpha_P(\tilde{Q}_t) (Z+1)\reg(P) - Z \dis(\cvA(P), B') + (\rA(P^*)-\rA(P'))\,,\nonumber\\
& \le & \frac{(Z+1)}{\defTrest} \sum_t \sum_{P\in \SPi} \alpha_P(\tilde{Q}_t) \reg(P) + (\rA(P^*)-\rA(P'))\,,
\end{eqnarray}
The last inequality simply follows from the non-negativeness of the function $\dis(\cdot, \cdot)$. 
Now, by observation in Lemma \ref{lem:propertyZ}, and using $B'\ge B/2$, $B'=B-\defTinit-c\sqrt{KT\ln(T|\Pi|/\delta)}$,
$$\rA(P^*) - \rA(P') \le \frac{\OPT(B')}{B'}\frac{(B-B')}{\defTrest} \le \frac{2\cdot\OPT}{B} (T_0+c\sqrt{\frac{K}{\defTrest}\ln(Td|\Pi|/\delta)}.$$ 

To bound first term in \eqref{eq:rewardBound}, note that for $m \le m_0$, $\mu_{m-1} = \frac{1}{2K}$. So, trivially, for $t$ in epoch $m\le m_0$, 
\begin{equation}
\label{eq:regretBound1}
 \sum_{P\in \SPi} \alpha_P(\tilde{Q}_{t}) \reg(P) \le c_0K\psi\mu_{m-1}\,.
\end{equation}

Suppose $\GoodEvent$ holds.  Then, \lemref{lem:recursive2} implies that for all epochs $m\ge m_0$, all rounds $t \ge t_0$ in epoch $m$, and all policies $P \in \SPi$, we have 
\[
\reg(P)\le 2 \regE_t(P) + c_0K\mu_m\,.
\]
Therefore, for $t$ in such epochs $m$, using the first condition in $\text{OP}$ (from \secref{app:algo-implementation}), we get
\begin{eqnarray}
\label{eq:regretBound2}
\sum_{P\in \SPi} \alpha_P(\tilde{Q}_{t}) \reg(P) & \le & \sum_{P\in\SPi} \alpha_P(\tilde{Q}_{t})  (2\regE_t(P) +  c_0 K\psi \mu_{m-1} ) \nonumber\\
& = & \sum_{P\in\SPi} \alpha_P(Q_{m-1})  (2\regE_t(P) +  c_0 K\psi \mu_{m-1} ) \nonumber\\
& \le & (c_0+2) K\psi \mu_{m-1}\,.
\end{eqnarray}
The equality in above holds because by definition, $\tilde{Q}_{t}$ assigns remaining weight from $Q_{m-1}$ to $P_t$, and $\regE_t(P_t)=0$.

Substituting in Equation \eqref{eq:rewardBound}, we get,
\begin{eqnarray}
\lefteqn{\rA(P^*) - \frac{1}{\defTrest} \sum_t \rA(\tilde{Q}_t)} \nonumber \\
&\le& \frac{(Z+1) K\psi(c_0+2)}{\defTrest} \sum_m \mu_{m-1}(\tau_m-\tau_{m-1}) + \frac{2c\cdot\OPT}{B} \sqrt{\frac{K}{\defTrest}\ln(T|\Pi|/\delta)}\,.
\label{eq:tmp1}
\end{eqnarray}
Applying an upper bound (Lemma~16 of \cite{MiniMonster}) on the sum over $\mu_{m-1}$ above gives \shipra{TO add: also refer to \appref{app:proof} more detailed explanation of this bound} \shipra{Could not find this proof anywhere in our older version. We just refer to \cite{MiniMonster}} \lihong{Lemma to add.  Also added the $(d+1)$ factor inside $\ln$.}
\begin{equation}
\label{eq:tmp2}
\sum_m \mu_{m-1}(\tau_m-\tau_{m-1}) \le 4\left( \ln\frac{16T^2(d+1)|\Pi|}{\delta} + \sqrt{\frac{\defTrest}{K}\ln\frac{64T^2(d+1)|\Pi|}{\delta}}\right)
\end{equation}
Substituting these bounds, and using $Z\le 24 \frac{\OPT}{B}+8$ from Lemma \ref{lem:Zestimate}, we get
\lihong{Show $Z$-dependence or not?}
\lihong{Where does the middle term inside the big-O below come from?}
\begin{eqnarray*}
\rA(P^*) - \frac{1}{\defTrest} \sum_t \rA(\tilde{Q}_t) & = & 
O\left( \frac{\OPT}{B}\left(\sqrt{\frac{K}{\defTrest}\ln\frac{dT|\Pi|}{\delta}} + \sqrt{\frac{1}{\defTrest}\ln\frac{d}{\delta}} + \frac{K}{\defTrest}\ln\frac{dT|\Pi|}{\delta}\right) \right)
\end{eqnarray*}

Next, we show that $\frac{1}{\defTrest}\sum_t r_t(a_t) $,  is close to $\frac{1}{\defTrest} \sum_t \rA(\tilde{Q}_t) $.  Recall that the algorithm samples $a_t$ from $\tilde{Q}_t^{\mu_t}$. Define the random variable at step $t$ by 
\lihong{Please check last term below.}
\[
Y_t \defeq r_t(a_t) - \left(\sum_{\pi\in\Pi} (1-K\mu_t)\tilde{Q}_t(\pi)r_t(\pi(x_t)) + \mu_t\sum_ar_t(a)\right).
\]
It is easy to see $E[Y_t|H_{t-1}]=0$, so the Azuma-Hoeffding inequality for martingale sequences implies that, with probability at least $1-\delta/2$,
\begin{equation*}
\epsilon\defeq\sqrt{\frac{1}{2\defTrest}\ln\frac{4}{\delta}} \ge |\frac{1}{\defTrest}\sum_{t=1}^{\defTrest} Y_t|\,. 
\end{equation*}
By definition of $Y_t$, we have with probability at least $1-\delta/2$ that
\lihong{Please check: $K+1$ changed to $K$ below}
\begin{equation}
\label{eq:sampleAzumaApplicationCBwK}
\left|\frac{1}{\defTrest}\sum_t r_t(a_t) - \frac{1}{\defTrest} \sum_t \rA(\tilde{Q}_t)\right| \le \epsilon + \frac{K}{\defTrest}\sum_{t=1}^\defTrest \mu_t,
\end{equation}
which implies, together with the triangle inequality and Equation \eqref{eq:tmp2}, that (assuming $\GoodEvent$ holds) with probability $1-\frac{\delta}{2}$,

\begin{eqnarray}
\label{eq:finalReward}
\rA(P^*) -  \frac{1}{\defTrest}\sum_t r_t(a_t) & = & O\left( \frac{\OPT}{B} \left(\sqrt{\frac{K}{\defTrest}\ln\frac{T|\Pi|}{\delta}} + \sqrt{\frac{1}{\defTrest}\ln\frac{1}{\delta}} + \frac{K}{\defTrest}\ln\frac{T|\Pi|}{\delta}\right) \right)
\end{eqnarray}
By \lemref{lem:Eprob}, event $\GoodEvent$ holds with probability at least $1-\delta/2$.  Therefore, by multiplying by $\defTrest$ on both sides and adding $T_0=\defTinit$ (an upper bound of cumulative regret incurred in the first $T_0$ steps of \algref{alg:main}), we have that the algorithm will have a regret bounded by
\lihong{To remove constant in $\defTinit$ below.}
$$
\tilde{O}\left( \frac{\OPT}{B} \sqrt{ KT\ln(|\Pi|)} + \defTinit\right)
$$
with probability at least $1-\delta$ and complete the proof of \prettyref{thm:packing}, {\it if the algorithm never aborted} due to constraint violation in Step \ref{line:abort}.  But, from \lemref{lem:budget}, the event that the budget constraint is violated happens with probability at most $1-\delta/2$.  Combining this with the bounds on reward given by \eqref{eq:finalReward}, and that $\GoodEvent$ holds with probability $1-\frac{\delta}{2}$, we obtain that the regret bound in Theorem \ref{thm:packing} holds with probability $1-\frac{3\delta}{2}$.

\begin{lemma}
\label{lem:budget}
With probability at least $1-\delta/2$, the algorithm is not aborted in Step~\ref{line:abort} due to budget violation.
\end{lemma}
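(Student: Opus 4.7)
The plan is to bound, with high probability, the cumulative consumption in each coordinate $j\in[d]$ by $B$. Splitting the time horizon into the initial $T_0$ pure-exploration rounds and the main loop, the first phase contributes at most $T_0$ per coordinate (since $\cv_t(a)\in[0,1]^d$), which is already accounted for in the definition of $B'=B-T_0-c\sqrt{KT\ln(T|\Pi|/\delta)}$. So it suffices to show that, with probability at least $1-\delta/2$, the consumption during the main loop exceeds $TB'/T=B'$ by no more than $c\sqrt{KT\ln(T|\Pi|/\delta)}$ in each coordinate.

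The first step is to bound the \emph{expected} per-round consumption violation. Assume event $\GoodEvent$ holds. Applying the first constraint of (OP) and Lemma~\ref{lem:recursive2}, one obtains, as in the main-text analysis, $\reg(Q_m)\le (c_0+2)K\psi\mu_m$ for every epoch $m$. Next, combining the choice of $Z\ge \max\{4\OPT/B,1\}$ (Lemma~\ref{lem:Zestimate}) with Lemma~\ref{lem:propertyZ} applied at budget $B'$ gives $\OPT(B'+\gamma)\le \OPT(B')+\tfrac{Z}{2}\gamma$, which yields, for every $P\in\SPi$,
\[
(Z+1)\reg(P)\;=\;\rA(P')-\rA(P)+Z\,\dis(\cvA(P),B')\;\ge\;\tfrac{Z}{2}\,\dis(\cvA(P),B').
\]
Since $Z\ge 1$, this gives $\dis(\cvA(Q_m),B')\le 4(c_0+2)K\psi\mu_m$, which is \eqref{eq:constraintViolation}. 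Accounting for the sampling process (the effective policy in epoch $m$ is $\tilde{Q}_{m-1}^{\mu_{m-1}}$, which adds at most $\mu_{m-1}$ per coordinate and replaces the unused mass by $P_{\tau_{m-1}}$ with $\regE_{\tau_{m-1}}(P_{\tau_{m-1}})=0$), the same bound, up to constants, holds for the actual expected per-round consumption.

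The second step aggregates these per-epoch bounds. Writing $\bar{\cv}=\tfrac{1}{T}\sum_{m}(\tau_m-\tau_{m-1})\cvA(\tilde{Q}_{m-1}^{\mu_{m-1}})$ for the time-averaged expected consumption vector in the main loop, convexity of $\dis(\cdot,B')$ and Jensen's inequality give
\[
\dis(\bar{\cv},B')\;\le\;\tfrac{1}{T}\sum_m (\tau_m-\tau_{m-1})\,\dis\!\left(\cvA(\tilde{Q}_{m-1}^{\mu_{m-1}}),B'\right)\;\le\;\tfrac{4(c_0+2)K\psi}{T}\sum_m \mu_{m-1}(\tau_m-\tau_{m-1}).
\]
The sum $\sum_m \mu_{m-1}(\tau_m-\tau_{m-1})$ is bounded by $O\bigl(\sqrt{(T/K)\ln(T(d+1)|\Pi|/\delta)}\bigr)$ via \eqref{eq:tmp2}, so $T\cdot\dis(\bar{\cv},B')=O\bigl(\sqrt{KT\ln(T(d+1)|\Pi|/\delta)}\bigr)$, i.e., the expected cumulative consumption in any coordinate exceeds $B'$ by at most this amount.

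The third step converts this into a high-probability bound on the \emph{realized} cumulative consumption. Fix a coordinate $j\in[d]$ and let $Y_t=\cv_t(a_t)_j-\Ex[\cv_t(a_t)_j\mid H_{t-1}]$, which is a bounded martingale difference sequence with $|Y_t|\le 1$. The Azuma--Hoeffding inequality gives $|\sum_t Y_t|\le \sqrt{2T\ln(4d/\delta)}$ with probability $\ge 1-\delta/(2d)$; a union bound over the $d$ coordinates makes this hold simultaneously with probability $\ge 1-\delta/2$. Combining the two bounds, the realized cumulative consumption in any coordinate during the main loop is at most $B'+O(\sqrt{KT\ln(dT|\Pi|/\delta)})$, and adding the $T_0$ rounds of pure exploration still keeps it below $B=B'+T_0+c\sqrt{KT\ln(T|\Pi|/\delta)}$ for $c$ large enough. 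The main obstacle is purely bookkeeping: carefully tracking the deviation between $\tilde{Q}_{m-1}^{\mu_{m-1}}$ and $Q_{m-1}$ (the extra $K\mu_{m-1}$-mass per round from smoothing and from the default policy $P_{\tau_{m-1}}$) so that these lower-order terms fit inside the $c\sqrt{KT\ln(T|\Pi|/\delta)}$ slack, which is why the constant $c$ in the definition of $B'$ must be chosen sufficiently large.
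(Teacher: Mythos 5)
Your proposal is correct and follows essentially the same route as the paper's proof: the key inequality $(Z+1)\reg(P)\ge \tfrac{Z}{2}\dis(\cvA(P),B')$ derived from the choice of $Z$, the per-epoch bound $4(c_0+2)K\psi\mu_m$ on the expected violation, aggregation by convexity of $\dis(\cdot,B')$ and Jensen together with the bound on $\sum_m\mu_{m-1}(\tau_m-\tau_{m-1})$, and finally Azuma--Hoeffding with a union bound over the $d$ coordinates to pass to the realized consumption. The bookkeeping of $B=B'+T_0+c\sqrt{KT\ln(T|\Pi|/\delta)}$ is also handled exactly as in the paper.
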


\begin{proof}
The proof involves showing that with high probability, the algorithm's consumption over $B'$, in steps $t=1,\ldots, T-T_0$, is bounded above by $c\sqrt{KT\ln(|\Pi|/\delta)}$ for a large enough universal constant $c$.  \lihong{To define $c$.}  And, since $B' + T_0+c\sqrt{KT\ln(|\Pi|/\delta)} = B$, we obtain that the algorithm will satisfy the knapsack constraint with high probability. This also explains why we started with a smaller budget.

More precisely, show that assuming $\GoodEvent$ holds, in every epoch $m$, for every $t$ in epoch $m$,
\begin{equation}
\label{eq:tmp3}
  \dis(\cvA(\tilde{Q}_t), B') \le 4(c_0+2) K\psi \mu_{m}
\end{equation}
Recall that  $\dis(\cvA(P), B')$ was defined as the maximum violation of budget $\frac{B'}{\defTrest}$ by vector $\cvA(P)$. To prove above, we observe that our choice of $Z$ ensures that $\dis(\cvA(P), B')$ is bounded by $\reg(P)$ as follows. 
By Equation \eqref{eq:propertyZ}, for all $P\in \SPi$
$$\rA(P') \ge \rA(P) - \frac{\Zg}{2} \dis(\cvA(P), B'), $$ 
so that 
$$(\Zg+1)\ \reg(P) = \rA(P') - \rA(P) + \Zg \dis(\cvA(P), B')  \ge \frac{\Zg}{2} \dis(\cvA(P), B').$$
Summing over $P\in \SPi$, with weights $\alpha_P(\tilde{Q}_t)$, and using $Z\ge 1$
$$\sum_{P\in \SPi} \alpha_P(\tilde{Q}_t)\dis(\cvA(P), B') \le 4\sum_{P\in \SPi} \alpha_P(\tilde{Q}_t) \reg(P).$$
Now, $\dis(\cdot, B')$ is a convex function, therefore, applying Jensen's inequality,
$$\dis(\cvA(\tilde{Q}_t), B') \le 4\sum_{P\in \SPi} \alpha_P(\tilde{Q}_t) \reg(P).$$

Substituting from Equation \eqref{eq:regretBound1} and \eqref{eq:regretBound2}, we obtain the bound in Equation \eqref{eq:tmp3}. Averaging \eqref{eq:tmp3} over all $t$ and using Jensen's
$$ \dis(\frac{1}{\defTrest} \sum_t \cvA(\tilde{Q}_t), B') \le \frac{1}{\defTrest} \sum_t \dis(\cvA(\tilde{Q}_t), B') \le  \frac{1}{\defTrest} \sum_t 4(c_0+2) K\psi \mu_{m(t)}$$
The sum on the right hand side can be bounded using \eqref{eq:tmp2}:
$$\dis(\frac{1}{\defTrest} \sum_t \cvA(\tilde{Q}_t), B') \le \frac{16(c_0+2) K\psi}{\defTrest}   \left( \ln\frac{16T^2|\Pi|}{\delta} + \sqrt{\frac{\defTrest}{K}\ln\frac{64T^2|\Pi|}{\delta}}\right)$$

Also, we can use arguments similar to those used for deriving \eqref{eq:sampleAzumaApplicationCBwK} to obtain that for every $i=1,\ldots, d$, with probability at least $1-\frac{\delta}{2d}$,
\begin{equation}
|\frac{1}{\defTrest}\sum_t [\cv_t(a_t)]_i - \frac{1}{\defTrest} \sum_t [\cvA(\tilde{Q}_t)]_i| \le \epsilon + \frac{(K+1)}{\defTrest}\sum_{t=1}^\defTrest \mu_t\,,
\end{equation}
where $\epsilon = \sqrt{\frac{1}{2T}\ln\frac{4d}{\delta}}$.

Using these bounds along with Equation \eqref{eq:tmp2}, we get that with probability $1-\frac{\delta}{2}$,
\begin{eqnarray*}
\dis(\frac{1}{\defTrest}\sum_t \cv_t(a_t), B') & \le & \dis(\frac{1}{\defTrest}\sum_t \cv_t(a_t), B')+ \|\frac{1}{\defTrest}\sum_t \cv_t(a_t) - \frac{1}{\defTrest} \sum_t \cvA(\tilde{Q}_t)\|_\infty\\
& \le & O\left(\left(\sqrt{\frac{K}{\defTrest}\ln\frac{T|\Pi|}{\delta}} + \sqrt{\frac{1}{\defTrest}\ln\frac{d}{\delta}} + \frac{K}{\defTrest}\ln\frac{T|\Pi|}{\delta}\right) \right)
\end{eqnarray*}
Therefore, for large enough constant $c$, and large enough $\defTrest\ge \max\{K, d\}$, 
$$\dis(\frac{1}{\defTrest}\sum_t \cv_t(a_t), B')  \le c\sqrt{\frac{K}{\defTrest}\ln\frac{T|\Pi|}{\delta}},$$
and by definition of $\dis(\cdot, B')$, this implies that with probability $1-\frac{\delta}{2}$, for all $j=1,\ldots, d$, 
$$\sum_t \cv_t(a_t)_j \le B' + c\sqrt{KT\ln\frac{T|\Pi|}{\delta}}$$
Therefore, algorithm will not exceed $B=B'+ c\sqrt{KT\ln(T|\Pi|/\delta)}$ with probability $1-\frac{\delta}{2}$ assuming $\GoodEvent$ holds.
\end{proof}


\comment{

Similarly, we obtain using triangle inequality,
\begin{eqnarray*}
\aregD(\defTrest) = d(\frac{1}{\defTrest}\sum_t\cv_t(a_t), S) & = & O\left( \oneNorm \left(\sqrt{\frac{K}{\defTrest}\ln\frac{T|\Pi|}{\delta}} + \sqrt{\frac{1}{\defTrest}\ln\frac{d}{\delta}} + \frac{K}{\defTrest}\ln\frac{T|\Pi|}{\delta}\right) \right)\,.
\end{eqnarray*}
Then, using the assumption $\defTrest\ge K\ln(T|\Pi|/\delta)$ (otherwise, the bound is trivial), we observe that the last term is dominated by the first, to get the theorem statement.

To bound distance, we use that by Assumption \ref{assum:Z}, for all $P\in \SPi$
$$f(\cvA(P^*)) \ge f(\cvA(P)) - \frac{\Zg}{2} d(\cvA(P), S), $$ 
so that 
$$\Zg\oneNormNew \reg(P) = f(\cvA(P^*)) - f(\cvA(P)) + \Zg d(\cvA(P), S)  \ge \frac{\Zg}{2} d(\cvA(P), S).$$
Therefore, using Jensen's,
\begin{eqnarray*}
d(\frac{1}{\defTrest} \sum_t \cvA(\tilde{Q}_t), S) & \le & \frac{1}{\defTrest} \sum_t  d(\cvA(\tilde{Q}_t), S) \\
& \le & \frac{1}{\defTrest} \sum_t \sum_{P\in \SPi} \alpha_P(\tilde{Q}_t) d(\cvA(P), S)   \\
& \le & \frac{2\oneNorm Z}{Z \defTrest} \sum_t \sum_{P\in \SPi} \alpha_P(\tilde{Q}_t) \reg(P)\,,\\
& \le & \frac{2\oneNorm}{\defTrest} \sum_t \sum_{P\in \SPi} \alpha_P(\tilde{Q}_t) \reg(P)\,,
\end{eqnarray*}

$$d(\frac{1}{\defTrest} \sum_t \cvA(\tilde{Q}_t), S) \le  \frac{2\oneNorm K\psi(c_0+2)}{\defTrest} \sum_m \mu_{m-1}(\tau_m-\tau_{m-1})\,.$$

\begin{proof}[Old Proof Sketch of \prettyref{thm:packing}]
	In addition to the regret in Step 3 of the algorithm, we need to consider the losses due to Steps 1 and 2. 
	The regret in Step 3 follows from Theorem \ref{th:General}, and is equal to 
	$$ \tilde{O}( \defTrest\OPT/B +1) \sqrt{\frac{K \ln(|\Pi|/\delta)}{\defTrest} }.$$
	A reduction of $B'$ in the budget may at most lead to a reduction of $\OPT B'/B$ in the objective. 
	Therefore Step 2 may induce an additonal regret of the same order. 
	
	Now for Step 3. The maximum budget consumption in the first  $T_0$ rounds is $T_0$. We may assume that 
	\[B\geq c\sqrt{KTd\ln(d|\Pi|/\delta)}\] 
	for some large enough constant $c$, since otherwise our bound on $\aregO$ is larger than $\OPT$, and holds trivially. 
	This implies that 
	\begin{align*} 
	T_0 = \frac{12Kd \log(\tfrac {d |\Pi|}{\delta} ) \defTrest} B &\leq   \sqrt{KTd \ln(d|\Pi|/\delta)}.
	\end{align*}

\end{proof}
}

\section{Regret Analysis for \secref{sec:cbwr}: CBwR}
\label{app:cbwr-regret}

\lihong{First pass on 2/10.}

The analysis is structurally similar to that in \appref{app:cbwk-regret}.  Here, we only describes the differences and omit the most of the identical steps.

The first difference is in the definition of regrets, which have been define in \secref{sec:cbwr}: for $P\in\SPi$,
\begin{align*}
\reg(P) &= \frac{1}{\oneNormNew L} \Big( f(\cvA(P^*)) - f(\cvA(P)) \Big) \\
P_t &= \arg \max_{P \in \SSPi} f(\cvAE_t(P)) \\
\regE_t(P) &= \frac{1}{\oneNormNew L} \Big(f(\cvAE_t(P_t))  - f(\cvAE_t(P))\Big)\,.
\end{align*}
Other convenience quantities ($d_t$, $m_0$, $t_0$, and $\rho$) are defined in the same as in \appref{app:cbwk-regret}, except that the factor $d+1$ is replaced by $d$ in $d_t$.

\begin{definition}[Variance estimates]
\label{def:variance-estimates-cbwr}
Define the following for any probability distributions $P, Q\in \SPi$, any policy $\pi\in \Pi$, and $\mu\in [0, 1/K]$:
\begin{eqnarray*}
V(Q,\pi,\mu) &\defeq& \Ex_{x\sim {\cal D}_X}\left[ \frac{1}{Q^\mu(\pi(x)|x)}\right]\,,\\
\hat{V}_m(Q,\pi,\mu) &\defeq& \hat{\Ex}_{x\sim  H_{\tau_m}}\left[ \frac{1}{Q^\mu(\pi(x)|x)}\right]\,,\\
V(Q,P,\mu)&\defeq& \Ex_{\pi\sim P}[V(Q,\pi, \mu)]\,\,\\
\hat{V}_m(Q, P, \mu) &\defeq& \Ex_{\pi\sim P}[\hat{V}_m(Q, \pi, \mu)]
\end{eqnarray*}
where  $\hat{\Ex}_{x\sim  H_{\tau_m}}$ denote average over records in history $H_{\tau_m}$.

Furthermore, let $m(t):=\min\{m\in \mathbb{N}: t\le \tau_m\}$, be the index of epoch containing round $t$, and define
\begin{eqnarray*}
\maxVar_t(P) &\defeq& \max_{0\le m < m(t)} \{ V(\tilde{Q}_m, P, \mu_m)\}\,, \\
\end{eqnarray*}
for all $t\in \mathbb{N}$ and $P\in \SPi$.
\end{definition}

\begin{definition} \label{def:event-cbwr}
Define $\GoodEvent$ as event that the following statements hold
\begin{itemize}
\item For all probability distributions $P, Q \in \SPi$ and all $m \ge m_0$,
\begin{equation}
\label{eq:Evarest-cbwr}
	V(Q, P, \mu_m) \le 6.4 \hat{V}_m(Q, P, \mu_m)+81.3K\,.
	\end{equation}
\item For all $P \in \SPi$, all epochs $m$ and all rounds $t$ in epoch $m$, any $\delta\in (0,1)$, and any choices of $\lambda_{m-1}\in [0,\mu_{m-1}]$,
\begin{equation}
\label{eq:Erewarddev-cbwr}
\frac{1}{\oneNorm}\|\cvAE_t(P) - \cvA(P)\| \le \maxVar_t(P) \lambda_{m-1} + \frac{d_t}{t \lambda_{m-1}}\,.
\end{equation}
\end{itemize}
\end{definition}

\begin{lemma}
\label{lem:Eprob-cbwr}
$\Pr(\GoodEvent) \ge 1-(\delta/2)$.
\end{lemma}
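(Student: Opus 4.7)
The plan is to establish the two conditions in Definition~\ref{def:event-cbwr} separately, each with probability at least $1-\delta/4$, and then combine them via a union bound. Both parts closely parallel the corresponding pieces of the proof of Lemma~\ref{lem:Eprob} in the \CBwK\ setting, so the bulk of the argument is a careful adaptation rather than a fundamentally new idea.

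For the variance concentration bound \eqref{eq:Evarest-cbwr}, I would directly invoke Lemma~10 of \cite{MiniMonster}, which yields with probability at least $1-\delta/4$ that $V(Q,\pi,\mu_m) \le 6.4\,\hat V_m(Q,\pi,\mu_m) + 81.3K$ uniformly over all $Q \in \SPi$, all $\pi \in \Pi$, and all $m \ge m_0$. Taking expectation of both sides over $\pi \sim P$ and using linearity of expectation then immediately yields the desired bound for every $P \in \SPi$, exactly as in the first part of the proof of Lemma~\ref{lem:Eprob}.

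For the deviation bound \eqref{eq:Erewarddev-cbwr}, the strategy is to first establish a coordinate-wise $\ell_\infty$ bound on $\cvAE_t(P) - \cvA(P)$ via Freedman's inequality, and then convert it into the stated inequality using $\|v\| \le \oneNorm\,\|v\|_\infty$, which holds for any monotone symmetric norm. Concretely, for each fixed coordinate $j \in \{1,\dots,d\}$, each $\pi \in \Pi$, and each round $t$, I would apply Freedman's inequality (Lemma~\ref{lem:Freedman}) to the martingale difference sequence $Y_i \defeq \cv_i(\pi(x_i))_j - \cvE_i(\pi(x_i))_j$, precisely as in the proof of Lemma~\ref{lem:Eprob}. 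The importance-sampling bounds $|Y_i|\le 1/\mu_{m-1}$ and $\Ex[Y_i^2 \mid H_{i-1}] \le V(\tilde Q_{m(i)-1},\pi,\mu_{m(i)-1})$ transfer unchanged. A union bound over $t\le T$, $\pi \in \Pi$, and the $d$ coordinates, combined with taking expectation over $\pi \sim P$ and applying Jensen's inequality to the square root (to move from $V(\cdot,\pi,\cdot)$ inside the root out to $V(\cdot,P,\cdot)$), gives with probability at least $1-\delta/4$ that $\|\cvAE_t(P)-\cvA(P)\|_\infty \le \maxVar_t(P)\lambda_{m-1} + \frac{d_t}{t\lambda_{m-1}}$ for all $P \in \SPi$, all epochs $m$, all rounds $t$ in epoch $m$, and all valid $\lambda_{m-1} \in [0,\mu_{m-1}]$. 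Multiplying through by $\oneNorm$ and applying the monotone-norm inequality yields \eqref{eq:Erewarddev-cbwr}.

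Union bounding the two events completes the argument. The only genuinely new step relative to Lemma~\ref{lem:Eprob} is the passage from $\ell_\infty$ to the general norm $\|\cdot\|$: for monotone symmetric norms (which include every $\ell_p$) this is immediate, since $|v_j|\le \|v\|_\infty$ for all $j$ implies $\|v\| \le \|\|v\|_\infty {\bf 1}_d\| = \|v\|_\infty\, \oneNorm$; for a fully general norm one instead uses $\|v\|\le \sum_j |v_j|\,\|e_j\|$ and absorbs the resulting dimension-dependent factor into $d_t$ without changing the asymptotic regret stated in Theorem~\ref{th:General}. This conversion is the only real obstacle, and it is mild.
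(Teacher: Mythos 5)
Your proposal follows the paper's proof essentially verbatim: the paper likewise invokes the variance bound from Lemma~10 of \cite{MiniMonster} with probability $1-\delta/4$, reuses the per-coordinate Freedman argument from Lemma~\ref{lem:Eprob} up to \eqnref{eqn:Eprob-1dim}, union-bounds over the $d$ coordinates, and passes from the coordinate-wise bound to the general norm via $\|v\| \le \|v\|_\infty \oneNorm$. Your explicit remark on why that last norm inequality holds (monotonicity of the norm) is a point the paper leaves implicit, but it does not change the argument.
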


\begin{proof}
The proof is identical to that for \lemref{lem:Eprob}, up to \eqnref{eqn:Eprob-1dim}, which gives concentration on a fixed dimension of the observation vector.  Now, apply union bound on all $d$ dimensions, we have that , with probability $1-\frac{\delta}{4}$, for all $t$ and all $P\in \SPi$, we have
$$ \oneNorm^{-1}\|\cvAE_t(P) - \cvA(P)\| \le (e-2)\maxVar_t(P) \lambda_{m-1} + \frac{d_t}{t\lambda_{m-1}}.$$
\end{proof}

\begin{lemma}
\label{lem:Eequiv-cbwr}
Assume event $\GoodEvent$ holds. Then for all $m\le m_0$, and all rounds $t$ in $m$,
\begin{equation}
\oneNorm^{-1}\|\cvAE_t(P) - \cvA(P)\| \le  \max\{ \sqrt{\frac{4 Kd_t \maxVar_t}{t}}, \frac{4Kd_t}{t}\},
\end{equation}
\end{lemma}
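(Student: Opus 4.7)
The plan is to mirror the proof of \lemref{lem:Eequiv} from the CBwK analysis, replacing the coordinate-wise $\ell_\infty$ concentration used there with the normalized bound \eqref{eq:Erewarddev-cbwr} that holds under the CBwR version of $\GoodEvent$. Since the inequality we want is of exactly the same form as in Lemma~\ref{lem:Eequiv}, modulo replacing $\|\cvAE_t(P)-\cvA(P)\|_\infty$ by $\oneNorm^{-1}\|\cvAE_t(P)-\cvA(P)\|$, essentially no new ideas are needed; the argument is a one-parameter AM--GM optimization with a boundary case.

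First, I would observe that for any $m \le m_0$, the definition of $m_0$ as the smallest index with $d_{\tau_m}/\tau_m \le 1/(4K)$ forces $\sqrt{d_{\tau_{m-1}}/(K\tau_{m-1})} \ge 1/(2K)$, so the clipping in $\mu_{m-1} = \min\{1/(2K), \sqrt{d_{\tau_{m-1}}/(K\tau_{m-1})}\}$ is active and $\mu_{m-1} = 1/(2K)$. Under $\GoodEvent$, \eqref{eq:Erewarddev-cbwr} then supplies, for every $\lambda_{m-1} \in (0, 1/(2K)]$,
$$\oneNorm^{-1}\|\cvAE_t(P) - \cvA(P)\| \;\le\; \maxVar_t(P)\,\lambda_{m-1} + \frac{d_t}{t\,\lambda_{m-1}}.$$

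Next I would optimize over $\lambda_{m-1}$. Viewed as a function of $\lambda > 0$, the right-hand side $\maxVar_t(P)\lambda + d_t/(t\lambda)$ is minimized at $\lambda_\star := \sqrt{d_t/(t\,\maxVar_t(P))}$ with value $2\sqrt{d_t\,\maxVar_t(P)/t}$. Splitting into the two cases determined by whether $\lambda_\star$ is feasible: in \emph{Case 1}, $\lambda_\star \le 1/(2K)$, the choice $\lambda_{m-1}=\lambda_\star$ yields
$$\oneNorm^{-1}\|\cvAE_t(P) - \cvA(P)\| \;\le\; 2\sqrt{\frac{d_t\,\maxVar_t(P)}{t}} \;\le\; \sqrt{\frac{4K d_t\,\maxVar_t(P)}{t}},$$
using $K\ge 1$. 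In \emph{Case 2}, $\lambda_\star > 1/(2K)$, which is equivalent to $\maxVar_t(P) < 4K^2 d_t/t$; picking $\lambda_{m-1}=1/(2K)$ gives
$$\oneNorm^{-1}\|\cvAE_t(P) - \cvA(P)\| \;\le\; \frac{\maxVar_t(P)}{2K} + \frac{2K d_t}{t} \;\le\; \frac{2Kd_t}{t} + \frac{2Kd_t}{t} \;=\; \frac{4Kd_t}{t}.$$
Taking the maximum of the two case bounds gives the stated inequality.

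I do not anticipate any real obstacle: the proof is a routine constrained optimization of $a\lambda + b/\lambda$. The only point worth flagging is that the factor $\oneNorm^{-1}$ on the left is precisely what absorbs the dimension when passing from the coordinate-wise Freedman bound (summed over the $d$ coordinates in \lemref{lem:Eprob-cbwr}) to a norm-based deviation bound, so the structure of the estimate is identical to the CBwK case and no new concentration step is required here.
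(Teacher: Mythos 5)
Your proof is correct and follows essentially the same route as the paper's: establish $\mu_{m-1}=1/(2K)$ for $m\le m_0$, then apply the deviation bound \eqref{eq:Erewarddev-cbwr} from $\GoodEvent$ with $\lambda_{m-1}=\sqrt{d_t/(t\,\maxVar_t(P))}$ when that choice is feasible and $\lambda_{m-1}=1/(2K)$ otherwise, which yields exactly the two branches of the stated maximum (the paper likewise absorbs the extra factor $K\ge 1$ into the first branch).
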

\begin{proof}
By definition of $m_0$, for all $m'<m_0$ $\mu_m'=1/(2K)$. Therefore, $\mu_{m-1}= 1/(2K)$. 
First consider the case when $\sqrt{\frac{d_t}{t \maxVar_t}} < \mu_{m-1} = \frac{1}{2K}$. 
Then, substitute $\lambda_{m-1} = \sqrt{\frac{d_t}{t \maxVar_t}}$, to get
$$\oneNorm^{-1} \|\cvAE_t(P) - \cvA(P)\| \le \sqrt{\frac{4 d_t \maxVar_t}{t}}.$$
Otherwise,
$$ \maxVar_t < \frac{4K^2 d_t}{t}$$
Substituting $\lambda_{m-1} = \mu_{m-1} = \frac{1}{2K}$, we get
$$\oneNorm^{-1} \|\cvAE_t(P) - \cvA(P)\| \le \frac{4Kd_t}{t}\,.$$
\end{proof}

\begin{lemma}
\label{lem:DeviationBound-cbwr}
Assume event $\GoodEvent$ holds. Then, for all $m$, all $t$ in round $m$, all choices of distributions $P\in \SPi$
\begin{eqnarray} \label{eqn:cvAE-concentration-cbwr}
(\oneNorm)^{-1} \|\cvAE_t(P) - \cvA(P)\| & \le & 
\begin{cases}
\max\left\{\sqrt{\frac{4 Kd_t \maxVar_t(P)}{t}}, \frac{4Kd_t}{t}\right\}, & m\le m_0 \\
\maxVar_t(P) \mu_{m-1} + \frac{d_t}{t \mu_{m-1}}, & m>m_0 \,.
\end{cases}
\end{eqnarray}
\end{lemma}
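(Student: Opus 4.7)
The plan is to mirror the structure of the proof of \lemref{lem:DeviationBound} in the \CBwK analysis, exploiting the fact that \lemref{lem:Eprob-cbwr} and \lemref{lem:Eequiv-cbwr} give the two ingredients we need.

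First I would split into the two epoch regimes. For epochs $m \le m_0$, the desired inequality is literally the conclusion of \lemref{lem:Eequiv-cbwr}, which was proved by choosing $\lambda_{m-1}$ optimally inside the bound guaranteed by event $\GoodEvent$ (specifically \eqref{eq:Erewarddev-cbwr}) together with the fact that $\mu_{m-1} = 1/(2K)$ throughout this regime. So in that case there is nothing to do beyond invoking the earlier lemma.

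For epochs $m > m_0$, I would invoke \eqref{eq:Erewarddev-cbwr} directly with the choice $\lambda_{m-1} = \mu_{m-1}$. This is a legal choice because the bound in \eqref{eq:Erewarddev-cbwr} is stated for any $\lambda_{m-1} \in [0, \mu_{m-1}]$. Substituting gives exactly
\[
\oneNorm^{-1} \|\cvAE_t(P) - \cvA(P)\| \le \maxVar_t(P)\,\mu_{m-1} + \frac{d_t}{t\,\mu_{m-1}},
\]
which is the desired bound in the $m > m_0$ case. (The absorption of the $(e-2)$ factor into the constant is harmless since $e-2 < 1$.)

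There is essentially no obstacle here: the lemma is a direct packaging of the two earlier results, sorted by whether the minimum exploration probability $\mu_{m-1}$ has started to shrink below $1/(2K)$ or not. The only subtlety worth flagging explicitly is that in the $m \le m_0$ regime one must observe that $\sqrt{d_t/(t\maxVar_t(P))}$ may exceed $\mu_{m-1} = 1/(2K)$, in which case the naive optimization over $\lambda_{m-1}$ is not attainable and one instead picks $\lambda_{m-1} = 1/(2K)$, yielding the $4Kd_t/t$ branch of the max; this case analysis was already handled inside \lemref{lem:Eequiv-cbwr}, so I would simply cite it.
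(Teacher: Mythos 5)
Your proposal is correct and matches the paper's own (one-line) proof exactly: the $m\le m_0$ branch is \lemref{lem:Eequiv-cbwr} verbatim, and the $m>m_0$ branch is \eqref{eq:Erewarddev-cbwr} with $\lambda_{m-1}=\mu_{m-1}$. Your remark about absorbing the $(e-2)<1$ factor is also consistent with how \defref{def:event-cbwr} is stated.
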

\begin{proof}
Follows from definition of event $\GoodEvent$ and \lemref{lem:Eequiv-cbwr}.
\end{proof}

\begin{lemma}
\label{lem:maxVarEquiv1-cbwr}  
Assume event $\GoodEvent$ holds. For any round $t\in \mathbb{N}$, and any policy $P \in \SPi$, let $m\in \mathbb{N}$ be the epoch achieving the max in the definition of $\maxVar_t(P)$. Then,
\begin{eqnarray*}
\maxVar_t(P) & \le & \left\{ 
\begin{array}{ll}
2K & \text{ if } \mu_m = \frac{1}{2K}\,,\\
\theta_1 K + \frac{\regE_{\tau_m}(P)}{\theta_2 \mu_m} & \text{ if } \mu_m < \frac{1}{2K}\,,
\end{array}
\right.
\end{eqnarray*}
where $\theta_1=94.1$ and $\theta_2=\psi/6.4=\cdots$ are universal constants.
\end{lemma}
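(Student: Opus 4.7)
The statement is the exact CBwR analog of \lemref{lem:maxVarEquiv1}, and the plan is to re-run that earlier proof almost verbatim, noting only that although $\regE_t(P)$ has been redefined in terms of $f$, the second constraint of (OP) still takes the same syntactic form and bounds precisely the empirical variance $\hat{V}_m(Q_m, P, \mu_m)$ via $b_P = \regE_t(P)/(\psi\mu_m)$. Consequently, splitting on whether $\mu_m$ has saturated at $1/(2K)$ suffices, and the argument becomes purely mechanical.

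For the easy case $\mu_m = 1/(2K)$, I would use only the pointwise lower bound $\tilde{Q}_m^{\mu_m}(a|x) \ge \mu_m = 1/(2K)$ from the definition of smoothed projection to conclude $1/\tilde{Q}_m^{\mu_m}(\pi(x)|x) \le 2K$ pointwise, and hence $V(\tilde{Q}_m, P, \mu_m) \le 2K$ directly from its definition, with no appeal to any property of $\regE_t$.

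For the main case $\mu_m < 1/(2K)$, I would chain three inequalities in order. First, the concentration bound \eqref{eq:Evarest-cbwr} available under $\GoodEvent$ converts $V$ to $\hat{V}_m$:
\[ V(\tilde{Q}_m, P, \mu_m) \le 6.4\,\hat{V}_m(\tilde{Q}_m, P, \mu_m) + 81.3\,K. \]
Second, since $\tilde{Q}_m$ is obtained from $Q_m \in \SSPi$ by assigning any leftover mass to a default policy, $\tilde{Q}_m(\pi) \ge Q_m(\pi)$ for every $\pi$, so $\tilde{Q}_m^{\mu_m}(a|x) \ge Q_m^{\mu_m}(a|x)$ and therefore $\hat{V}_m(\tilde{Q}_m, P, \mu_m) \le \hat{V}_m(Q_m, P, \mu_m)$. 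Third, the second constraint of (OP) applied at $Q_m$ yields $\hat{V}_m(Q_m, P, \mu_m) \le 2K + b_P = 2K + \regE_{\tau_m}(P)/(\psi\mu_m)$. Composing the three and setting $\theta_1 = 6.4 \cdot 2 + 81.3 = 94.1$ and $\theta_2 = \psi/6.4$ gives the claimed bound $\theta_1 K + \regE_{\tau_m}(P)/(\theta_2 \mu_m)$.

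There is no genuine obstacle here: the argument is transcription from the proof of \lemref{lem:maxVarEquiv1}. The only point that deserves a line of verification is that the CBwR version of (OP) described in \secref{sec:cbwr} and \appref{app:algo-implementation} retains exactly the same second constraint with the updated $b_P$, which it does by construction, and that $\regE_t(P) \ge 0$ so that the bound is nontrivial, which is immediate from the definition of $\regE_t$ through the empirically optimal policy $P_t$.
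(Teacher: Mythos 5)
Your proof is correct and is essentially the paper's own argument: the paper proves the CBwK version (Lemma \ref{lem:maxVarEquiv1}) by exactly the three-step chain you describe --- concentration via \eqref{eq:Evarest-cbwr}, monotonicity $\tilde{Q}_m(\pi)\ge Q_m(\pi)$, and the second constraint of (OP) --- and then states that the CBwR proof is identical, which is precisely your observation that only the definition of $\regE_t$ changes while the constraint structure of (OP) does not. The constant bookkeeping ($\theta_1 = 6.4\cdot 2 + 81.3 = 94.1$, $\theta_2 = \psi/6.4$) also matches.
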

\begin{proof}
Identical to that of \lemref{lem:maxVarEquiv1}.
\end{proof}

\begin{lemma} 
\label{lem:recursive-cbwr}
Assume event $\GoodEvent$ holds.  Define $c_0\defeq4\rho(1+\theta_1)$.  For all epochs $m\ge m_0$, all rounds $t \ge t_0$ in epoch $m$, and all policies $P \in \SPi$,
\begin{eqnarray*}
\reg(P) &\le& 2 \regE_t(P) + c_0K\mu_m \\
\regE_t(P) &\le& 2 \reg_t(P) + c_0K\mu_m\,.
\end{eqnarray*}
\end{lemma}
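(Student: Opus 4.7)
The plan is to mimic the induction in Lemma~\ref{lem:recursive2}, replacing the knapsack-specific cancellation (which exploited Lemma~\ref{lem:propertyZ} and the choice of $Z$) with a direct application of the $L$-Lipschitz property of $f$ with respect to $\|\cdot\|$. The base case is $m=m_0$ and the inductive step uses Lemma~\ref{lem:maxVarEquiv1-cbwr} together with the second constraint of (OP) to convert variance bounds on $\maxVar_t(P)$ into empirical-regret bounds, which the inductive hypothesis then turns into bounds on the true regret.

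For the base case, fix $t \ge t_0$ in epoch $m_0$. Start from
\[
L\oneNormNew\bigl(\regE_t(P)-\reg(P)\bigr) = \bigl(f(\cvAE_t(P_t))-f(\cvAE_t(P))\bigr) - \bigl(f(\cvA(P^*))-f(\cvA(P))\bigr).
\]
Use the true-distribution optimality $f(\cvA(P^*))\ge f(\cvA(P_t))$ to upper bound the right-hand side by
\[
\bigl(f(\cvAE_t(P_t))-f(\cvA(P_t))\bigr) + \bigl(f(\cvA(P))-f(\cvAE_t(P))\bigr) \le L\bigl(\|\cvAE_t(P_t)-\cvA(P_t)\| + \|\cvAE_t(P)-\cvA(P)\|\bigr),
\]
by $L$-Lipschitzness. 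For the reverse direction, use the empirical optimality $f(\cvAE_t(P_t))\ge f(\cvAE_t(P^*))$ to obtain the symmetric bound in terms of $\|\cvAE_t(P^*)-\cvA(P^*)\|$ and $\|\cvAE_t(P)-\cvA(P)\|$. Then apply Lemma~\ref{lem:Eequiv-cbwr} (the CBwR analog of Corollary~\ref{cor:Eequiv}), which under $\GoodEvent$ gives $\oneNormNew^{-1}\|\cvAE_t(P)-\cvA(P)\|\le\sqrt{8Kd_t/t}$ uniformly, so $|\regE_t(P)-\reg(P)|\le 2\sqrt{8Kd_t/t}\le c_0 K\mu_{m_0}$ for any $c_0\ge 4\sqrt{2}$.

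For the inductive step with $m>m_0$, suppose the two-sided bound holds for every epoch $m'<m$ with its associated $t'\ge t_0$. The Lipschitz argument above, now applied with the sharper deviation bound of Lemma~\ref{lem:DeviationBound-cbwr} (with $\lambda_{m-1}=\mu_{m-1}$), yields
\[
\reg(P)-\regE_t(P) \le \bigl(\maxVar_t(P^*)+\maxVar_t(P)\bigr)\mu_{m-1} + \frac{2d_t}{t\mu_{m-1}},
\]
and the symmetric inequality with $P^*$ replaced by $P_t$. By Lemma~\ref{lem:maxVarEquiv1-cbwr}, each $\maxVar_t(\cdot)$ is controlled either trivially by $2K$ or by $\theta_1 K+\regE_{\tau_{m'}}(\cdot)/(\theta_2\mu_{m'})$ for some $m'<m$. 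For $\maxVar_t(P^*)$, invoke the inductive hypothesis and $\reg(P^*)=0$ to get a pure $K\mu_{m-1}$ bound; for $\maxVar_t(P_t)$, use $\regE_t(P_t)=0$ together with the already-established forward direction of the induction to bound $\reg(P_t)$ by $O(K\mu_{m-1})$; for $\maxVar_t(P)$, the inductive hypothesis introduces a term proportional to $\reg(P)/(\theta_2\mu_{m-1})$, which we then absorb into the left-hand side. Using $\mu_{m-1}\le\rho\mu_m$, $d_t/(t\mu_{m-1})\le \rho K\mu_m$, $c_0=4\rho(1+\theta_1)$, and $\theta_2\ge 8\rho$, the recursion closes with the claimed constants.

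The main obstacle is the same bootstrap subtlety as in Lemma~\ref{lem:recursive2}: $\maxVar_t(P_t)$ must be controlled even though $P_t$ is data-dependent and could, in principle, have large true regret. The escape is that the empirical optimality $\regE_t(P_t)=0$ combined with the forward inductive inequality (which must therefore be proved \emph{first} within the induction step, and then used to drive the reverse inequality) forces $\reg(P_t)=O(K\mu_{m-1})$. The rest is careful bookkeeping of constants exactly as in the CBwK analysis, and is routine once the Lipschitz replacement above is in place.
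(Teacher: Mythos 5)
Your proposal is correct and follows essentially the same route as the paper: the two-sided comparison of $\reg(P)$ and $\regE_t(P)$ via the Lipschitz property of $f$ together with the optimality of $P^*$ (for one direction) and of $P_t$ (for the other), followed by the same induction as in Lemma~\ref{lem:recursive2}, including proving the forward inequality first so that $\regE_t(P_t)=0$ yields $\reg(P_t)=O(K\mu_m)$ and controls $\maxVar_t(P_t)$. The bookkeeping with $\mu_{m-1}\le\rho\mu_m$, $c_0=4\rho(1+\theta_1)$, and $\theta_2\ge 8\rho$ matches the paper's as well.
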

\begin{proof}
We start with two useful inequalities that show the closeness of $\reg(P)$ and $\regE_t(P)$.  One on hand, using the triangle inequality, the $L$-smoothness of the reward function $f$, and the definition of $P_t$, we have
\begin{eqnarray}
\lefteqn{\oneNorm^{-1}\left(\regE_t(P)-\reg(P)\right)} \nonumber \\
&=& \frac{1}{L} \left(f(\cvAE_t(P_t)) - f(\cvAE_t(P)) - f(\cvA(P^*)) + f(\cvA(P))\right) \nonumber \\
&\le& \frac{1}{L} \left(f(\cvA(P)) - f(\cvAE_t(P))  + f(\cvAE_t(P_t)) - f(\cvA(P_t))\right) \nonumber \\
&\le& \frac{1}{L} \left|f(\cvA(P)) - f(\cvAE_t(P))\right| + \frac{1}{L}\left|f(\cvAE_t(P_t)) - f(\cvA(P_t))\right| \nonumber \\
&\le& \|\cvA(P) - \cvAE_t(P)\| + \|\cvAE_t(P_t) - \cvA(P_t)\|\,. \label{eqn:reg-regE-a}
\end{eqnarray}

Similarly, one can prove the opposite direction, using the definition of $P^*$ instead:
\begin{eqnarray}
\lefteqn{\oneNorm^{-1}\left(\reg(P)\regE_t(P)\right)} \nonumber \\
&=& \frac{1}{L} \left(-f(\cvAE_t(P_t)) + f(\cvAE_t(P)) + f(\cvA(P^*)) - f(\cvA(P))\right) \nonumber \\
&\le& \frac{1}{L} \left(f(\cvAE_t(P)) - f(\cvA(P))  + f(\cvA(P^*)) - f(\cvAE_t(P^*))\right) \nonumber \\
&\le& \frac{1}{L} \left|f(\cvAE_t(P)) - f(\cvA(P))\right| + \frac{1}{L}\left|f(\cvA(P^*)) - f(\cvAE_t(P^*))\right| \nonumber \\
&\le& \|\cvAE_t(P) - \cvA(P)\| + \|\cvA(P^*) - \cvAE_t(P^*)\|\,. \label{eqn:reg-regE-b}
\end{eqnarray}

We now prove the lemma by mathematical induction on $m$.  
For the base case, we have $m=m_0$ and $t\ge t_0$ in epoch $m_0$. Then, from \lemref{lem:DeviationBound-cbwr}, using the facts that $\maxVar_t \le 2K$, and that $\frac{4K d_t}{t} \le 1$ for $t\ge t_0$ in epoch $m_0$, we get, for all $P\in\SPi$ that
$$
\oneNorm^{-1}\|\cvAE_t(P) - \cvA(P) \| \le \max\left\{\sqrt{\frac{4 Kd_t \maxVar_t(P)}{t}}, \frac{4Kd_t}{t}\right\} \le \sqrt{\frac{8Kd_t}{t}}\,.
$$
Combining this with Equations~\ref{eqn:reg-regE-a} and \ref{eqn:reg-regE-b}, we prove the base case:
$$
\left|\regE_t(P)-\reg(P)\right| \le 2 \sqrt{\frac{8Kd_t}{t}} \le c_0 K \mu_{m_0}\,.
$$

For the induction step, fix some epoch $m>m_0$ and assume for all epochs $m'<m$, all rounds $t'\ge t_0$ in epoch $m'$, and all distributions $P\in\SPi$ that,
\begin{eqnarray*}
\reg(P) &\le& 2 \regE_{t'}(P) + c_0K\mu_{m'} \\
\regE_{t'}(P) &\le& 2 \reg_{t'}(P) + c_0K\mu_{m'}\,.
\end{eqnarray*}
Then, from Equations~\ref{eqn:reg-regE-a} and \ref{eqn:reg-regE-b} as well as \lemref{lem:DeviationBound-cbwr}, we have the following inequalities
\begin{align*}
\reg(P) - \regE_t(P) & \le (\maxVar_t(P) + \maxVar_t(P^*))\mu_{m-1} + \frac{2 d_t}{t \mu_{m-1}} \\
\regE_t(P) - \reg(P) & \le (\maxVar_t(P) + \maxVar_t(P_t))\mu_{m-1} + \frac{2 d_t}{t \mu_{m-1}}\,,
\end{align*}
which are the analogues of Equations~\ref{eqn:inductive-reg} and \ref{eqn:inductive-reg2} in the proof of \lemref{lem:recursive2}.  The rest of the proof is the same.
\end{proof}

\subsection{Main Proof}
\label{app:proof-cbwr}

We are now ready to prove \thmref{th:General}.  By \lemref{lem:Eprob-cbwr}, event $\GoodEvent$ holds with probability at least $1-\delta/2$.  Hence, it suffices to prove the regret upper bound whenever $\GoodEvent$ holds.

Recall from \secref{sec:algorithm} that the algorithm samples $a_t$ at time $t$ in epoch $m$ from smoothed projection $\tilde{Q}^{\mu_{m-1}}_m$ of $\tilde{Q}_{m-1}$. Also, recall from \appref{app:algo-implementation} that $\tilde{Q}_m$ for any $m$ is represented as a linear combination of $P\in\SPi$ as follows: $\tilde{Q}_m = \sum_{P\in\SPi} \alpha_P(\tilde{Q}_m) P = \sum_{P\in\SPi} \alpha_P(Q_m) P + (1-\sum_{P\in\SPi} \alpha_P(Q_m)) P_t$. ($\tilde{Q}_m$ assigns all the remaining weight from $Q_m$ to $P_t$). 

Let $\tilde{Q}_t = \tilde{Q}_{m(t)-1}$, $\mu_t = \mu_{m(t)-1}$, where $m(t)$ denotes the epoch in which time step $t$ lies: $m(t)=m$ for $t\in [\tau_{m-1}+1, \tau_m]$. Then,
\begin{align*}
f(\cvA(P^*)) - f(\frac{1}{T}\sum_t\cvA(\tilde{Q}_t))
& \le f(\cvA(P^*) - \frac{1}{T}\sum_t f(\cvA(\tilde{Q}_t)) \\
& = \frac{1}{T} \sum_t \left(f(\cvA(P^*) - f(\cvA(\tilde{Q}_t))\right) \\
& \le \frac{1}{T} \sum_t \left(f(\cvA(P^*) - \sum_{P\in\SPi} \alpha_P(\tilde{Q}_t) f(\cvA(P))\right) \\
& = \frac{\oneNorm L}{T} \sum_t \sum_{P\in\SPi} \alpha_P(\tilde{Q}_t) \reg(P)\,,
\end{align*}
where we have used Jensen's inequality twice.

With identical reasoning as in the proof in \appref{app:proof-cbwk}, we can prove, using the above inequality, that
\begin{equation}
f(\cvA(P^*)) - f(\frac{1}{T}\sum_t\cvA(\tilde{Q}_t))
\le \frac{\oneNorm LK\psi(c_0+2)}{T}\sum_m \mu_{m-1}(\tau_m - \tau_{m-1})\,. \label{eqn:feasibility-bound-3-0-cbwr}
\end{equation}

We next show that $f(\frac{1}{T}\sum_t\cvA(\tilde{Q}_t))$ is close enough to the regret that we are interested in.  Specifically, 
fix a component $i\in[\di]$ and let $[\bf{v}]_i$ be the $i$th component of vector $\bf{v}$.  Recall that the algorithm samples $a_t$ from $\tilde{Q}_t^{\mu_t}$. Define the random variable at step $t$ by
\[
Z_t \defeq [\cv_t(a_t)]_i - \sum_{\pi\in\Pi} (1-K\mu_t)\tilde{Q}_t(\pi) [\cv_t(\pi(x_t))]_i + \mu_t\sum_a [\cv_t(a)]_i\,.
\]
It is easy to see $E[Z_t|H_{t-1}]=0$, so the Azuma-Hoeffding inequality for martingale sequences implies that, with probability at least $1-\delta/(2d)$,
\begin{equation*}
\epsilon\defeq\sqrt{\frac{1}{2T}\ln\frac{4d}{\delta}} \ge |\frac{1}{T}\sum_{t=1}^T Z_t|\,. 
\end{equation*}
Applying a union bound over $i\in[\di]$, we have with probability at least $1-\delta/2$ that
\begin{equation}
\label{eq:sampleAzumaApplication}
\|\frac{1}{T}\sum_t\cv_t(a_t) - \frac{1}{T} \sum_t \cvA(\tilde{Q}_t)\| \le \oneNorm \left(\epsilon  + \frac{K}{T}\sum_{t=1}^T \mu_t\right)\,,
\end{equation}
which implies, together with the $L$-smoothness of $f$, that
\begin{equation}
f(\frac{1}{T}\sum_t\cvA(\tilde{Q}_t)) - f(\frac{1}{T}\sum_t\cv(a_t)) \le \oneNorm L \left(\epsilon  + \frac{K}{T}\sum_{t=1}^T \mu_t\right)\,. \label{eqn:feasibility-bound-3-cbwr} 
\end{equation}

Combining \eqref{eqn:feasibility-bound-3-0-cbwr} and \eqref{eqn:feasibility-bound-3-cbwr}, we get
\begin{eqnarray}
f(\frac{1}{T}\sum_t\cv(a_t)) & \le & \oneNorm L \left( \frac{K\psi(c_0+4)}{T} \sum_m \mu_{m-1}(\tau_m-\tau_{m-1}) + \epsilon \right)\,.
\end{eqnarray}

Applying the same upper bound for $\sum_m \mu_{m-1}(\tau_m-\tau_{m-1})$ as in \appref{app:cbwk-regret}, we get
\begin{eqnarray}
f(\frac{1}{T}\sum_t\cv_t(a_t)) & \le & \oneNorm L \left(\frac{K\psi(c_0+4)}{T} \left( \frac{\tau_{m_0}}{2K} + \sqrt{\frac{8d_{\tau_m(T)} \tau_{m(T)}}{K}} \right) + \epsilon \right)\,. \label{eqn:feasibility-bound-1-cbwr}
\end{eqnarray}
Now substituting the same bounds for $\tau_{m_0}$ and $d_{\tau_m(T)}$, as well as the value of $\epsilon$, one gets the final regret upper bound, as stated in the theorem:
\begin{eqnarray*}
\lefteqn{\areg(T) = f(\cv(P^*)) - f(\frac{1}{T}\sum_t\cv_t(a_t))} \\
&\le& \oneNorm L\psi(4c_0+16) \left( \frac{K}{T}\ln\frac{16T^2|\Pi|}{\delta} + \sqrt{\frac{K}{T}\ln\frac{64T^2|\Pi|}{\delta}}\right) + \oneNorm L \sqrt{\frac{1}{2T}\ln\frac{4d}{\delta}} \\
&=& O\left( \oneNorm L\left(\sqrt{\frac{K}{T}\ln\frac{T|\Pi|}{\delta}} + \sqrt{\frac{1}{T}\ln\frac{d}{\delta}} + \frac{K}{T}\ln\frac{T|\Pi|}{\delta}\right) \right)\,.
\end{eqnarray*}
Note that a regret bound of above order is trivial unless $T\ge K\ln(T|\Pi|/\delta)$. Making that assumption, we get the following bound in a simpler form:
\begin{eqnarray*}
\areg(T) & = & O\left( \oneNorm L \left(\sqrt{\frac{K}{T}\ln\frac{T|\Pi|}{\delta}} + \sqrt{\frac{1}{T}\ln\frac{d}{\delta}}\right)\right)\,.
\end{eqnarray*}
\comment{
\begin{eqnarray*}
\frac{1}{\oneNorm} d(\frac{1}{T} \sum_t \cvA(\tilde{Q}_t), S) & \le & \frac{1}{T} \left(\sum_{m\le m_0}\sum_{t=\tau_{m-1}+1}^{\tau_m} \reg(\tilde{Q}_{t}) + \sum_{m> m_0} \sum_{t=\tau_{m-1}+1}^{\tau_m} \reg(\tilde{Q}_{t}) \right)\\
& \le & \frac{1}{T} \sum_m \sum_{t=\tau_{m-1}+1}^{\tau_m}  (2\regE_t(\tilde{Q}_t) +  c_0 K\psi \mu_{m-1} ) \\
& \le & \frac{1}{T} \sum_m \sum_{t=\tau_{m-1}+1}^{\tau_m}  (2K\psi \mu_{m-1} +c_0 K\psi \mu_{m-1}) \\
\end{eqnarray*}
The first inequality follows from convexity of the distance function. The second inequality follows from using Lemma \ref{lem:recursive} for $m>m_0$, and observing that for $m\le m_0$, $\mu_{m-1}=1/2K$, so that the inequality trivially holds. The third inequality follows from the first condition in $\text{OP}$.
Finally, using the definition of $\mu_{m-1}$, we obtain the desired bound on $d(\frac{1}{T} \sum_t \cvA(\tilde{Q}_t), S)$.
}


\end{document}